\newcommand{\cB}{\mathcal{B}}
\newcommand{\cD}{\mathcal{D}}
\newcommand{\cM}{\mathcal{M}}
\newcommand{\cO}{\mathcal{O}}
\newcommand{\cW}{\mathcal{W}}
\newcommand{\cX}{\mathcal{X}}
\newcommand{\EE}{\mathbb{E}}
\newcommand{\II}{\mathbb{I}}
\newcommand{\NN}{\mathbb{N}}
\newcommand{\argmin}{\mathop{\mathrm{argmin}}}
\newcommand{\argmax}{\mathop{\mathrm{argmax}}}
\newcommand{\ind}{\mathds{1}}  % Indicator
\newcommand{\rbr}[1]{\left(#1\right)}
\newcommand{\sbr}[1]{\left[#1\right]}
\newcommand{\abr}[1]{\left|#1\right|}
\newcommand{\BlackBox}{\rule{1.5ex}{1.5ex}}  % end of proof
\newenvironment{proof}{\par\noindent{\bf Proof\ }}{\hfill\BlackBox\\[2mm]}
\newtheorem{theorem}{Theorem}
\newtheorem{lemma}[theorem]{Lemma}
\newtheorem{corollary}[theorem]{Corollary}
\newtheorem{definition}[theorem]{Definition}
\newtheorem{assumption}[theorem]{Assumption}
\newcommand{\statespace}[0]{\mathcal{X}} %state space
\newcommand{\statesize}[0]{X} %state space size
\newcommand{\state}[0]{x} %state
\newcommand{\absorbstate}[0]{x^\dag} %absorb state
\newcommand{\actionspace}[0]{\mathcal{A}} %action space
\newcommand{\actionsize}[0]{A} %action space size
\newcommand{\action}[0]{a} %action
\newcommand{\transeasy}[0]{P} %transition
\newcommand{\transprieasy}[0]{\widetilde{P}} %transition estimated by private counts
\newcommand{\episode}[0]{k} %episode
\newcommand{\episodetotal}[0]{K} %episode total number
\newcommand{\horizon}[0]{h} %horizon
\newcommand{\horizontotal}[0]{H} %horizon total number
\newcommand{\rewdis}[0]{\mathcal{R}} %reward distribution
\newcommand{\reward}[0]{r} %reward
\newcommand{\rewprieasy}[0]{\widetilde{r}} %private rew
\newcommand{\rewcumeasy}[0]{R} %state-action cumulative rew
\newcommand{\rewcumprieasy}[0]{\widetilde{R}} %state-action private cumulative rew  
\newcommand{\policy}[0]{\pi} %policy
\newcommand{\valuef}[0]{V} %value function of one episode
\newcommand{\qvaluef}[0]{Q} %value function of one episode
\newcommand{\regret}[0]{\text{Regret}(K)} %regret
\newcommand{\visitxatotaleasy}[0]{N} %state-action pair visit total counts
\newcommand{\visitxaxtotaleasy}[0]{N} %state-action-state pair visit total counts
\newcommand{\visitxatotalbineasy}[0]{\ddot{N}} %state-action pair visit binary mechanism total counts
\newcommand{\visitxaxtotalbineasy}[0]{\ddot{N}}
\newcommand{\visitxatotalopteasy}[0]{\bar{N}} %state-action pair visit optimization total counts
\newcommand{\visitxaxtotalopteasy}[0]{\bar{N}}
\newcommand{\visitxatotalprieasy}[0]{\widetilde{N}} %state-action pair visit private total counts
\newcommandx{\visitxaxtotalpri}[0]{\widetilde{N}_h^b(x,a,x^{\prime})}
\newcommandx{\visitxaxtotalprieasy}[0]{\widetilde{N}}
\newcommand{\visitxatotalcrupri}[0]{\widetilde{N}_h^{\text{cru},b}(x,a)} %state-action pair visit private total counts
\newcommandx{\visitxaxtotalcrupri}[0]{\widetilde{N}_h^{\text{cru},b}(x,a,x^{\prime})}
\newcommandx{\visitxaxtotalcruprieasy}[0]{\widetilde{N}}
\newcommand{\visitxatotalrefpri}[0]{\widetilde{N}_h^{\text{ref},b}(x,a)} %state-action pair visit private total counts
\newcommandx{\visitxaxtotalrefprieasy}[0]{\widetilde{N}}
\newcommand{\visitxatotalref}[0]{N_h^{\text{ref},b}(x,a)} %state-action pair visit private total counts
\newcommand{\policyset}[0]{\phi} %active policy set
\newcommand{\elimpolicyset}[0]{\psi} %eliminated policy set
\newcommand{\accuracy}[0]{\xi} %value accuracy error
\newcommand{\badtrajs}[0]{\cB} %bad trajectory event 
\newcommand{\infreqtuples}[0]{\cW} %infrequent tuple set
\newcommand{\abtrans}[0]{P'} %absorbing MDP
\newcommand{\crutrans}[0]{\widetilde{P}^{\text{cru},\batchindx}} %private crude estimated absorbing MDP
\newcommand{\reftrans}[0]{\widetilde{P}^{\text{ref},\batchindx}} %private refined estimated absorbing MDP
\newcommand{\pripara}[0]{\varepsilon} %DP parameter
\newcommand{\priconf}[0]{\beta} %DP confidence
\newcommand{\confrewf}[0]{E_{\varepsilon,\delta}} %full information loss noise confidence 
\newcommand{\confcountxa}[0]{E_{\varepsilon,\delta}} %confidence radius for state-action pair counts, i.e., N(x,a)-\tilde{N}(x,a)
\newcommand{\confcountxax}[0]{E_{\varepsilon,\delta}} %confidence radius for state-action-state pair counts, i.e., N(x,a,x)-\tilde{N}(x,a,x)
\newcommand{\protocol}[0]{\mathcal{T}} 
\newcommand{\randomizer}[0]{\mathcal{E}} 
\newcommand{\shuffler}[0]{\mathcal{F}} 
\newcommand{\analyzer}[0]{\mathcal{G}} 
\newcommand{\batchdata}[0]{D} 
\newcommand{\batchdataspace}[0]{\mathcal{D}}
\newcommand{\batchnumber}[0]{B}
\newcommand{\batchindx}[0]{b}
\newcommand{\batchlength}[0]{L}
\newcommand{\userspace}[0]{\mathcal{U}} %user space
\newcommand{\user}[0]{u} %user
\newcommand{\traj}[0]{S} %one trajectory of one episode
\newcommand{\trajset}[0]{\mathcal{S}} %the trajectory set of one episode
\newcommand{\prob}[0]{\mathbb{P}} %probability
\newcommand{\expect}[0]{\mathbb{E}} %distribution
\begin{document}

\title{Near-Optimal Reinforcement Learning with Shuffle Differential Privacy
\thanks{
\textcolor{blue}{This work has been submitted to the IEEE for possible publication. Copyright may be transferred without notice, after which this version may no longer be accessible.}
}}

\author{Shaojie Bai, 
Mohammad Sadegh Talebi, 
Chengcheng Zhao,~\IEEEmembership{Member,~IEEE,} \\
Peng Cheng,~\IEEEmembership{Member,~IEEE,} 
and Jiming Chen, \IEEEmembership{Fellow,~IEEE} % <-this % stops a space
% \thanks{*This work was done in visiting University of Copenhagen.}% <-this % stops a space
% \thanks{\todo{This work was supported in part by the National Natural Science Foundation of China under Grant ? and Grant ?, and in part by the Key Research and Development Program of Jiangsu Province under Grant ?. (Corresponding author: Chengcheng Zhao.)}}
\IEEEcompsocitemizethanks{\IEEEcompsocthanksitem Shaojie Bai, Chengcheng Zhao, Peng Cheng, Jiming Chen are with College of Control Science and Engineering, Zhejiang University, 310027 Hangzhou, China. (email: bai\_shaojie@zju.edu.cn, chengchengzhao@zju.edu.cn, lunarheart@zju.edu.cn, cjm@zju.edu.cn).%
\IEEEcompsocthanksitem Mohammad Sadegh Talebi is with Department of Computer Science, University of Copenhagen, 2100, Copenhagen, Denmark. (email: m.shahi@di.ku.dk).}%
}

% \author{IEEE Publication Technology,~\IEEEmembership{Staff,~IEEE,}
%         % <-this % stops a space
% \thanks{This paper was produced by the IEEE Publication Technology Group. They are in Piscataway, NJ.}% <-this % stops a space
% \thanks{Manuscript received April 19, 2021; revised August 16, 2021.}}

% The paper headers
\markboth{Journal of \LaTeX\ Class Files,~Vol.~14, No.~8, August~2021}%
{Shell \MakeLowercase{\textit{et al.}}: A Sample Article Using IEEEtran.cls for IEEE Journals}

% \IEEEpubid{0000--0000/00\$00.00~\copyright~2021 IEEE}
% Remember, if you use this you must call \IEEEpubidadjcol in the second
% column for its text to clear the IEEEpubid mark.

\maketitle

\begin{abstract}
Reinforcement learning (RL) is a powerful tool for sequential decision-making, but its application is often hindered by privacy concerns arising from its interaction data. 
This challenge is particularly acute in advanced networked systems, where learning from operational and user data can expose systems to privacy inference attacks. 
Existing differential privacy (DP) models for RL are often inadequate: the centralized model requires a fully trusted server, creating a single point of failure risk, while the local model incurs significant performance degradation that is unsuitable for many networked applications. 
This paper addresses this gap by leveraging the emerging shuffle model of privacy, an intermediate trust model that provides strong privacy guarantees without a centralized trust assumption. 
We present Shuffle Differentially Private Policy Elimination (SDP-PE), the first generic policy elimination-based algorithm for episodic RL under the shuffle model. 
Our method introduces a novel exponential batching schedule and a ``forgetting'' mechanism to balance the competing demands of privacy and learning performance. 
Our analysis shows that SDP-PE achieves a near-optimal regret bound, demonstrating a superior privacy-regret trade-off with utility comparable to the centralized model while significantly outperforming the local model. 
The numerical experiments also corroborate our theoretical results and demonstrate the effectiveness of SDP-PE.
This work establishes the viability of the shuffle model for secure data-driven decision-making in networked systems.
\end{abstract}

\begin{IEEEkeywords}
Reinforcement learning, online learning, networked system, shuffle differential privacy, regret  
\end{IEEEkeywords}
\section{Introduction}
\IEEEPARstart {R}{einforcement} learning (RL) has gained remarkable attraction to optimize networked systems~\cite{8714026}, 
such as % finance~\cite{xu2022towards}, 
wireless network,
smart grid, % ~\cite{en16145326}, 
% networked robotics, %~\cite{hu2020voronoi}, 
and autonomous vehicle networks. %~\cite{9762548}.
% healthcare~\cite{gottesman2019guidelines}, online recommendation~\cite{afsar2022reinforcement}, and large language models~\cite{ouyang2022training}.
A common operational paradigm is online reinforcement learning, as illustrated in Figure~\ref{fig: networkRL case}: a central agent (e.g., a network coordinator) performing "centralized learning" to optimize a global policy, while providing streaming services to a sequence of online nodes or user requests. 
For each request, the central agent deploys its global policy to the corresponding distributed node (e.g., 5G small cell or cognitive radio node). 
The node locally and autonomously executes this policy to generate the interaction trajectory (e.g., an $H$-steps communication session), and the trajectory is then sent back to the central agent for updating the global policy.
% , without reporting their local state $s_t$ back to the center at every step.
% The agent must learn a global operational policy ($\pi^*$) to maximize network-wide performance, using only feedback from these sequential interactions.
For instance, in Dynamic Spectrum Access (DSA), a cognitive radio node runs the policy locally, sensing its local spectrum state to choose a channel. 
The coordinator's goal is to learn a global operational policy that optimizes network-wide performance (e.g., maximizing spectral efficiency) using the interaction data fed back by the nodes. 
% Cognitive Ratio Networks for example, as illustrated in Figure~\ref{fig} a central agent (e.g., a spectrum coordinator) manages network resources for a vast number of distributed nodes (e.g., IoT gateways or 5G small cells).
% These nodes present streaming service requests (e.g., initiating a new communication session) to the coordinator.
% The nodes then locally and autonomously execute $\pi_b$ for their entire session (an $H$-step episode), sensing local spectrum state $s_t$ to choose a channel $a_t$ without reporting $s_t$ back to the center at every step.
% The coordinator's goal is to learn a global operational policy ($\pi^*$) to optimize network-wide performance (e.g., maximizing spectral efficiency, minimizing interference). 
% the central agent deploys this policy $\pi_b$ to the nodes. 
\begin{figure}[!htbp]
  \centering
  \includegraphics[width=0.4\textwidth]{./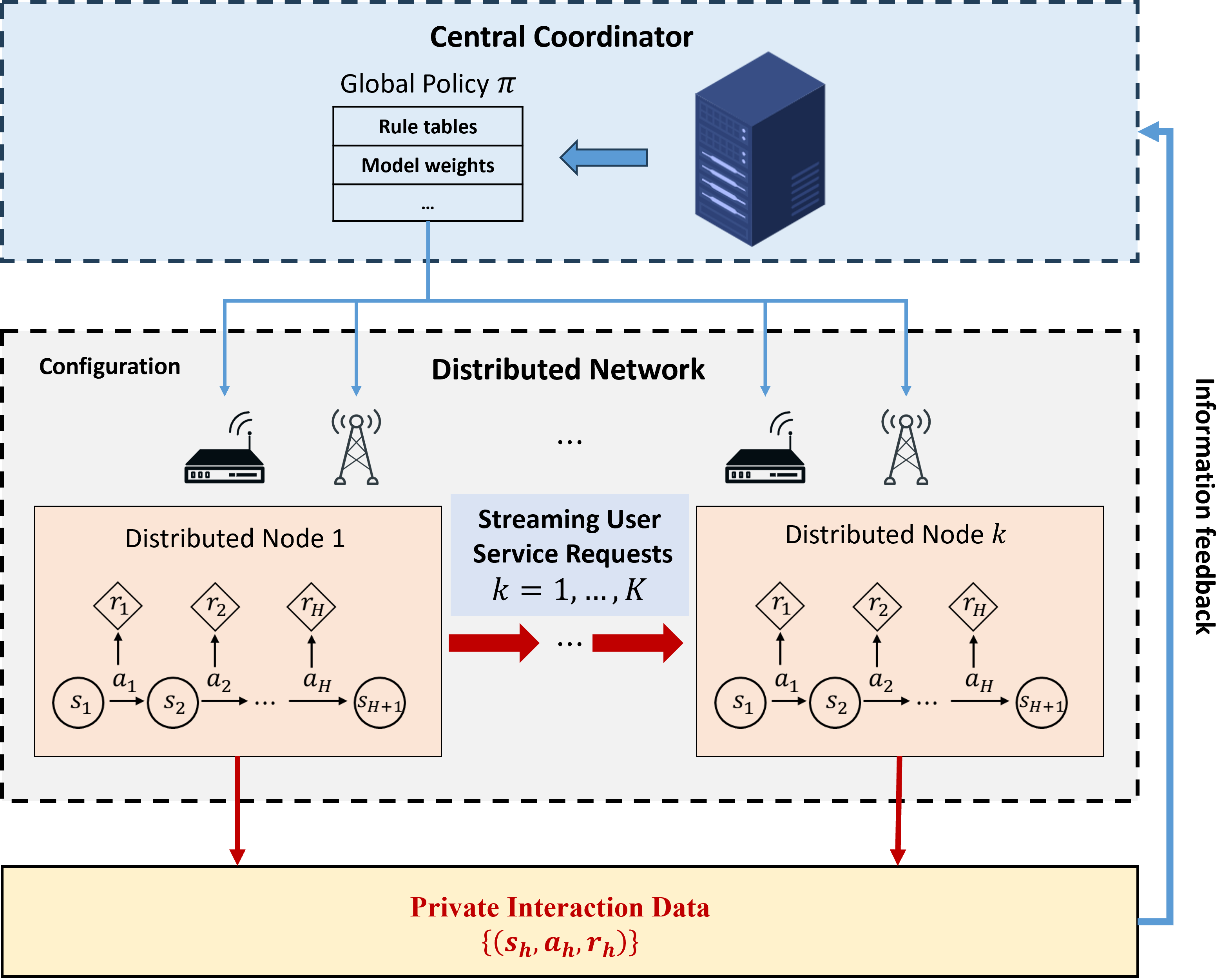}
  \caption{An illustration of online reinforcement learning in a distributed network. A central coordinator performs "centralized learning", and provides service for online streaming user requests in distributed nodes by policy deployment. Each local node generates episodic interaction data. This sensitive interaction data is then (privately) transmitted back to the coordinator for global policy updates.}
  \label{fig: networkRL case}
\end{figure}

However, a crucial concern for its deployment is that the interaction data from each node usually contains sensitive information (e.g., local traffic patterns and spectrum usage habits in the DSA instance).
Without privacy protection mechanisms in place, the learning agent can memorize sensitive information from the interaction history~\cite{zhang2016privacy}, leaving the system to various privacy attack~\cite{mo2024security}. %lei2023new
Existing Differential Privacy (DP)~\cite{dwork2014algorithmic} models for such sequential decision-making \cite{vietri2020private,baidifferentially,garcelon2021local,chowdhury2022differentially} are often inadequate: the centralized privacy model (JDP)~\cite{vietri2020private,baidifferentially} requires a fully trusted server to collect raw trajectories, creating a single point of failure. 
Conversely, the local privacy model (LDP) \cite{garcelon2021local,chowdhury2022differentially} avoids this risk by privatizing data on each node's device, but at the cost of severe performance degradation (e.g., a multiplicative regret term $\mathcal{O}(\sqrt{K}/\epsilon)$), which is often unacceptable for performance-critical network optimization. 
This naturally leads to the following question: 
\textit{Can a finer trade-off between privacy and regret in RL be achieved, i.e., achieving utility comparable to that of centralized privacy models but without relying on a trusted central server?}

\begin{figure}[!htbp]
  \centering
  \includegraphics[width=0.48\textwidth]{./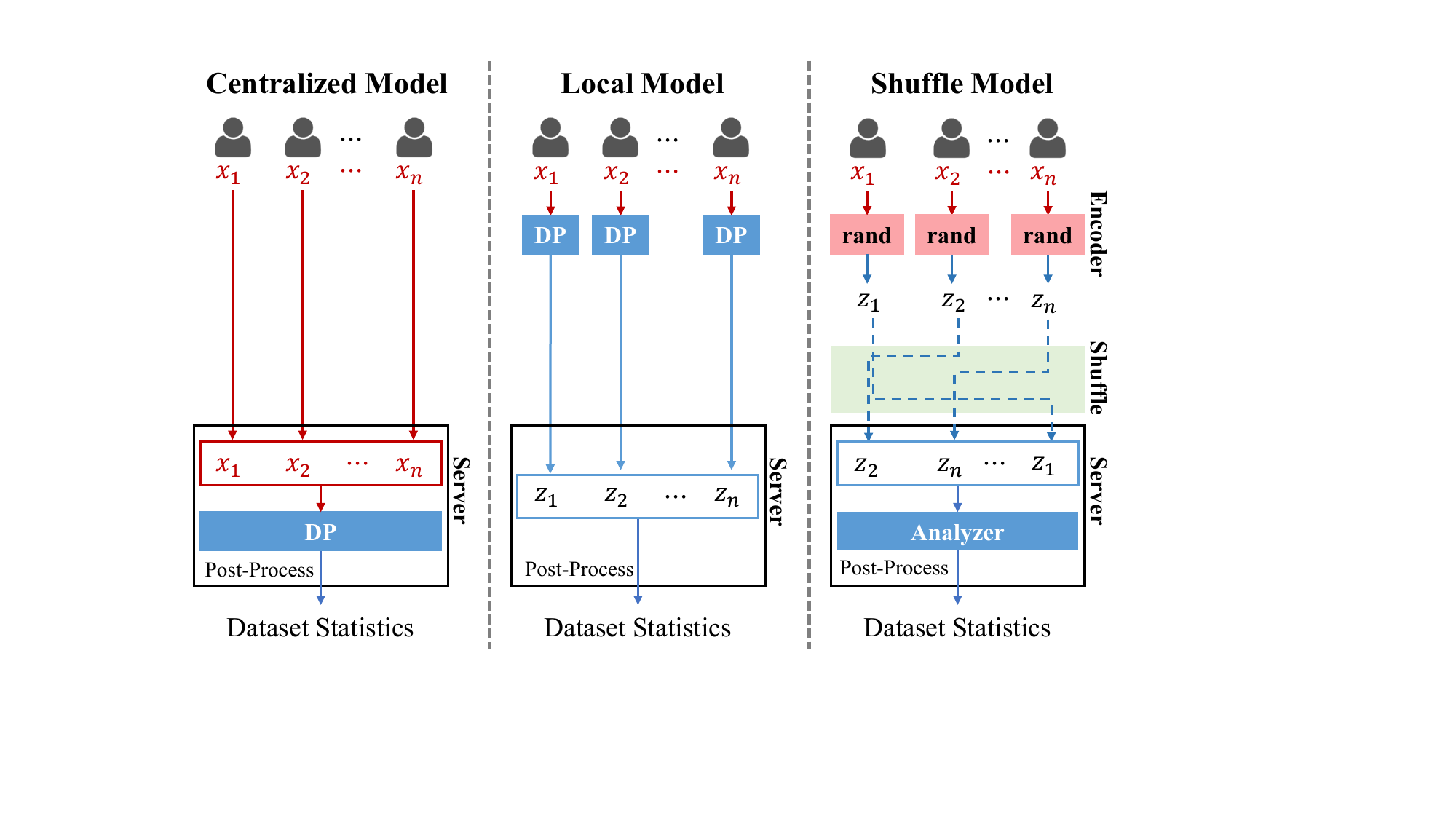}
  \caption{Illustration of the operational workflows in the centralized, local, and shuffle privacy models.}
  \label{fig: privacy-model}
\end{figure}

Motivated by these, this paper leverages an intermediate trust model of differential privacy, known as \emph{shuffle differential privacy} (SDP)~\cite{cheu2019distributed},
%~\cite{feldman2022hiding}
% in the hope of attaining a finer regret-privacy trade-off while using batch updates.
as a resilient, distributed architecture.
% The operational workflow of this model, alongside the centralized (JDP) and local (LDP) models, is illustrated in Figure~\ref{fig: privacy-model}.
As illustrated in Figure~\ref{fig: privacy-model}, this paradigm requires a secure shuffler to anonymize a \emph{batch} of locally randomized data before it reaches the central server, and makes sure that the central server can not distinguish any two users' data.
The shuffler, implemented by trusted cryptographic mixnets~\cite{bittau2017prochlo}, creates a ``privacy blanket''~\cite{balle2019privacy} so that each user can inject much less noise and hide their information in the crowd.
% The shuffler is trusted by these users since it can be implemented by trusted cryptographic mixnets or trusted hardware deployed via distributed constructs~\cite{bittau2017prochlo}.
% The key intuition is that additional randomness of the shuffler creates a privacy blanket~\cite{balle2019privacy} so that each user can inject much less noise and hide their information in the crowd.
This allows each node to inject significantly less noise than in LDP, enabling the system to achieve high utility comparable to JDP but without the centralized trust assumption, making it highly practical.
% Meanwhile, the batch-processing requirement forces us to collect data in batches, and update policy in batches, which naturally satisfies the goal of reducing the policy-switching cost. 
% Since we are forced by privacy to collect data in batches, we can naturally update our policy in batches.
% A better privacy-utility trade-off has been proven in related fields like (federated) supervised learning~\cite{lowy2023private,lowy2023private2}, 
% %~\cite{lowy2023private,}, 
% and bandits~\cite{tenenbaum2021differentially,hanna2024differentially}.
% % chowdhury2022shuffle,garcelon2022privacy,tenenbaum2023concurrent
% However, to the best of our knowledge, the application of the shuffle model in RL remains largely unexplored.

However, the batch-processing requirement of SDP forces us to abandon the classical ``per-interaction'' online model and shifts to a ``batched'' online model. 
This privacy-driven ``batching'' constraint poses new algorithmic challenges. 
First, by operating on anonymized batches, it imposes an inherent delay, as the agent receives no feedback until an entire batch of interactions is complete. 
Second, to ensure privacy, the aggregated data is intentionally perturbed with statistical noise. 
This combination of delayed feedback and noisy observations severely complicates the crucial exploration-exploitation trade-off.
At the same time, this constraint also creates an opportunity to solve a second critical engineering bottleneck in this streaming service model: the policy switching cost. 
Standard online algorithms require per-interaction ``reconfiguration'' updates (i.e., after every $H$-step session), which is operationally infeasible in large-scale networks due to the heavy deployment and communication cost~\cite{8500100}. 
The ``batched'' model, necessitated by SDP, naturally allows for batched policy updates, providing a path to solve this engineering problem.

In this paper, we overcome these challenges by introducing \emph{Shuffle Differentially Private Policy Elimination} (SDP-PE), the first online RL framework co-designed to exploit this synergy. 
It uses an exponential schedule that both satisfies the batching needs of SDP and solves the engineering cost of policy switching. 
% Our core contributions are:
We provide both rigorous theoretical analysis and empirical validations for our method, and the main contributions are summarized as follows:
% and attaining a sublinear regret. 
\begin{itemize}
\item Problem Formulation: We formalize the ``distributed-to-global'' network optimization problem for streaming online services (like DSA) under the dual considerations of SDP and low policy switching cost.
\item Algorithm Design: 
% We develop two key strategies to design our private RL algorithm based on the \emph{policy elimination} idea~\cite{qiao2022sample}:
% \sj{a) divide exploration into stages with $\cO(\log \episodetotal)$ batches, iteratively update an ``absorbing MDP'' and an active policy set for policy elimination.}
% b) use only the data from \emph{the current stage} for updating, and forget earlier data to prevent noise accumulation.
Based on the \emph{policy elimination} idea~\cite{qiao2022sample}, we develop two key strategies tailored for the shuffle model:
a) a staged learning process with exponentially increasing batch sizes. 
This structure allows for the iterative refinement of an active policy set, and is critical for managing the exploration-exploitation trade-off under the delayed feedback inherent to the shuffle model.
% divide exploration into stages with exponentially increasing batch size, and iteratively update a refined active policy set for policy elimination. Such a carefully designed delayed batch schedule helps a fine exploration-exploitation dilemma.
b) a “forgetting” mechanism that uses only data from the current stage for model updates. 
This approach prevents the accumulation of privacy-induced noise across the learning stages. 
% use only the data from \emph{the current stage} for updating, and forget earlier data to prevent noise accumulation from privacy perturbations.

\item Novel SDP Privatizer for RL: 
We design a new SDP Privatizer that provides strong utility guarantees for the statistical estimates required in RL. 
Built upon a shuffle-private binary summation protocol, our privatizer is designed for seamless integration into the SDP-PE framework, ensuring both rigorous privacy and high performance.
% We design a new SDP Privatizer with nice properties for required statistics, built on a shuffle private binary summation mechanism. 
% This Privatizer can be directly integrated into the proposed SDP-PE algorithm to preserve privacy and ensure utility.

\item Theoretical Guarantees: We prove that SDP-PE obtains a regret of $\widetilde{\cO}\big(\sqrt{\statesize\actionsize\horizontotal^3\episodetotal} + \frac{\statesize^3\actionsize\horizontotal^6}{\pripara}\big)$,\footnote{Here $\tilde{\mathcal{O}}(\cdot)$ hides terms that are poly-logarithmic in $\episodetotal$.}
where $\statesize,\actionsize,\episodetotal,\horizontotal,\pripara$ represent the number of states, actions, episodes, the episode length, and the privacy budget, respectively.
This result matches the information-theoretic lower bound in standard episodic RL up to lower-order terms, demonstrating a superior privacy-performance trade-off.

\item Numerical Validations: We empirically demonstrate our framework's (a) superior privacy-utility trade-off, (b) massive (orders of magnitude) reduction in network switching/deployment costs, which also further corroborates our theoretical findings.
\end{itemize}
% A numerical experiment also verifies the theoretical results.
% \sj{Compared to the optimal result in the local model~\cite{qiao2023near}, SDP-PE reduces the dependence on $1/\pripara$ from multiplicative to additive.
% Additionally, SDP-PE matches the best results in the centralized model regarding the dependence on $\episodetotal$ and $\pripara$~\cite{qiao2023near}, achieving an improved privacy-regret trade-off.}

% However, it is worth noting that the dependence on $\statesize,\actionsize,\horizontotal$ remains sub-optimal, and our proposed algorithm is also computationally inefficient which is not settled clearly in non-private batched RL either. 
% We leave these open problems for future work.
% % We summarize the comparison in Table \ref{table: contributions}. 
% We summarize the comparison of best-known results regarding regret under $\pripara$-JDP, $\pripara$-LDP, and $(\pripara,\priconf)$-SDP guarantees in Table \ref{table: contributions}, where $\pripara$ and $\priconf$ refer to the privacy budget and confidence level, respectively.
% % Due to space limitations, some algorithms and proof details will appear in the full paper.

% \sj{It is worth noting that the proposed algorithm is computationally inefficient -- a challenge that remains even in non-private batched RL.} 
% We leave these open problems for future work.
% We summarize the comparison in Table \ref{table: contributions}. 
We summarize the comparison of best-known results under three distinct privacy guarantees in Table~\ref{table: contributions}: $\pripara$-JDP, $\pripara$-LDP, and $(\pripara,\priconf)$-SDP,
where $\pripara$ and $\priconf$ refer to the privacy budget and probability of privacy failure, respectively.
Due to space limitations, some proof details are deferred to the appendix.

\begin{table}[!htbp]
\begin{center}
\caption{Results Comparison}
\label{table: contributions}
\renewcommand{\arraystretch}{2}
\small
\centering
\begin{tabular}{|c|c|c|}
\hline
Algorithm & Privacy model & Best-known regret bounds \\ \hline
\multirow{2}{*}{\makecell{DP-UCBVI \\~\cite{qiao2023near}}} & $\pripara$-JDP & $\widetilde{\cO}\rbr{\sqrt{\statesize\actionsize\horizontotal^3\episodetotal} + \frac{\statesize^2\actionsize\horizontotal^3}{\pripara}}$ \\ \cline{2-3}
& $\pripara$-LDP 
& $\widetilde{\cO}\rbr{\sqrt{\statesize\actionsize\horizontotal^3\episodetotal} + \frac{\statesize^2\actionsize\horizontotal^3\sqrt{\episodetotal}}{\pripara}}$ \\ \hline
\makecell{SDP-PE \\ (Ours) }
& $(\pripara,\priconf)$-SDP & $\widetilde{\cO}\rbr{\sqrt{\statesize\actionsize\horizontotal^3\episodetotal} + \frac{\statesize^3\actionsize\horizontotal^6}{\pripara}}$ \\ \hline
\end{tabular}
% \caption{
% Best-known results of existing work regarding regret under $\pripara$-JDP, $\pripara$-LDP, and $(\pripara,\priconf)$-SDP guarantees. 
% Here $\pripara,\priconf$ refer to the privacy budget and confidence level, $\statesize,\actionsize,\episodetotal,\horizontotal$ refer to the number of states, the number of actions, the number of episodes, and the length of each episode. 
% }
\end{center}
\end{table}

The remainder of this paper is organized as follows. 
In Section~\ref{sec: Preliminary}, we provide the preliminaries of episodic RL and adapt Shuffle DP for RL and formulate the problem. 
Section~\ref{sec: Algorithm} presents the designed policy elimination algorithm under SDP constraints.
The regret and privacy guarantees are presented in Sections~\ref{sec: Regret Guarantee} and \ref{sec: Privacy Guarantees}. 
The numerical experiment is shown in Section~\ref{sec: experiment}.
Discussion and related work are presented in Section~\ref{sec: discussion} and Section~\ref{sec: related work}, respectively.
Finally, Section~\ref{sec: Conclusion and Future Work} presents the conclusion.

\section{Preliminaries and Problem formulation}
\label{sec: Preliminary}
In this section, we review the necessary background on episodic reinforcement learning and shuffle differential privacy, and then formally state the private RL problem.
\subsection{Episodic Reinforcement Learning}
\label{subsec: episodic rl}
% We model the sequential interaction task of a network node as an episodic Markov decision process (MDP), defined by a tuple $\left(\statespace, \actionspace, \horizontotal, \{\transeasy_\horizon\}_{\horizon=1}^\horizontotal, \{\rewdis_\horizon\}_{\horizon=1}^\horizontotal, d_1 \right)$, 
An episodic Markov decision process (MDP) is defined by a tuple $\left(\statespace, \actionspace, \horizontotal, \{\transeasy_\horizon\}_{\horizon=1}^\horizontotal, \{\rewdis_\horizon\}_{\horizon=1}^\horizontotal, d_1 \right)$, 
where $\statespace,\actionspace$ are state and action spaces with respective cardinalities $\statesize$ and $\actionsize$, and $\horizontotal$ is the episode length. 
At step $\horizon$, the transition function $\transeasy_\horizon(\cdot\vert\state,\action)$ takes a state-action pair and returns a distribution over states, 
the reward distribution $\rewdis_\horizon(\state,\action)$ is a distribution over $\{0,1\}$ with expectation $\reward_\horizon(\state,\action)$, and $d_1$ is the distribution of initial state.\footnote{For simplicity, we assume that the rewards are binary. However, our results naturally extend to $[0,1]$, by replacing our private binary summation mechanism (defined in Section \ref{sec: Privacy Guarantees}) with a private summation mechanism for real numbers in $[0, 1]$ with similar guarantees.}
% A markov policy is defined as a collection $\policy=(\policy_1,\dots, \policy_\horizontotal)$ of policies $\policy_\horizon: \statespace\rightarrow\Delta(\actionspace)$.
A deterministic policy is defined as a collection $\policy=(\policy_1,\dots, \policy_\horizontotal)$ of policies $\policy_\horizon: \statespace\rightarrow\actionspace$.
The value function $\valuef_\horizon^\policy$ and Q function $\qvaluef_\horizon^\policy$ are defined as: $\valuef_\horizon^\policy(\state) = \EE_\policy[\sum_{t=\horizon}^\horizontotal \reward_t\vert \state_\horizon=\state], \qvaluef_\horizon^\policy(\state,\action) = \EE_\policy[\sum_{t=\horizon}^\horizontotal \reward_t\vert \state_\horizon,\action_\horizon=\state,\action], \forall\state,\action\in\statespace\times\actionspace$.
There exists an optimal policy $\policy^\star$ such that $\valuef_\horizon^\star(\state) =  \valuef_\horizon^{\policy^\star}(\state) = \max_\policy \valuef_\horizon^\policy(\state)$ for all $\state,\horizon\in\statespace\times[\horizontotal]$.\footnote{For a positive integer $n$, we define $[n]\!:=\!\{1,\ldots,n\}$.} 
The Bellman (optimality) equation follows $\forall \horizon\in[\horizontotal]: \qvaluef_\horizon^\star(\state,\action) =\reward_\horizon(\state,\action) + \max_{\action'}\expect_{\state'\sim\transeasy_\horizon(\state,\action)}[\valuef_{\horizon+1}^{\star}(\state')].$
The optimal policy is the greedy policy: $\policy^\star_\horizon(\state
) = \argmax_{\action} \qvaluef_\horizon^\star(\state,\action),\forall \state\in\statespace.$
For generalization, we define the value function of $\policy$ under MDP transition $p$ and reward function $r'$ as $\valuef^\pi(r',p)$.

We consider a central agent tasked with interacting with a sequence of $\episodetotal$ online user requests in the network system.
% The agent interacts with the environment for $\episodetotal$ episodes.
Each episode $\episode\in[\episodetotal]$ is modeled as an interaction with user request $\user_\episode\in\userspace$, where $\userspace$ is the user space. 
% Following \cite{vietri2020private}, a user $\user_\episode$ can be seen as a tree of depth $\horizontotal$ encoding the state and reward responses they would reply to all $\actionsize^\horizontotal$ possible sequences of actions from the agent.
For each episode $\episode\in[\episodetotal]$, the learning agent determines its current policy $\policy_\episode$ and deploys it to user $\user_\episode$ for execution. 
The output of the execution, a trajectory $\traj_\episode= \rbr{\state_\horizon^\episode,\action_\horizon^\episode,\reward_\horizon^\episode}_{\horizon\in[\horizontotal]}$, represents the sensitive data to be protected.
The trajectory is then sent back to the learning agent for policy updating.
We measure the performance of a learning algorithm by its cumulative regret after $\episodetotal$ episodes, 
\begin{equation}
    \regret := \sum_{\episode=1}^\episodetotal \sbr{ \valuef_1^\star(\state_1^\episode) - \valuef_1^{\policy_\episode}(\state_1^\episode)}, \quad \state_1^\episode\sim d_1.
\end{equation}
% where $\state_1^\episode$ is the initial state sampled from $d_1$.
A secondary metric is the policy switching cost~\cite{bai2019provably}, which measures how many times the agent changes the policy,
\begin{equation}
    N_{switch}:= \sum_{\episode=1}^{\episodetotal-1} \mathbf{1}\{\policy_\episode\neq \policy_{\episode+1}\}.
\end{equation}

% Let $\episodetotal$ be the number of episodes that the agent plans to play.

% \textbf{Notation.} 
% % For a positive integer $n$, we denote $[n]\!:=\!\{1,\ldots,n\}$.
% For generalization, we define the value function $\valuef^\pi(r',p)$ as the value function of $\policy$ under MDP transition $p$ and reward function $r'$.
% $\ind_\episode\cbr{\state,\action}$ denotes an indicator function whose value is $1$ if $\rbr{\state,\action}$ is visited in episode $\episode$ and $0$ otherwise. 
% Similar definition also applies to $\ind_\episode\cbr{\state,\action,\state'}$. 

\subsection{Differential Privacy and Shuffle Model}
\label{subsec: dp and shuffle model}
% Under the episodic RL setting, each episode $\episode \in \sbr{\episodetotal}$ can be viewed as an interaction process with a specific user. Let $\userspace$ denote the set of all users, and
% let $\userseq_\episodetotal=\rbr{\user_1,\dots,\user_\episodetotal}\in\userspace^\episodetotal$ denote a sequence of $\episodetotal$ users participating in the RL protocol.
% Each user $\user_\episode$ is identified by her interaction trajectory $\traj_\episode$, including the visited state-action pairs and the received rewards.
Differential privacy provides a formal guarantee of data privacy.
Consider a general case where $\trajset$ is the data universe, and we have $n$ \emph{unique} users.  
We say $\batchdata, \batchdata' \in \trajset^n$ are neighboring batched datasets if they only differ in one user's data for some $i\in[n]$. 
Then, we have the standard definition of DP~\cite{dwork2014algorithmic}:
% \cite{dwork2006calibrating}:
\begin{definition}[Differential Privacy (DP)] \label{def: DP}
    For $\pripara,\priconf > 0$, a randomized mechanism $\cM$ is $(\pripara,\priconf)$-DP if for all neighboring datasets $\batchdata, \batchdata'$ and any event $E$ in the range of $\cM$, we have
    $$\prob[\cM(\batchdata)\in E]  
    \leq \exp(\pripara) \cdot \prob[\cM(\batchdata')\in E] + \priconf.$$
\end{definition}
Here, the randomized mechanism $\cM$ is an algorithm that takes a dataset as input and uses internal randomness to produce a privatized output, such as a statistic.
The special case of $(\pripara,0)$-DP is also called \emph{pure DP}, whereas, for $\priconf>0$, $(\pripara,\priconf)$-DP is referred to as \emph{approximate DP}.

The shuffle model is an enhanced DP protocol $\protocol = (\randomizer,\shuffler,\analyzer)$ involving three components, as shown in Figure~\ref{fig: privacy-model}: 
(i) a (local) random encoder $\randomizer$ at each user's side; 
(ii) a secure shuffler $\shuffler$; and 
(iii) an analyzer $\analyzer$ at the central server (i.e., the learning agent).
In this model, each user $\user_i$ first locally randomizes their data $\batchdata_i$ via the encoder $\randomizer$, and sends the resulting messages $\randomizer(\batchdata_i)$ to the shuffler $\shuffler$.
The shuffler $\shuffler$ permutes messages from this batch users at random and then reports $ \shuffler(\randomizer(\batchdata_1),\dots,\randomizer(\batchdata_n))$ to the analyzer.
The analyzer $\analyzer$ then aggregates the received messages and outputs desired statistics.
In this protocol, the users trust the shuffler but not the analyzer, and the goal is to ensure the output of the shuffler $\shuffler$ on two neighboring datasets is indistinguishable in the analyzer's view.
Here, we define the mechanism $(\shuffler\circ\randomizer^n)(\batchdata) = \shuffler(\randomizer(\batchdata_1),\dots,\randomizer(\batchdata_n))$, where $\batchdata\in\batchdataspace^n$.
Thus, we have the definition of standard shuffle DP~\cite{cheu2019distributed}:
\begin{definition}[Shuffle Differential Privacy (SDP)] 
\label{def: SDP}
    A protocol $\protocol = (\randomizer,\shuffler,\analyzer)$ for $n$ users is $(\pripara,\priconf)$-SDP if the mechanism $\shuffler\circ\randomizer^n$ satisfies $(\pripara,\priconf)$-DP.
\end{definition}

In a sequential data-generation setting, this protocol will be applied across multiple batches.
To protect the sequence of outputs across all $\batchnumber$ batches, we define the (composite) mechanism $\cM_\protocol = (\shuffler\circ\randomizer^{\batchlength_1},\dots,\shuffler\circ\randomizer^{\batchlength_\batchnumber}),$ 
where each mechanism $\shuffler\circ\randomizer^{\batchlength_\batchindx}$ operates on a dataset of $\batchlength_\batchindx$ users.
This leads to the following definition $\batchnumber$-batch SDP \cite{cheu2019distributed}:

\begin{definition}[$\batchnumber$-batch SDP]
A $\batchnumber$-batch shuffle protocol $\protocol$ is $(\pripara,\priconf)$-SDP if the mechanism $\cM_\protocol$ satisfies $(\pripara,\priconf)$-DP.
\end{definition} 

\subsection{Problem Formulation: Shuffle-Private RL}
\label{subsec: problem formulation: shuffle private rl}
Building upon the above definitions, we formally define the problem addressed in this paper. 
We consider an agent interacting with an unknown episodic MDP as described in Section~\ref{subsec: episodic rl}, but under the communication constraint of the $\batchnumber$-batch shuffle privacy defined in Section~\ref{subsec: dp and shuffle model}.

The core of our problem lies in the information feedback loop.
The learning process is partitioned into $\batchnumber$ batches.
 % each user's private data being their interaction trajectory. 
% During the learning process, each protocol for the different batch might depend on the output of preceding protocols to foster adaptivity.
% Formally, a general $\batchnumber$-batch, $\batchnumber\in\NN$, protocol $\Agent$ for $\episodetotal$ users works as follows.
% Let $\batchlength_\batchindx$ denote the size of batch $\batchindx$, such that $\episodetotal=\sum_{\batchindx=1}^\batchnumber \batchlength_\batchindx$.
In each batch $\batchindx$, the agent deploys some policy to a group of $\batchlength_\batchindx$ users, and each user's resulting interaction trajectory is considered their private data. 
% the private data being their interaction trajectory.
Crucially, the agent does not observe the raw interaction trajectories.
Instead, they are processed by a shuffle protocol, and the agent only receives a privatized, aggregated output for that batch, which it then uses to select policies for the subsequent batch.
The central challenge is to design an algorithm that achieves a dual objective: 
\begin{itemize}
    \item Learning performance: Minimize the cumulative regret $\regret$.
    \item System privacy: Guarantee that the entire sequence of outputs observed by the agent satisfies a $\batchnumber$-batch $(\pripara, \priconf)$-SDP guarantee.
\end{itemize}

This formulation captures the fundamental tension of our work: the agent must learn to make effective sequential decisions despite receiving only delayed, batched, and noisy information.

\section{Algorithm Design} \label{sec: Algorithm}
In this section, we introduce a generic algorithmic framework, \emph{Shuffle Differentially Private Policy Elimination} (SDP-PE, Algorithm \ref{algo: Private Batch-based Policy Elimination}).
Unlike existing private RL algorithms that rely on the \emph{optimism in the face of uncertainty} principle \cite{vietri2020private,garcelon2021local}, SDP-PE builds on the \emph{policy elimination} idea from \cite{qiao2022sample,zhang2022near}, extending action elimination from bandits~\cite{evendar06a} to RL.

As illustrated in Figure~\ref{fig: SDP-PE}, SDP-PE operates in a sequence of stages with exponentially increasing batch sizes, where $\batchlength_\batchindx:=2^\batchindx$ for $\batchindx=1,\dots,\batchnumber$ and $\batchnumber = \cO(\log\episodetotal)$. 
Within each stage, the algorithm maintains a set of ``active" policies ($\policyset$) and follows a three-step procedure: \textit{Crude Exploration}, \textit{Fine Exploration}, and \textit{Policy Elimination}.
The process begins by broadly exploring the environment to build a preliminary model. 
This model then guides a more refined exploration phase to gather higher-quality data and keep low regret. 
Finally, using this refined data, it constructs private confidence bounds on the value of each active policy and eliminates any policy that is provably sub-optimal. 
This cycle of explore-refine-eliminate repeats across stages. 
As the algorithm proceeds and the batch sizes grow, the confidence intervals shrink, ensuring that only near-optimal policies remain.

The high-level, three-step structure of our algorithm is inspired by the \textsc{APEVE} framework of \cite{qiao2022sample}.
However, our objective of minimizing regret under shuffle privacy, rather than minimizing policy switches, necessitates several fundamental innovations in the design as detailed below.

\emph{1) Robust estimation via a privatized absorbing MDP.}
We adapt the technique of using an absorbing MDP from \textsc{APEVE} to handle rarely visited states. 
The key difference is that in our setting, the set of ``infrequently-visited" tuples must be identified from private counts (cf.~Assumption \ref{assp: private counts}).   
This demands a careful construction that is robust to statistical noise, allowing the algorithm to focus its learning on the core, reachable parts of the environment.

\emph{2) Optimal trade-offs via exponential batch sizing and ``forgetting".} 
This principle is a core departure from prior work, designed to manage the dual trade-offs of exploration vs.~exploitation and privacy vs.~regret.

\underline{Exploration vs.~exploitation:}
    % The schedule of increasing batch sizes manages the exploration-exploitation trade-off. 
    % Early, smaller batches allow for frequent policy updates, enabling a preliminary model estimate and away from poor initial policy. 
    % As the algorithm progresses and gains higher confidence in the learned model, updates become less frequent.
    % This shift to larger batches yields more information necessary for reliably eliminating sub-optimal policies and committing to longer periods of exploitation with a trusted policy, ensuring the accumulated regret remains low.
    The schedule of increasing batch sizes manages this trade-off. 
    Early, smaller batches prioritize exploration by enabling frequent policy updates to quickly move away from a poor initial policy. 
    As the algorithm's confidence grows, larger batches are used for longer periods of confident exploitation, leveraging higher-quality data to reliably eliminate sub-optimal policies.
    
\underline{Regret control:} Crucially, the exponential growth rate (i.e., $L_b \propto 2^b$) is mathematically essential. 
    Our analysis shows that this specific schedule ensures two things simultaneously: the non-private regret term matches the lower bound in standard episodic RL, while the privacy cost in regret becomes a logarithmic and additive term. 
    % rather than a dominant and multiplicative one.
    Any other schedule would result in a suboptimal privacy cost.
    % Any other schedule, such as one with a fixed or more rapidly growing batch size, would result in a suboptimal, multiplicative privacy cost in the final regret bound.
    
\underline{Forgetting:} 
This batching strategy is paired with a ``forgetting" mechanism, using only data from the current stage for estimates, to prevent the accumulation of noise from previous, less accurate stages.

\emph{3) A noise-robust fine exploration policy.}
A third innovation of SDP-PE lies in policy design for the fine exploration phase. 
Unlike its counterpart in \textsc{APEVE}, our policy is specifically designed to guarantee uniform data coverage over all active policies even in the presence of privacy noise. 
This is critical for deriving the tighter confidence bounds needed to achieve our final regret.

The following sections will now elaborate on the components of this framework, detailing the private counting mechanisms and each of the three procedural steps.

\begin{figure}[!htbp]
  \centering
  \includegraphics[width=0.5\textwidth]{./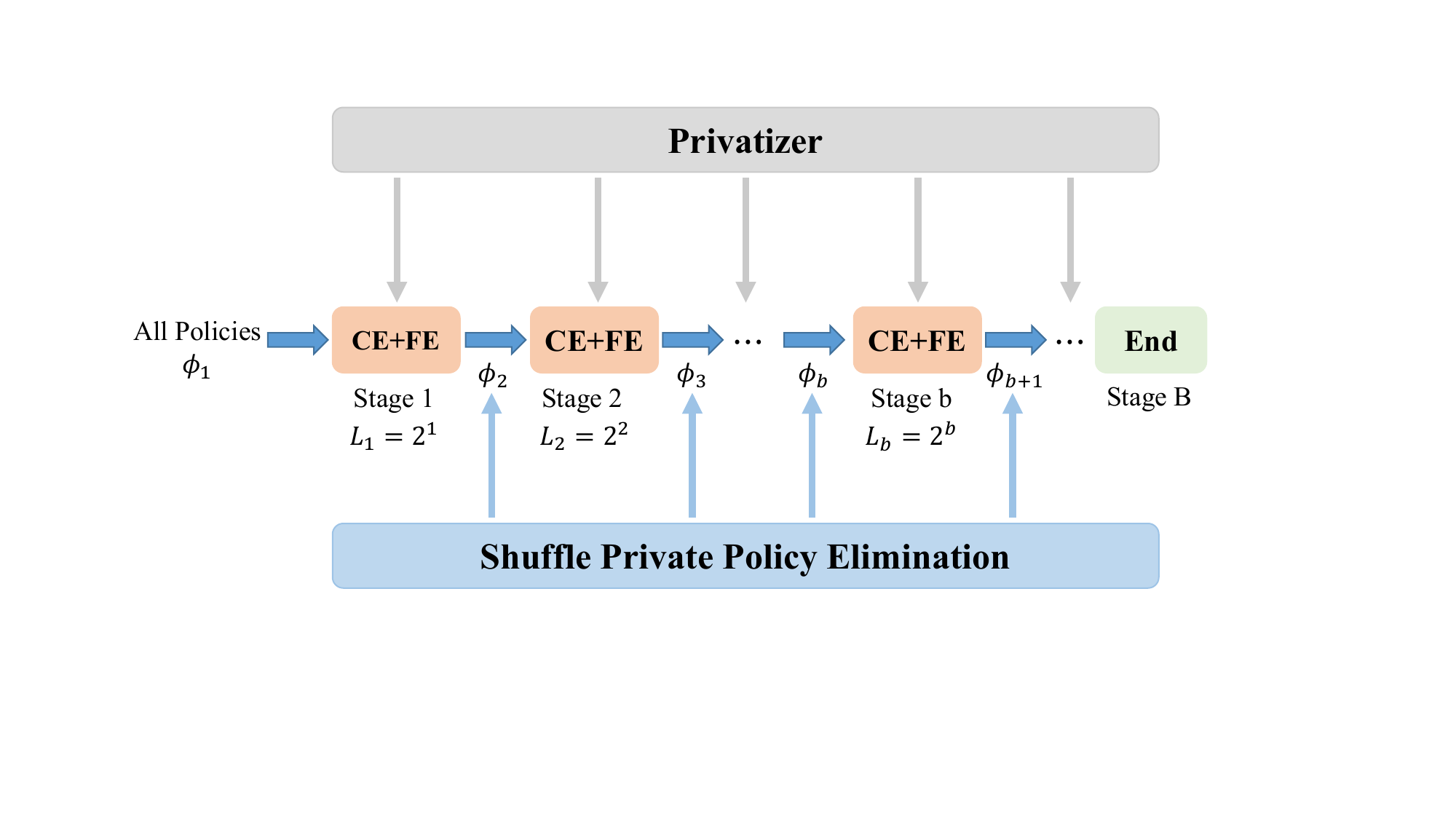}
  \caption{Overview of the SDP-PE algorithm (Algorithm~\ref{algo: Private Batch-based Policy Elimination}). The central coordinator deploys a global policy to distributed nodes, which process batched user requests. Private interaction data is then shuffled and transmitted back for policy value updates and elimination.}
  \label{fig: SDP-PE}
\end{figure}

\begin{algorithm}[!htbp]
{%\footnotesize
\caption{Shuffle DP Policy Elimination (SDP-PE)} 
\textbf{Parameters:} Episode number $\episodetotal$, universal constant $C$, failure probability $\delta$, privacy budget $\pripara > 0$ and a Privatizer. \newline
\textbf{Initialization:} \sloppy $\policyset_1 = \{\text{all deterministic policies}\}$, $\iota = \log(2\horizontotal\actionsize\episodetotal/\delta)$. 
\text{Set precision levels $\confcountxa$ for Privatizer.}

\begin{algorithmic}[1]
\FOR{$\batchindx=1$ to $\batchnumber$}
\IF{$3(\sum_{i=1}^\batchindx \batchlength_i)\geq\episodetotal$}
\STATE $\batchlength_\batchindx = \frac{\episodetotal - 3(\sum_{i=1}^\batchindx \batchlength_i)}{3}$. (o.w. $\batchlength_\batchindx = 2^\batchindx$)
\ENDIF
\STATE $\infreqtuples^\batchindx,\!\crutrans,\!\policy_\batchindx\! =\! \text{CrudeExploration}(\policyset_\batchindx,\batchlength_\batchindx,\pripara,\!\text{Privatizer}).$
% \STATE \begin{multiline}
%     \reftrans,\rewprieasy^\batchindx = \text{FineExploration}&(\infreqtuples^\batchindx,\crutrans,\policyset_\batchindx, \policy_\batchindx, \batchlength_\batchindx,\pripara, \\
%     &\text{Privatizer})
%     \end{multiline}
\STATE $\reftrans,\rewprieasy^\batchindx = \text{FineExploration}(\infreqtuples^\batchindx,\crutrans,\policyset_\batchindx, \policy_\batchindx, \batchlength_\batchindx,\pripara, 
    $ $\text{Privatizer})$
\STATE $\elimpolicyset_\batchindx = \emptyset$
\FOR{$\policy \in \policyset_\batchindx$}
\IF{$\sup_{\mu\in\policyset_\batchindx}\valuef^{\mu}(\rewprieasy^\batchindx,\reftrans) - \valuef^\policy(\rewprieasy^\batchindx,\reftrans) \geq 2C\big(\sqrt{\frac{\statesize\actionsize\horizontotal^3\iota}{\batchlength_\batchindx}}+\frac{\statesize^3\actionsize\horizontotal^5\confcountxa\iota}{\batchlength_\batchindx}\big)$}
\STATE Update $\elimpolicyset_\batchindx\leftarrow \elimpolicyset_\batchindx \cup \{\policy\}$.
\ENDIF
\ENDFOR
\STATE $\policyset_{\batchindx+1} \leftarrow \policyset_{\batchindx} \backslash \elimpolicyset_\batchindx$.
\ENDFOR
\end{algorithmic}
\label{algo: Private Batch-based Policy Elimination}}
\end{algorithm}

\subsection{Model Estimation via Private Counts in Algorithm \ref{algo: Private Batch-based Policy Elimination}}
\label{sec: counts in algorithm SDP-PE}
SDP-PE employs a \emph{model-based} approach, relying on a ``forgetting'' principle to estimate the MDP model in each stage.
This means that only the data from the current batch is used for model estimation.
While the specific dataset construction varies between the crude exploration and fine exploration steps, as will be detailed later,
the general process for estimating the model from a batch trajectories is as follows.
% the counts for model estimation are established as follows.

Given a batch dataset $\batchdata$ from $n$ users,
we define the true visitation counts and cumulative rewards at each step $\horizon$,
% is sent to the Privatizer.
% We aim to collect the visitation counts and cumulative rewards for each pair $(\state,\action,\state')$ at $\horizon$-th step. 
$\visitxaxtotaleasy_\horizon(\state,\action,\state')\!:=\!\sum_{i=1}^{n} \ind\{\state_\horizon^{i},\action_\horizon^{i},\state_{\horizon+1}^{i} = \state,\action,\state'\}$, 
similarly $\visitxatotaleasy_\horizon(\state,\action)\!\!:=\!\!\sum_{\state^\prime\in\statespace} \visitxaxtotaleasy_\horizon(\state,\action,\state')$, 
and $\rewcumeasy_\horizon(\state,\action)\!:=\!\sum_{i=1}^{n} \ind\{\state_\horizon^{i},\action_\horizon^{i} = \state,\action\} \cdot \reward_\horizon^{i}$. 
% In non-private learning, these counts are sufficient to estimate the transition function $\transeasy$, and design a new policy for the next batch. 
% However, these counters are derived from the raw user trajectories, which may contain sensitive information.
These empirical counts are then processed by a shuffle Privatizer, which then releases their privatized versions, denoted as $\visitxatotalprieasy_\horizon(\state,\action,\state'),\visitxaxtotalprieasy_\horizon(\state,\action)$, and $\rewcumprieasy_\horizon(\state,\action)$.
As formalized in Assumption~\ref{assp: private counts}, these private counts are guaranteed to closely approximate the true value, which is essential for reliable learning.
The detailed implementation of such Privatizer is justified in Section~\ref{sec: Privacy Guarantees}.
% Assumption \ref{assp: private counts} below guarantees that the private counts closely approximate the true counts, as justified in Section \ref{sec: Privacy Guarantees}.

\begin{assumption}[Private counts] \label{assp: private counts}
For any privacy budget $\pripara >0$ and failure probability $\delta\in(0,1)$, the private counts returned by Privatizer satisfy,
for some $\confcountxax > 0$, with probability at least $1-3\delta$, over all $\rbr{\horizon, \state,\action,\state^\prime}$,
$\vert\visitxatotalprieasy_\horizon(\state,\action)\!-\!\visitxatotaleasy_\horizon(\state,\action)\vert$ $\leq\!\! \confcountxa, \vert\visitxaxtotalprieasy_\horizon(\state,\action,\state')\!-\!\visitxaxtotaleasy_\horizon(\state,\action,\state')\vert\!\!\leq\!\!\confcountxax$, 
$\vert\rewcumprieasy_\horizon(\state,\action)-\rewcumeasy_\horizon(\state,\action)\vert \leq \confrewf$
and $\visitxatotalprieasy_\horizon(\state,\action)$ $ =\sum_{\state^\prime\in\statespace}$ $\visitxaxtotalprieasy_\horizon(\state,\action,\state')$ $\geq$ $\visitxatotaleasy_\horizon(\state,\action)$, $\visitxaxtotalprieasy_\horizon(\state,\action,\state') > 0$.
\end{assumption}

Based on these private counts, we define the estimated transition and reward functions for current stage:
% of $\transeasy$ and $\reward$ built using counts from current batch:
\begin{equation}    
\label{eq: private transition estimate}
\transprieasy_\horizon\rbr{\state^\prime\vert\state,\action}:= \frac{\visitxaxtotalprieasy_\horizon(\state,\action,\state')}{\visitxatotalprieasy_\horizon(\state,\action)}, \rewprieasy_\horizon(\state,\action):= \frac{\rewcumprieasy_\horizon(\state,\action)}{\visitxatotalprieasy_\horizon(\state,\action)}.
\end{equation}
A key property of our Privatizer, enforced by Assumption \ref{assp: private counts}, is that it maintains summation consistency, i.e., $\visitxatotalprieasy_\horizon(\state,\action) = \sum_{\state^\prime\in\statespace} \visitxaxtotalprieasy_\horizon(\state,\action,\state')$. 
This construction guarantees that our estimated transition function, $\transprieasy_\horizon\rbr{\cdot\vert\state,\action}$, is always a valid probability distribution.

\subsection{Crude Exploration in Algorithm \ref{algo: Crude Exploration}}
\begin{algorithm}[!htbp]
\caption{Crude Exploration} 
\textbf{Input:} Policy set $\policyset$, number of episodes $\batchlength$, privacy budget $\pripara$ and a Privatizer. \\
\textbf{Initialization:} 
$\batchlength_0 = \frac{\batchlength}{\horizontotal}, 
C_1=6, 
\infreqtuples=\emptyset,\iota = \log(2\horizontotal\actionsize\episodetotal/\delta)$.
$1_{\horizon,\state,\action}$ is a reward function $\reward'$ where $\reward'_{\horizon'}(\state',\action') = \ind\{(\horizon',\state',\action')=(\horizon,\state,\action)\}$. $\absorbstate$ is an additional absorbing state. $\tilde{P}^{\text{cru}}$ is a transition function over extended space $\statespace\cup\{\absorbstate\}\times\actionspace$, initialized arbitrarily.
% Set precision levels $\confcountxa$ for Privatizer. 
\\
\textbf{Output:} Infrequent tuples $\infreqtuples$, and crude estimated transition function $\tilde{P}^{\text{cru}}$, mixture policy $\policy_0$
\begin{algorithmic}[1]
\FOR{$\horizon=1$ to $\horizontotal$}
\STATE Set data set $\batchdata^\horizon = \emptyset$.
\FOR{$(\state,\action)\in\statespace\times\actionspace$}
\STATE $\policy^{\text{cru}}_{\horizon,\state,\action} = \argmax_{\policy\in\policyset} \valuef^\policy(1_{\horizon,\state,\action},\tilde{P}^{\text{cru}})$.
\ENDFOR
\STATE Run $\policy^{\text{cru}}_{\horizon}=$ uniform mixture of $\{\policy^{\text{cru}}_{\horizon,\state,\action}\}_{(\state,\action)}$ for $\batchlength_0$ episodes, and add the trajectories into dataset $\batchdata^{\horizon}$.
\STATE Send batch dataset $\batchdata^{\horizon}$ to the Privatizer.
% \STATE Receive private counts $\visitxatotalprieasy_\horizon(\state,\action,\state')$ and $\visitxaxtotalprieasy_\horizon(\state,\action)$ for all $(\state,\action,\state')$ in $\horizon$-th horizon from Privatizer.
\STATE Receive counts $\visitxaxtotalprieasy^{\text{cru}}_\horizon(\state,\action,\state')$ and $\visitxaxtotalprieasy^{\text{cru}}_\horizon(\state,\action)$ for all $(\state,\action,\state')$ in $\horizon$-th horizon from Privatizer.
% \STATE $\infreqtuples = \infreqtuples\cup\{(\horizon,\state,\action,\state')\vert \visitxaxtotalprieasy_\horizon(\state,\action,\state') \leq C_1\confcountxa\horizontotal^2\iota\}$
\STATE $\infreqtuples\! = \infreqtuples\cup\{(\horizon,\state,\action,\state')\vert \visitxaxtotalprieasy^{\text{cru}}_\horizon(\state,\action,\state')\! \leq\! C_1\confcountxa\horizontotal^2\iota\}\!.$
\STATE $\tilde{P}^{\text{cru}} = \text{Estimate Transition}(\visitxatotalprieasy^{\text{cru}}_\horizon,\infreqtuples,\absorbstate,\horizon,\tilde{P}^{\text{cru}}).$
\ENDFOR
\STATE Policy $\policy_0 \leftarrow$ uniform mixture of $\{\policy^{\text{cru}}_{\horizon,\state,\action}\}_{(\horizon,\state,\action)}$.
\RETURN $\{\infreqtuples,\tilde{P}^{\text{cru}},\policy_0\}$.
\end{algorithmic}
\label{algo: Crude Exploration}
\end{algorithm}

The first step, crude exploration (Algorithm \ref{algo: Crude Exploration}), is designed to build a preliminary, or ``crude," model of the environment $\crutrans_\horizon(\state'\vert\state,\action)$ for any tuple $(\horizon,\state,\action,\state')$. 
The main challenge is that some state-action pairs may be too rarely visited to gather sufficient statistics, especially under privacy constraints. 
Our strategy is to formally partition the system's state space by identifying a set of infrequently-visited tuples, denoted as $\infreqtuples$.
The key idea behind is that for the tuples not in $\infreqtuples$, we can get accurate estimates, and for tuples in $\infreqtuples$, they have little influence on the value estimate.

To identify this set, we perform a layer-wise exploration.
For each step $\horizon$ of episode, we explore every state-action pair $(\state,\action)$ by constructing a policy, $\policy^{\text{cru},\batchindx}_{\horizon,\state,\action}$, that maximizes the estimated probability of visiting that pair under the current model.
We then form a single exploration policy for the layer $\policy^{\text{cru},\batchindx}_{\horizon}$, a uniform mixture of these specialized policies.
This mixture policy is executed and the resulting trajectories $\batchdata^\horizon$ are batched and sent to the Privatizer.
These returned private counts are then used to empirically identify all tuples visited fewer than $\cO(\confcountxa\horizontotal^2\iota)$ times, which form the set $\infreqtuples$. 
% This set consists of all the tuples $(\horizon,\state,\action,\state')$ that are visited less than  in current exploration. 

% For the layer-wise exploration, we mainly explore all the state-action pair layer by layer. 
% Specifically, we construct $\policy^{\text{cru},\batchindx}_{\horizon,\state,\action}$ that has the highest visit probability for $(\horizon,\state,\action)$ under current private estimate $\crutrans$, and then deploy each $\policy^{\text{cru},\batchindx}_{\horizon,\state,\action}$ for the same number of episodes.
% The interaction history of the $\horizon$-th layer is then packed as a batch dataset $\batchdata^\horizon$, and sent to the Privatizer to generate the private counts and used to update the infrequent tuple set $\infreqtuples$ and crude estimate of the $\horizon$-th layer.
This set is then used to construct an absorbing MDP, a technique adapted from \cite{qiao2022sample}.
As fomally defined below, this method isolates the hard-to-reach states by redirecting their transition probabilities to a single absorbing state, $\absorbstate$.
\begin{definition}[Abosorbing MDP $\abtrans$]
\label{def: absorbing mdp}
    Given $\infreqtuples$ and $\transeasy$, $\forall(\horizon,\state,\action,\state')\notin\infreqtuples$, let $\abtrans_\horizon(\state'\vert\state,\action)=\transeasy_\horizon(\state'\vert\state,\action)$,
    $\forall(\horizon,\state,\action,\state')\in\infreqtuples$, $\abtrans_\horizon(\state'\vert\state,\action)=0$. 
    For any $(\horizon,\state,\action)\in[\horizontotal]\times\statespace\times\actionspace$,we define $\abtrans_\horizon(\absorbstate\vert\absorbstate,\action) = 1$ and $\abtrans_\horizon(\absorbstate\vert\state,\action) = 1 - \sum_{\state'\in\statespace:(\horizon,\state,\action,\state')\notin\infreqtuples} \abtrans_\horizon(\state'\vert\state,\action).$
\end{definition}

The final output is $\crutrans$ derived from Equation~\eqref{eq: private transition estimate}, a private and reliable crude estimate of the absorbing MDP.
This crude model allows the algorithm to focus its learning capacity on the reachable core of the environment and serves as the foundation for the next step.
An auxiliary mixture policy, $\policy_0$ is also generated for use in the subsequent phase.

\subsection{Fine Exploration in Algorithm \ref{algo: Fine Exploration} }
\begin{algorithm}[!htbp]
\caption{Fine Exploration} 
\textbf{Input:} Infrequent tuples $\infreqtuples$, crude estimated transition $\tilde{P}^{\text{cru}}$, policy set $\policyset$, auxiliary policy $\policy_0$, number of episodes $\batchlength$, privacy budget $\pripara$ and a Privatizer. \\
\textbf{Initialization:} 
$\batchdata=\emptyset$.
$1_{\horizon,\state,\action}$ is a reward function $\reward'$ where $\reward'_{\horizon'}(\state',\action') = \ind\{(\horizon',\state',\action')=(\horizon,\state,\action)\}$. 
Initialize refined transition estimate $\tilde{P}^{\text{ref}} = \tilde{P}^{\text{cru}}$. \\
\textbf{Output:} Refined estimated transition function $\tilde{P}^{\text{ref}}$ and reward function $\rewprieasy$.
\begin{algorithmic}[1]
\STATE Construct explorative policy $\policy^{\text{ref}}$ according to Eq.~\ref{eq: policy for fine exploration}.
\STATE Run $\policy^{\text{ref}}$ for $\batchlength$ episodes and $\policy_0$ for $\batchlength$ episodes, add trajectories to dataset $\batchdata$.
\STATE Send batch dataset $\batchdata$ to the Privatizer.
% \STATE Receive private counts $\visitxatotalprieasy_\horizon(\state,\action,\state')$, $\visitxaxtotalprieasy_\horizon(\state,\action)$ and $\rewcumprieasy_\horizon(\state,\action,\state')$ for all $(\horizon, \state,\action,\state')$ from Privatizer.
\STATE Receive private counts $\visitxaxtotalprieasy^{\text{ref}}_\horizon(\state,\action,\state')
$, $\visitxaxtotalprieasy^{\text{ref}}_\horizon(\state,\action)$ and $\rewcumprieasy^{\text{ref}}_\horizon(\state,\action,\state')$ for all $(\horizon, \state,\action,\state')$ from Privatizer.
\FOR{$\horizon\in[\horizontotal]$}
\STATE $\tilde{P}^{\text{ref}} = \text{Estimate Transition}(\visitxatotalprieasy^{\text{ref}}_\horizon,\infreqtuples,\absorbstate,\horizon,\tilde{P}^{\text{ref}}).$
\STATE $\rewprieasy = \text{Estimate Rewards}(\rewcumprieasy^{\text{ref}}_\horizon, \visitxatotalprieasy^{\text{ref}}_\horizon, \horizon, \rewprieasy)$.
\ENDFOR
\RETURN $\tilde{P}^{\text{ref}}$, $\rewprieasy$.
\end{algorithmic}
\label{algo: Fine Exploration}
\end{algorithm}

The second step, fine exploration (Algorithm \ref{algo: Fine Exploration}), leverages the crude model $\crutrans$ to design a more efficient and focused exploration strategy. 
The key insight is that $\crutrans$ is reliable enough to construct a uniformly exploration policy, $\policy^{\text{ref},\batchindx}$, as defined in Equation~\eqref{eq: policy for fine exploration}:
\begin{equation}\label{eq: policy for fine exploration}
    \policy^{\text{ref},\batchindx} = \argmin_{\policy\in\policyset_\batchindx} \sup_{\mu\in\policyset_\batchindx} \sum_{\horizon=1}^\horizontotal\sum_{\state,\action}\frac{\valuef^{\mu}(1_{\horizon,\state,\action},\crutrans)}{\valuef^\policy(1_{\horizon,\state,\action},\crutrans)}.
\end{equation}
This policy is specifically designed to minimize the ``worst-case coverage number" over the current set of active policies, which could provide uniform coverage of all remaining policies. 
To generate data, the users execute both this new policy $\policy^{\text{ref},\batchindx}$ and the auxiliary policy $\policy_0$ from crude exploration phase.
The combined dataset $\batchdata$ is then processed by the Privatizer to yield a refined private estimate $\reftrans$ and reward estimate $\rewprieasy^\batchindx$, which are more accurate than their crude counterparts.
Notably, different from the APEVE algorithm~\cite{qiao2022sample}, our fine exploration policy is specifically designed to achieve better policy coverage.
This design choice guarantees a more uniform value estimation accuracy for all candidate policies, leading to tighter confidence bounds and an improved final regret bound.

The final step of each stage is policy elimination, as illustrated in Figure~\ref{fig: policy elimination}. 
We evaluate all active policies in $\policyset_\batchindx$ using the reliable estimates $\reftrans$ and $\rewprieasy^\batchindx$, and update the active policy set by eliminating all policies whose upper confidence bound (UCB) falls below the lower confidence bound (LCB) of any other active policy.
The confidence intervals shrink along with the policy elimination step, ensuring that the optimal policy stays in the active policy set with high probability, which guarantees that the algorithm eventually converges to the optimal policy.
\begin{figure}[!htbp]
  \centering
  \includegraphics[width=0.3\textwidth]{./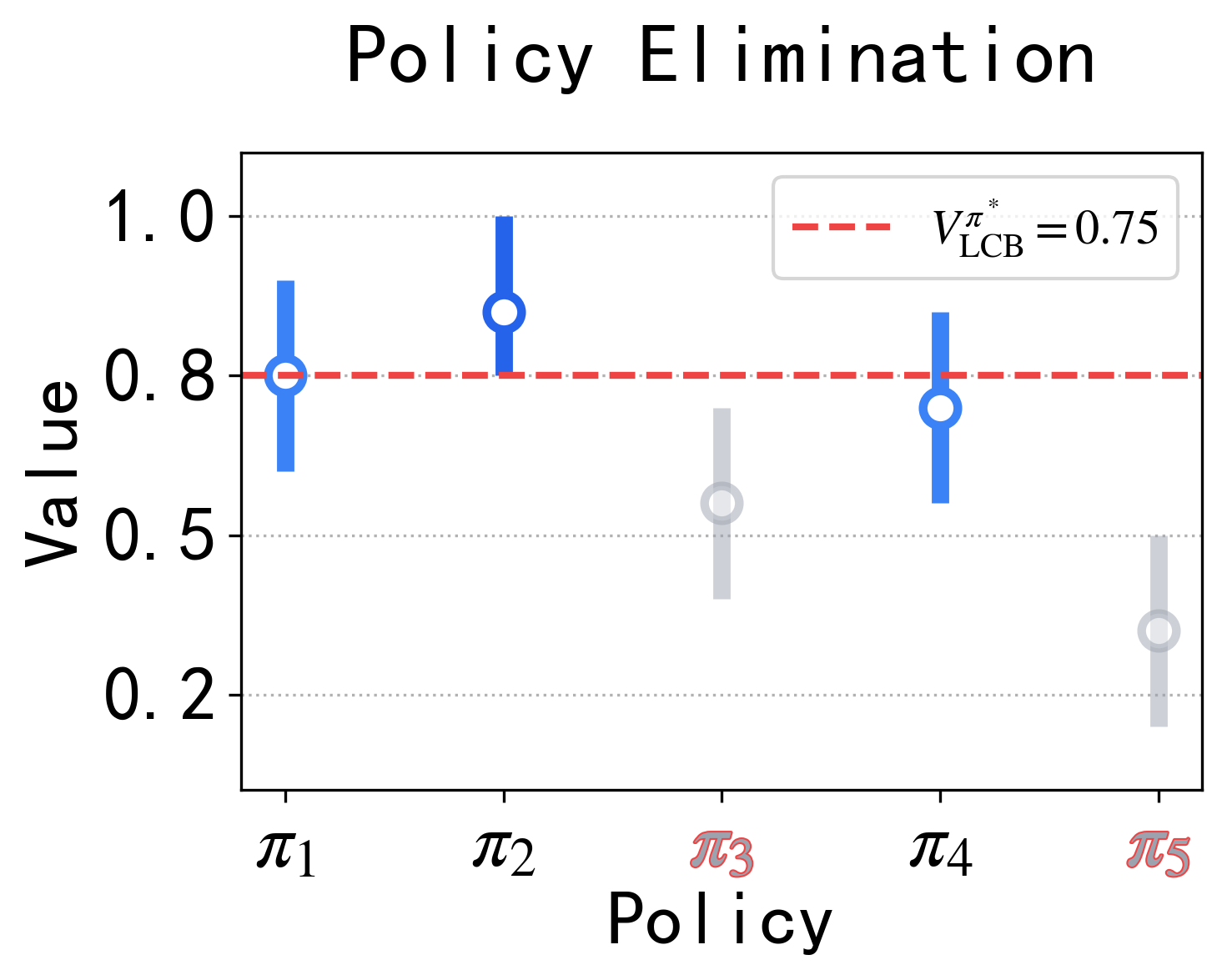}
      \caption{Example of policy elimination. The LCB of the estimated optimal policy $\policy_2$ is $0.75$. Policies $\policy_3$ and $\policy_5$, whose UCB are below this threshold, are identified as sub-optimal and consequently eliminated.}
  \label{fig: policy elimination}
\end{figure}

% Specially, this is done with the guarantee that $\sup_{\policy\in\policyset_\batchindx} \frac{\valuef^\policy(1_{\horizon,\state,\action},\abtrans)}{\mu_\horizon(\state,\action)} \leq 12\horizontotal\statesize\actionsize$, where $\mu$ is the true distribution of our data.
% In this way, we can get a refined private estimate $\reftrans$ for $\abtrans$, which allows $\valuef^\policy(\reward',\reftrans)$ to be an accurate estimate of $\valuef^\policy(\reward',\abtrans)$ simultaneously across all policies $\policy\in\policyset_\batchindx$ and any reward function $\reward'$.

% During the fine exploration, for each $(\horizon,\state,\action)$, we find policy $\policy_{\horizon, \state,\action}$ that visits $(\horizon,\state,\action)$ with highest probability. 
% After running $\policy_{\horizon, \state,\action}$ of all $(\horizon,\state,\action)$ for the same number of episodes, we compile the entire history into a batch dataset $\batchdata$ and pass it to the Privatizer.
% This process along with the same clipping process in crude exploration yields a refined private estimate $\reftrans$ for $\abtrans$ and reward function $\rewprieasy^\batchindx$.

\section{Performance Guarantee}
\label{sec: Regret Guarantee}
The main theoretical results of this paper are the regret bound and the policy switching cost for our SDP-PE algorithm.
\begin{theorem}[Performance guarantee of SDP-PE]
\label{thm: Regret bound of PBPE}
    For any privacy budget $\pripara>0$ and failure probability $\delta\in(0,1)$, and any Privatizer that satisfies Assumptions \ref{assp: private counts}, with probability at least $1-9\delta$, the regret of SDP-PE (Algorithm \ref{algo: Private Batch-based Policy Elimination}) is
    \begin{equation}\notag
    \begin{aligned}
        \regret \leq \tilde{\cO}\rbr{\sqrt{\statesize\actionsize\horizontotal^3\episodetotal} + \statesize^3\actionsize\horizontotal^5\confcountxa}.
    \end{aligned}
    \end{equation}
    Furthermore, the total number of policy switches during the $\episodetotal$ episodes is $N_{switch} = \cO(\horizontotal\log \episodetotal)$.
\end{theorem}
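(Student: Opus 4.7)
The plan is to decompose the regret stage-by-stage and control each stage with a uniform value-estimation bound that holds simultaneously for all surviving policies. Let $b \in [B]$ index the stages, with $B = O(\log K)$ and batch lengths $L_b = 2^b$. First I would set up the good event: condition on Assumption~\ref{assp: private counts} holding in every stage (a union bound over $B$ stages absorbs a factor of $\log K$ into $\iota$ and costs $O(B\delta)$ in failure probability, giving the $9\delta$ slack). On this event, every private count released by the Privatizer in stage $b$ is within $E_{\varepsilon,\delta}$ of the true count, and the induced $\widetilde{P}^{\text{ref},b}$ is a valid probability distribution.

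Next I would establish the core deterministic lemma: for every stage $b$ and every $\pi \in \phi_b$,
\begin{equation}\notag
\bigl| V^{\pi}(\widetilde{r}^b, \widetilde{P}^{\text{ref},b}) - V^{\pi}(r, P) \bigr| \;\le\; C\Bigl(\sqrt{\tfrac{SAH^3\iota}{L_b}} + \tfrac{S^3 A H^5 E_{\varepsilon,\delta}\,\iota}{L_b}\Bigr).
\end{equation}
This is where the crude exploration and absorbing-MDP construction are essential. The crude phase guarantees that for every $(h,s,a,s')\notin W^b$, the policy $\pi^{\text{cru}}_{h,s,a}$ has near-maximal visitation probability under $\widetilde{P}^{\text{cru},b}$, so after the fine phase the transition estimate is accurate in total variation on frequent tuples. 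For tuples in $W^b$ the absorbing redirection bounds their total contribution to any value by $O(S^3 A H^5 E_{\varepsilon,\delta}/L_b)$, since the private threshold $C_1 E_{\varepsilon,\delta}H^2\iota$ controls the reachability mass that the absorbing MDP discards. Coupled with the uniform coverage property of $\pi^{\text{ref},b}$ from \eqref{eq: policy for fine exploration}, which guarantees that $V^{\mu}(1_{h,s,a},\widetilde{P}^{\text{cru},b})/V^{\pi^{\text{ref},b}}(1_{h,s,a},\widetilde{P}^{\text{cru},b})$ is $O(SAH)$ for every active $\mu$, a simulation lemma (e.g.\ the performance-difference identity summed over $h$) yields the displayed bound.

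With this value-estimation bound in hand, I would argue correctness of elimination by induction on $b$: if $\pi^\star \in \phi_b$, then $\sup_{\mu\in\phi_b} V^{\mu}(\widetilde{r}^b,\widetilde{P}^{\text{ref},b}) - V^{\pi^\star}(\widetilde{r}^b,\widetilde{P}^{\text{ref},b})$ is at most twice the displayed estimation error, so $\pi^\star$ does not meet the elimination threshold in line~9 and remains in $\phi_{b+1}$. Symmetrically, any $\pi\in\phi_{b+1}$ satisfies $V^{\pi^\star}(r,P) - V^{\pi}(r,P) \le 4C(\sqrt{SAH^3\iota/L_b} + S^3 A H^5 E_{\varepsilon,\delta}\iota/L_b)$, i.e.\ all survivors have controlled sub-optimality.

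Finally I would sum stagewise regret. Stage $b$ deploys $O(L_b)$ episodes, and survivors carry instantaneous sub-optimality inherited from stage $b-1$, so
\begin{equation}\notag
\mathrm{Regret}(K) \;\lesssim\; \sum_{b=1}^{B} L_b \cdot \Bigl(\sqrt{\tfrac{SAH^3\iota}{L_{b-1}}} + \tfrac{S^3 A H^5 E_{\varepsilon,\delta}\iota}{L_{b-1}}\Bigr).
\end{equation}
Because $L_b = 2 L_{b-1}$, the first sum telescopes geometrically to $O(\sqrt{L_B}) = O(\sqrt{K})$, while the second sum collapses to $O(B) = O(\log K)$, yielding the claimed $\widetilde{O}(\sqrt{SAH^3 K} + S^3 A H^5 E_{\varepsilon,\delta})$ bound. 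The earliest stages (where $L_b$ is small) contribute only $O(L_b \cdot H)$ trivially bounded regret, which is absorbed into lower-order terms. The main obstacle I anticipate is the value-estimation lemma: carefully tracking how the noise $E_{\varepsilon,\delta}$ propagates through the absorbing MDP and through $H$ steps of Bellman recursion, while simultaneously exploiting the uniform-coverage design of $\pi^{\text{ref},b}$ to keep the leading term at the optimal $\sqrt{SAH^3/L_b}$ rate rather than the naive $SH\sqrt{S/L_b}$ rate. Everything else is geometric-series bookkeeping.
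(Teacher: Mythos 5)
Your proposal is correct and follows essentially the same route as the paper: the same stagewise decomposition $\regret \le 2\horizontotal\batchlength_1 + \sum_{\batchindx=2}^{\batchnumber} 2\batchlength_\batchindx \cdot 4\accuracy_{\batchindx-1}$, the same uniform value-estimation lemma proved by bridging through the absorbing MDP (bias via the reachability mass of $\infreqtuples$, variance via the coverage property of $\policy^{\text{ref},\batchindx}$ and a simulation lemma), and the same geometric-series bookkeeping exploiting $\batchlength_\batchindx = 2\batchlength_{\batchindx-1}$. The only looseness is in the exact failure-probability accounting across stages, which is a bookkeeping detail rather than a gap.
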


% \sj{is it needed to mention APEVE?}
% The proof parallels the arguments in \cite{qiao2022sample} for the analysis of \texttt{APEVE}.
% The key difference lies in adjusting the uniform value confidence bound for all active deterministic policies to account for the noise introduced by the private counts. 
The bound on the policy switching cost follows directly from the exponential batching schedule and the properties of geometric series.
The core of the regret analysis is to construct a uniform policy evaluation bound that holds for all active policies at each stage of the algorithm with proof details in Appendix~\ref{appendix: proof of regret-theorem}.
In each stage $\batchindx$, the active policy set is $\policyset_\batchindx$.
Assume that for any $\policy\in\policyset_\batchindx$, the value $\valuef^\policy(\reward,\transeasy)$ can be estimated up to an error $\accuracy_\batchindx$ by $\valuef^\policy(\rewprieasy^\batchindx,\reftrans)$, then if we eliminate all policies that are at least $2\accuracy_\batchindx$ sub-optimal in the sense of  $\max_{\policy\in\policyset_\batchindx}\valuef^\policy(\rewprieasy^\batchindx,\reftrans)$, the optimal policy will never be eliminated and all remaining policies will be at most $4\accuracy_\batchindx$ sub-optimal.
Summing the regret across all stages yields the total regret:
\begin{equation}
\begin{aligned}
\label{eq: regret decomposition with stages}
    \regret\leq 3\horizontotal\batchlength_1 + \sum_{\batchindx=2}^\batchnumber 3\batchlength_\batchindx \cdot 4 \accuracy_{\batchindx-1}.
\end{aligned}
\end{equation}
The following lemma gives an upper bound of $\accuracy_{\batchindx}$ using our private estimate $\reftrans$ and $\rewprieasy^\batchindx$ of the absorbing MDP.
\begin{lemma}\label{lemma: total accuracy error}
With probability $1-9\delta$, it holds that for any stage $\batchindx$ and $\policy\in\policyset_{\batchindx}$,
$$\vert\valuef^\policy\!(\reward,\!\transeasy) - \valuef^\policy\!(\rewprieasy^\batchindx\!,\reftrans)\vert \!\leq\!\tilde{\cO}\bigg(\!\sqrt{\frac{\statesize\actionsize\horizontotal^3}{\batchlength_{\batchindx}}} + \frac{\statesize^3\actionsize\horizontotal^5\confcountxa}{\batchlength_{\batchindx}}\!\!\bigg)\!.$$
\end{lemma}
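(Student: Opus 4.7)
The plan is to pass through the true absorbing MDP $\abtrans$ induced by the current set of infrequent tuples $\infreqtuples^\batchindx$, splitting the error with a triangle inequality:
\[
|\valuef^\policy(\reward,\transeasy)-\valuef^\policy(\rewprieasy^\batchindx,\reftrans)|\le \underbrace{|\valuef^\policy(\reward,\transeasy)-\valuef^\policy(\reward,\abtrans)|}_{\text{(I) absorbing error}} + \underbrace{|\valuef^\policy(\reward,\abtrans)-\valuef^\policy(\rewprieasy^\batchindx,\reftrans)|}_{\text{(II) estimation error}}.
\]
Term (I) is the price of ignoring infrequent transitions, while term (II) is pure estimation error on a model whose coverage has already been secured by the crude and fine exploration steps.

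For (I), I would use the standard absorbing-MDP inequality $|\valuef^\policy(\reward,\transeasy)-\valuef^\policy(\reward,\abtrans)|\le \horizontotal\cdot \prob{}_{\transeasy}^{\policy}[\exists\,\horizon:(\horizon,\state_\horizon,\action_\horizon,\state_{\horizon+1})\in\infreqtuples^\batchindx]$. A tuple enters $\infreqtuples^\batchindx$ only when its private crude count is at most $C_1\confcountxa\horizontotal^2\iota$, so Assumption~\ref{assp: private counts} immediately yields an $\widetilde{\cO}(\confcountxa\horizontotal^2\iota)$ bound on the true empirical count of that tuple inside the crude batch. Since each $\policy^{\text{cru},\batchindx}_{\horizon,\state,\action}$ was chosen to maximize the visitation probability of $(\horizon,\state,\action)$ under $\crutrans$, a Bernstein/concentration argument (combined with the fact that the uniform mixture $\policy^{\text{cru},\batchindx}_\horizon$ places mass $1/(\statesize\actionsize)$ on each component) transfers this empirical bound into a bound on the true probability that any $\policy\in\policyset_\batchindx$ visits a tuple of $\infreqtuples^\batchindx$. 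Summing over the at most $\statesize^2\actionsize\horizontotal$ tuples and using $\batchlength_0=\batchlength_\batchindx/\horizontotal$ delivers an additive contribution of the claimed $\widetilde{\cO}(\statesize^3\actionsize\horizontotal^5\confcountxa/\batchlength_\batchindx)$ form.

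For (II), I would invoke the simulation (value-difference) lemma on the absorbing MDP,
\[
\valuef^\policy(\reward,\abtrans)-\valuef^\policy(\rewprieasy^\batchindx,\reftrans)=\sum_{\horizon=1}^\horizontotal\expect_{(\state,\action)\sim d^\policy_\horizon(\abtrans)}\Big[(\reward_\horizon-\rewprieasy^\batchindx_\horizon)(\state,\action)+\langle(\abtrans_\horizon-\reftrans_\horizon)(\cdot|\state,\action),\,\valuef_{\horizon+1}^\policy(\rewprieasy^\batchindx,\reftrans)\rangle\Big].
\]
Since $d^\policy_\horizon(\abtrans)$ is supported only on frequent tuples by construction of $\abtrans$, Assumption~\ref{assp: private counts} combined with an empirical Bernstein inequality gives, per frequent $(\horizon,\state,\action)$, reward error $\widetilde{\cO}(\sqrt{\iota/\widetilde{N}^{\text{ref},\batchindx}_\horizon(\state,\action)}+\confcountxa/\widetilde{N}^{\text{ref},\batchindx}_\horizon(\state,\action))$ and $\ell_1$-transition error $\widetilde{\cO}(\sqrt{\statesize\iota/\widetilde{N}^{\text{ref},\batchindx}_\horizon(\state,\action)}+\statesize\confcountxa/\widetilde{N}^{\text{ref},\batchindx}_\horizon(\state,\action))$. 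The key structural input is the minimax design of $\policy^{\text{ref},\batchindx}$ in~\eqref{eq: policy for fine exploration}: it caps the worst-case coverage ratio $d^\policy_\horizon(\abtrans)(\state,\action)/d^{\policy^{\text{ref},\batchindx}}_\horizon(\abtrans)(\state,\action)$ by a $\mathrm{poly}(\statesize,\actionsize,\horizontotal)$ factor uniformly in $\policy\in\policyset_\batchindx$, which translates (via Assumption~\ref{assp: private counts}) into $\widetilde{N}^{\text{ref},\batchindx}_\horizon(\state,\action)\gtrsim \batchlength_\batchindx\cdot d^\policy_\horizon(\abtrans)(\state,\action)/\mathrm{poly}(\statesize,\actionsize,\horizontotal)$. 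Plugging these bounds into the simulation lemma, handling the square-root terms by Cauchy--Schwarz (or a sharper total-variance argument \`a la Azar--Osband--Munos) and the noise terms by direct summation, yields the claimed $\sqrt{\statesize\actionsize\horizontotal^3/\batchlength_\batchindx}+\statesize^3\actionsize\horizontotal^5\confcountxa/\batchlength_\batchindx$ bound.

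The main obstacle I foresee is ensuring that the privacy noise $\confcountxa$ enters only as an \emph{additive} $\widetilde{\cO}(\confcountxa/\batchlength_\batchindx)$ term, rather than multiplying the $\sqrt{1/\batchlength_\batchindx}$ statistical fluctuations; this is precisely what the crude-exploration threshold $C_1\confcountxa\horizontotal^2\iota$ is engineered to guarantee, since every \emph{frequent} $(\horizon,\state,\action)$ then has $\widetilde{N}^{\text{ref},\batchindx}_\horizon(\state,\action)\gg\confcountxa$ and hence $\confcountxa/\widetilde{N}^{\text{ref},\batchindx}_\horizon(\state,\action)\ll 1$. A secondary subtlety is that the simulation lemma on $\abtrans$ requires $\reftrans_\horizon(\cdot|\state,\action)$ to be a valid probability distribution on $\statespace\cup\{\absorbstate\}$; this is exactly the purpose of the summation-consistency clause in Assumption~\ref{assp: private counts}, which should be invoked carefully to pass the simulation decomposition through without producing stray probability mass.
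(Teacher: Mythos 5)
Your proposal is correct and follows essentially the same route as the paper: it uses the absorbing MDP $\abtrans$ as the intermediate bridge, bounds the bias term via the probability of hitting $\infreqtuples$ (using the crude-policy argmax property, the $1/(\statesize\actionsize)$ mixture mass, and the private-count threshold), and bounds the estimation term via the simulation lemma combined with the coverage guarantee of $\policy^{\text{ref},\batchindx}$ and a Cauchy--Schwarz/total-variance argument. The only cosmetic difference is that you fold the reward error into a single simulation-lemma decomposition, whereas the paper splits it off as a separate lower-order term.
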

% As shown in \cite{jin2020reward}, if each $(\horizon,\state,\action)$ tuple is visited frequently enough, the empirical transition is sufficient for a uniform approximation to  $\valuef^\policy(\reward,\transeasy)$.
By substituting $\accuracy_\batchindx$ in Equation~\ref{eq: regret decomposition with stages}, we will have
\begin{equation}
    \regret\leq \tilde{\cO}\left(\sum_{\batchindx=2}^\batchnumber \batchlength_\batchindx \cdot \left( \sqrt{\frac{\statesize\actionsize\horizontotal^3}{\batchlength_{\batchindx-1}}} + \frac{\statesize^3\actionsize\horizontotal^5\confcountxa}{\batchlength_{\batchindx-1}} \right)
    \right).
\end{equation}
Here, the carefully designed exponential batch schedule becomes essential.
The total regret is a sum over the stages, composed of a non-private term proportional to $\batchlength_{\batchindx}/\sqrt{\batchlength_{\batchindx-1}}$ and a private cost term proportional to $\batchlength_{\batchindx}/\batchlength_{\batchindx-1}$.
By choosing an exponential schedule where $\batchlength_{\batchindx}/\sqrt{\batchlength_{\batchindx-1}}$ forms a geometric series and $\batchlength_{\batchindx}/\batchlength_{\batchindx-1}$ is a constant, we get a great balance between privacy cost and non-private regret, i.e.,~an optimal non-private term and an additive private cost term on regret bound in Theorem~\ref{thm: Regret bound of PBPE}.
% One could use the property arithmetic sequence and then generate the optimal bound on the non-private term and a logarithmic bound with respect to $\episodetotal$ on the private cost.
Any other schedule, such as a fixed or polynomially increasing batch size, would fail to balance both components, resulting in a suboptimal regret dominated by either the non-private or the private cost.
% One could notice that either a fixed, linear, or any other increasing batch schedule would result in a sub-optimal private cost or sub-optimal non-private regret with respect to $\episodetotal$.

The proof reduces to proving Lemma~\ref{lemma: total accuracy error}.
We leverage the absorbing MDP $\abtrans$ as the key intermediate bridge, and then decompose this total error into three components, ``Model Bias'' $\vert \valuef^\policy(\reward,\transeasy) - \valuef^\policy(\reward,\abtrans) \vert$,  ``Reward Error'' $\vert\valuef^\policy(\reward,\abtrans) - \valuef^\policy(\rewprieasy^\batchindx,\abtrans)\vert$, and ``Model Variance'' $\vert\valuef^\policy(\rewprieasy^\batchindx,\abtrans) - \valuef^\policy(\rewprieasy^\batchindx,\reftrans)\vert$.
We will now sketch the bounds for ``Model Bias'' and ``Model Variance''. The Reward Error can be bounded as a lower-order term using a similar analysis.

\subsection{``Model Bias'': Difference between $\transeasy$ and $\abtrans$}
\label{subsection: ``Model Bias''}
To analyze the difference between the true MDP with $\transeasy$ and the absorbing MDP with $\abtrans$, we rely on the the properties of the crude transition estimate $\crutrans$, and then reduce the problem to proving it difficult to visit those infrequently visited tuples.

\textbf{Property of $\crutrans$.}
In $\batchindx$-th stage, if the private visitation count $\visitxaxtotalprieasy^{\text{cru},\batchindx}_\horizon(\state,\action,\state')$ for a tuple $(\horizon,\state,\action,\state')$ exceeds $\cO(\confcountxa\horizontotal^2\iota)$, the following approximation result holds with high probability,
$(1-\frac{1}{\horizontotal}) \cdot \crutrans_\horizon(\state'\vert\state,\action) \leq \abtrans_\horizon(\state'\vert\state,\action) \leq (1+\frac{1}{\horizontotal})  \cdot \crutrans_\horizon(\state'\vert\state,\action)$, which can be proven using Bernstein's inequality.
% Given the definition of the set $\infreqtuples$ and absorbing MDP $\abtrans,\crutrans$, the above equation holds for any $(\horizon,\state,\action,\state')$. 
By the construction of $\infreqtuples$, and $\abtrans,\crutrans$, the above equation holds for any $(\horizon,\state,\action,\state')$. 
Consequently, for any $(\horizon,\state,\action)\in[\horizontotal]\times\statespace\times\actionspace,\policy\in\policyset_\batchindx$, we have
$$\frac{1}{4}\valuef^\policy(1_{\horizon,\state,\action},\crutrans) \leq \valuef^\policy(1_{\horizon,\state,\action},\abtrans)\leq 3\valuef^\policy(1_{\horizon,\state,\action},\crutrans).$$
% \todo{Because $\policy_{\horizon,\state,\action}=\argmax_{\policy\in\policyset_\batchindx}\valuef^\policy(1_{\horizon,\state,\action},\crutrans)$,
% $$\valuef^{\policy_{\horizon,\state,\action}}(1_{\horizon,\state,\action},\abtrans) \geq \frac{1}{12}\sup_{\policy\in\policyset_\batchindx}\valuef^\policy(1_{\horizon,\state,\action},\abtrans),$$
% which shows that $\policy_{\horizon,\state,\action}$ efficiently covers the tuple $(\horizon,\state,\action)$.}

\textbf{Uniform bound on $\vert\valuef^\policy(\reward,\transeasy) - \valuef^\policy(\reward,\abtrans)\vert$.} 
Next, we aim to bound $\sup_{\pi\in\policyset_\batchindx}\sup_{\reward'}\vert\valuef^\policy(\reward',\transeasy) - \valuef^\policy(\reward',\abtrans)\vert$.
This leads to bounding $\sup_{\policy\in\policyset_\batchindx}\prob_\policy[\badtrajs]$, where the event $\badtrajs$ occurs when the trajectory visits any infrequently visited tuples in $\infreqtuples$, which describes the main difference between absorbing MDP and the true MDP.
% By the definition of $\infreqtuples$, we can show that these tuples are difficult to visit for any policy in $\policyset_\batchindx$.
By the definition of $\infreqtuples$, we can show that these tuples are difficult to visit for any policy in $\policyset_\batchindx$, i.e.,
with high probability, $\sup_{\policy\in\policyset_\batchindx}\prob_\policy[\badtrajs]\leq \tilde{\cO}\rbr{\frac{\statesize^3\actionsize\horizontotal^4\confcountxa}{\batchlength_\batchindx}}$.
% The following lemma formalizes this observation:
% \begin{lemma} \label{lemma: bound of bad trajectory event}
%     With high probability, $\sup_{\policy\in\policyset}\prob_\policy[\badtrajs]\leq O\rbr{\frac{\confcountxa\statesize^3\actionsize^2\horizontotal^4\iota}{\batchlength}}$.
% \end{lemma}
% With Lemma \ref{lemma: bound of bad trajectory event}, 
Using this observation, we bound the bias with proof in Appendix~\ref{appen: proof of lemma bound of bias term}.
\begin{lemma} \label{lemma: bound of the bias term}
    With high probability, for any policy $\policy\in\policyset_\batchindx$ and reward function $\reward'$, it holds that 
    $$0\leq \valuef^\policy(\reward',\transeasy) - \valuef^\policy(\reward',\abtrans) \leq \widetilde{\cO}\rbr{\frac{\statesize^3\actionsize\horizontotal^5\confcountxa}{\batchlength_\batchindx}}.$$
\end{lemma}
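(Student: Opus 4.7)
\medskip

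\noindent\textbf{Proof proposal.} The plan is to reduce the model-bias bound to controlling the probability that a trajectory ever touches a tuple in $\infreqtuples$, and then to bound that probability uniformly over $\policyset_\batchindx$ using the crude estimate $\crutrans$ as a bridge. For the lower bound, I would observe that $\abtrans$ is obtained from $\transeasy$ by re-routing every transition in $\infreqtuples$ to the reward-free absorbing state $\absorbstate$; since $\reward'\ge 0$ everywhere and $\reward'$ is zero at $\absorbstate$ by convention, any sample path in the coupled construction earns at least as much reward under $\transeasy$ as under $\abtrans$, giving $\valuef^\policy(\reward',\transeasy)\ge \valuef^\policy(\reward',\abtrans)$.

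For the upper bound I would use a standard coupling: construct $\transeasy$- and $\abtrans$-trajectories on a common probability space that agree step-by-step until the first time an infrequent tuple $(\horizon,\state,\action,\state')\in\infreqtuples$ is traversed. On the complementary event $\overline{\badtrajs}$ the two value accumulations are identical, while on $\badtrajs$ the difference is trivially bounded by $\horizontotal$ (the maximum cumulative reward). Hence
\begin{equation*}
\valuef^\policy(\reward',\transeasy)-\valuef^\policy(\reward',\abtrans)\;\le\;\horizontotal\cdot\sup_{\policy\in\policyset_\batchindx}\prob_\policy[\badtrajs],
\end{equation*}
and the task is reduced to proving $\sup_{\policy\in\policyset_\batchindx}\prob_\policy[\badtrajs]\le\widetilde{\cO}(\statesize^3\actionsize\horizontotal^4\confcountxa/\batchlength_\batchindx)$, which is exactly the visitation bound already flagged in Section~\ref{subsection: ``Model Bias''}.

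To establish that visitation bound, I would union-bound $\prob_\policy[\badtrajs]$ over the at most $\statesize^2\actionsize\horizontotal$ tuples in $\infreqtuples$, so that it suffices to control $w^\policy_\horizon(\state,\action,\state'):=\prob_\policy[\state_\horizon=\state,\action_\horizon=\action,\state_{\horizon+1}=\state']$ for each such tuple and each $\policy\in\policyset_\batchindx$. By construction $\policy^{\mathrm{cru}}_{\horizon,\state,\action}$ maximizes $\valuef^\policy(1_{\horizon,\state,\action},\crutrans)$, so the $\crutrans$-visitation mass at $(\horizon,\state,\action)$ under any $\policy\in\policyset_\batchindx$ is dominated by that under $\policy^{\mathrm{cru}}_{\horizon,\state,\action}$. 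Using the two-sided multiplicative equivalence $\frac14\valuef^\policy(1_{\horizon,\state,\action},\crutrans)\le\valuef^\policy(1_{\horizon,\state,\action},\abtrans)\le 3\valuef^\policy(1_{\horizon,\state,\action},\crutrans)$ recalled in Section~\ref{subsection: ``Model Bias''}, together with the fact that $\abtrans$ and $\transeasy$ coincide outside of $\infreqtuples$, this transfers into a bound on the true-MDP visitation probability under $\policy$ in terms of the true-MDP visitation probability under $\policy^{\mathrm{cru}}_{\horizon,\state,\action}$, up to a factor $\horizontotal$ that arises from summing absorbing-path contributions along the $\horizon$ preceding steps. Finally, the crude policy $\policy^{\mathrm{cru}}_\horizon$ is the uniform mixture over $\statesize\actionsize$ such specialized policies and is executed for $\batchlength_0=\batchlength_\batchindx/\horizontotal$ episodes, so the expected true count at $(\horizon,\state,\action,\state')$ equals $(\batchlength_0/\statesize\actionsize)\cdot w^{\policy^{\mathrm{cru}}_{\horizon,\state,\action}}_\horizon(\state,\action,\state')$. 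Invoking Assumption~\ref{assp: private counts} and a Bernstein-type concentration inequality, membership in $\infreqtuples$ (i.e.\ $\visitxaxtotalprieasy^{\mathrm{cru},\batchindx}_\horizon(\state,\action,\state')\le C_1\confcountxa\horizontotal^2\iota$) forces $w^{\policy^{\mathrm{cru}}_{\horizon,\state,\action}}_\horizon(\state,\action,\state')\lesssim \statesize\actionsize\horizontotal^3\confcountxa\iota/\batchlength_\batchindx$. Chaining these pieces and multiplying by $|\infreqtuples|\le\statesize^2\actionsize\horizontotal$ yields the desired $\widetilde{\cO}(\statesize^3\actionsize\horizontotal^4\confcountxa/\batchlength_\batchindx)$, and one more $\horizontotal$ factor from the coupling step produces the claim.

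The main obstacle I anticipate is the triple book-keeping between $\transeasy$, $\abtrans$, and $\crutrans$: the crude policies are optimized on $\crutrans$, yet the visitation event $\badtrajs$ lives on $\transeasy$, while the confidence guarantees of Assumption~\ref{assp: private counts} attach to empirical counts under $\transeasy$ (through the samples). Carefully propagating the one-step multiplicative equivalence into an $\horizon$-step bound without losing an $\exp(\horizon)$ factor—by using the ``$(1\pm 1/\horizontotal)$'' form of the approximation and compounding it along horizon—will be the most delicate accounting step, and is the only place where the extra factor of $\horizontotal$ beyond the trivial count $\statesize^2\actionsize\horizontotal$ of infrequent tuples enters the final bound.
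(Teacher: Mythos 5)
Your overall strategy coincides with the paper's: the lower bound via the reward-free absorbing state, the reduction $\valuef^\policy(\reward',\transeasy)-\valuef^\policy(\reward',\abtrans)\le\horizontotal\cdot\sup_{\policy\in\policyset_\batchindx}\prob_\policy[\badtrajs]$, and the use of the multiplicative equivalence between $\crutrans$ and $\abtrans$ to exploit the crude-exploration counts. However, there is a genuine gap in the central step. You union-bound $\prob_\policy[\badtrajs]$ by the \emph{unrestricted} true-MDP visitation probabilities $w^\policy_\horizon(\state,\action,\state')=\prob_\policy[\state_\horizon=\state,\action_\horizon=\action,\state_{\horizon+1}=\state']$, but every tool available --- the argmax property of $\policy^{\mathrm{cru}}_{\horizon,\state,\action}$, the $(1\pm 1/\horizontotal)$ equivalence, and the count guarantees from crude exploration --- only controls the \emph{absorbing-MDP} visitation probability $\valuef^\policy(1_{\horizon,\state,\action},\abtrans)$, which lower-bounds $\valuef^\policy(1_{\horizon,\state,\action},\transeasy)$ with a gap equal to the probability of reaching $(\horizon,\state,\action)$ through an earlier infrequent tuple. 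That gap is itself a bad-trajectory probability, so your ``transfer into a bound on the true-MDP visitation probability under $\policy$'' is circular as stated, and the vague ``factor $\horizontotal$ from summing absorbing-path contributions'' does not resolve it. The paper's fix is to decompose $\badtrajs$ into the disjoint events $\badtrajs_\horizon$ that the \emph{first} infrequent tuple is traversed at step $\horizon$; on $\badtrajs_\horizon$ the prefix avoids $\infreqtuples$, so $\prob[\badtrajs_\horizon\mid\transeasy,\policy]=\prob[\badtrajs_\horizon\mid\abtrans,\policy]$ holds \emph{exactly} (via a bijection of truncated trajectories), with no extra factor of $\horizontotal$, and the entire argument then lives on $\abtrans$ where your chain of inequalities is valid. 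You need this decomposition for the proof to go through.

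Two smaller points. First, your arithmetic does not close: $|\infreqtuples|\le\statesize^2\actionsize\horizontotal$ times the per-tuple bound $\statesize\actionsize\horizontotal^3\confcountxa\iota/\batchlength_\batchindx$ gives $\statesize^3\actionsize^2\horizontotal^4$, not $\statesize^3\actionsize\horizontotal^4$; the extra $\actionsize$ disappears only because the policies in $\policyset_\batchindx$ are deterministic, so for each $(\horizon,\state)$ only the action $\policy_\horizon(\state)$ carries visitation mass (the paper encodes this as a $\max_\action$ rather than a $\sum_\action$). Second, on the positive side, by bounding the joint quantity $\valuef^{\policy}(1_{\horizon,\state,\action},\abtrans)\,\transeasy_\horizon(\state'\vert\state,\action)$ directly from the smallness of the crude count, you avoid the paper's two-case analysis on whether $\sup_{\policy}\valuef^\policy(1_{\horizon,\state},\abtrans)$ is large enough for the conditional probability $\prob[\infreqtuples\vert(\horizon,\state,\action)]$ to be controlled --- a legitimate simplification once the first-hitting decomposition is in place.
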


\subsection{``Model Variance'': Difference between $\abtrans$ and $\reftrans$}
This term bounds quantifies the deviation of our refined estimate, $\reftrans$, deviates from the absorbing MDP, $\abtrans$. 
By employing the standard simulation lemma~\cite{dann2017unifying}, this variance term can be approximately bounded by $\widetilde{\cO}\left(\sqrt{\sum_{\horizon,\state,\action} \frac{\valuef^{\policy}(1_{\horizon,\state,\action},\abtrans)}{\visitxatotalref}}\right)$.
These visitation counts are inherently linked to the exploration policy, specifically, $\visitxatotalref \propto \valuef^{\policy^{\text{ref},\batchindx}}(1_{\horizon,\state,\action},\abtrans)$. 
Thus, the key to bounding this term lies in the fine exploration policy $\policy^{\text{ref},\batchindx}$. 
By its design, this policy minimizes the worst-case coverage number over all active policies, defined as $\max_{\policy\in\policyset_\batchindx} \sum_{\horizon,\state,\action}\frac{\valuef^{\policy}(1_{\horizon,\state,\action},\abtrans)}{\valuef^{\policy^{\text{ref},\batchindx}}(1_{\horizon,\state,\action},\abtrans)}$, with a order of $\cO(\statesize\actionsize\horizontotal)$. 
This minimization ensures sufficient exploration of all relevant state-action pairs for policy evaluation, leading to the following guarantee (proof in Appendix~\ref{appen: proof of lemma bound of variance term}).
\begin{lemma}\label{lemma: bound of the variance term}
    With high probability, for any policy $\policy\in\policyset_\batchindx$ and reward function $\reward'$, it holds that 
    $$\vert\valuef^\policy\!(\reward'\!,\!\abtrans) - \valuef^\policy\!(\reward'\!,\!\reftrans)\vert \!\leq \! \widetilde{\cO}\!\bigg(\!\sqrt{\frac{\statesize\actionsize\horizontotal^3}{\batchlength_\batchindx}} + \frac{\statesize^2\actionsize\horizontotal^2\confcountxa}{\batchlength_\batchindx}\!\bigg)\!.$$
\end{lemma}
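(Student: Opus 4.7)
The plan is to first invoke the standard simulation lemma (see \cite{dann2017unifying}) to rewrite the value gap as an occupancy-weighted sum of one-step transition errors:
\begin{align*}
\valuef^\policy(\reward',\abtrans) - \valuef^\policy(\reward',\reftrans) = \sum_{\horizon=1}^\horizontotal \mathbb{E}_{\policy,\reftrans}\bigl[\bigl(\abtrans_\horizon-\reftrans_\horizon\bigr)(\cdot\vert \state_\horizon,\action_\horizon)^{\top} \valuef_{\horizon+1}^{\policy}(\reward',\abtrans)\bigr].
\end{align*}
This converts the problem into controlling (i) the per-tuple total-variation error between $\abtrans_\horizon$ and $\reftrans_\horizon$, and (ii) the occupancy induced by $\policy$ on each $(\horizon,\state,\action)$.

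For step (i), I would split $\reftrans_\horizon-\abtrans_\horizon$ into a statistical part (non-private empirical estimate minus truth) and a privatization part. A Bernstein/Weissman-type concentration on the true counts yields a coordinatewise bound of order $\sqrt{\abtrans_\horizon(\state'\vert\state,\action)\iota/\visitxatotalref_\horizon(\state,\action)}+\iota/\visitxatotalref_\horizon(\state,\action)$ for the statistical part, while Assumption~\ref{assp: private counts} gives a clean additive $\confcountxa/\visitxatotalref_\horizon(\state,\action)$ correction for the privatization part (using that $\visitxatotalrefpri_\horizon(\state,\action)\geq\visitxatotalref_\horizon(\state,\action)$ to control the ratio denominators). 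For tuples in $\infreqtuples$, the construction of $\abtrans$ forces $\abtrans_\horizon(\state'\vert\state,\action)=0$ and routes mass to $\absorbstate$, so their contribution is already absorbed into the bias term handled in Lemma~\ref{lemma: bound of the bias term} and need not be re-analyzed here.

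For step (ii), I would replace the random count $\visitxatotalref_\horizon(\state,\action)$ by its expectation via a multiplicative Chernoff bound: because $\policy^{\text{ref},\batchindx}$ is executed for $\batchlength_\batchindx$ episodes, with high probability $\visitxatotalref_\horizon(\state,\action) \gtrsim \batchlength_\batchindx \cdot \valuef^{\policy^{\text{ref},\batchindx}}(1_{\horizon,\state,\action},\abtrans)$ whenever the right-hand side is not too small. Substituting into the simulation-lemma decomposition and applying Cauchy--Schwarz to the statistical part reduces the entire bound to controlling
\begin{align*}
\sum_{\horizon,\state,\action}\frac{\valuef^\policy(1_{\horizon,\state,\action},\abtrans)}{\valuef^{\policy^{\text{ref},\batchindx}}(1_{\horizon,\state,\action},\abtrans)},
\end{align*}
which is precisely the worst-case coverage number minimized by $\policy^{\text{ref},\batchindx}$ through Equation~\eqref{eq: policy for fine exploration}. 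A counting/covering argument analogous to that of \cite{qiao2022sample,zhang2022near} establishes that this minimax is of order $\cO(\statesize\actionsize\horizontotal)$. Combining these pieces with a $\horizontotal$ factor from summing over the horizon produces the statistical $\sqrt{\statesize\actionsize\horizontotal^3/\batchlength_\batchindx}$ rate and the privacy $\statesize^2\actionsize\horizontotal^2\confcountxa/\batchlength_\batchindx$ rate stated in the lemma.

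The step I expect to be the main obstacle concerns \emph{borderline} tuples: state-action triples with positive but tiny occupancy under $\policy$ that are not flagged in $\infreqtuples$ yet violate the multiplicative-Chernoff lower bound on $\visitxatotalref_\horizon(\state,\action)$. My plan is to truncate the occupancy at a threshold of order $\confcountxa\horizontotal/\batchlength_\batchindx$ and fold the truncated mass directly into the additive privacy-error term, exploiting that the total occupancy summed over $(\state,\action)$ at each $\horizon$ is at most one (so summed over all $\horizon,\state,\action$ is at most $\statesize\actionsize$ after a union bound on the grid). Getting this cut-off tight, and aligning it with the threshold $C_1\confcountxa\horizontotal^2\iota$ used in Algorithm~\ref{algo: Crude Exploration} to define $\infreqtuples$, is what yields exactly the $\statesize^2\actionsize$ pre-factor in the additive term rather than degrading it to $\statesize^3\actionsize$.
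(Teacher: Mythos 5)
Your proposal follows essentially the same route as the paper: simulation lemma, Bernstein plus Assumption~\ref{assp: private counts} for the per-tuple transition error, a Chernoff-type lower bound (Lemma~\ref{lemma: Lemma F.4 in Dann2017}) replacing $\visitxatotalref$ by $\batchlength_\batchindx\valuef^{\policy^{\text{ref},\batchindx}}(1_{\horizon,\state,\action},\abtrans)$, Cauchy--Schwarz, and the $\cO(\statesize\actionsize\horizontotal)$ coverage bound from the minimax design of $\policy^{\text{ref},\batchindx}$ via Lemma~\ref{lemma: distribution coverage lemma}. The only cosmetic difference is that the paper dispatches the borderline low-occupancy tuples by folding the indicator $\ind\{\batchlength_\batchindx\valuef^{\policy^{\text{ref},\batchindx}}<4\iota\}$ back into the same coverage ratio rather than your separate truncation argument (where, incidentally, the occupancies sum to $\horizontotal$ over the grid, not $\statesize\actionsize$), and both yield the stated order.
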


% Using a similar idea, we can bound the ``Reward Error'' with the help of the simulation lemma, and the ``Reward Error'' will appear to be a lower-order term.
% \noindent\textbf{Bounding ``Reward Error''}. 
% \begin{lemma}\label{lemma: bound of the reward error term}
%     With high probability, for true absorbing MDP $\abtrans$ and any policy $\policy\in\policyset$, it holds that 
%     $$\abr{\valuef^\policy(\reward,\abtrans) - \valuef^\policy(\rewprieasy,\abtrans)} \leq O\rbr{\sqrt{\frac{\horizontotal\statesize\actionsize\iota}{\batchlength}} + \frac{\confcountxa\statesize\actionsize\horizontotal\iota}{\batchlength}}.$$
% \end{lemma}

% % \subsection{Putting Together}
% As a result,
% % because of triangular inequality,
% we arrive at the result in Lemma~\ref{lemma: total accuracy error}, with $\accuracy_\batchindx = \widetilde{\cO}\big(\sqrt{\frac{\statesize\actionsize\horizontotal^3}{\batchlength_\batchindx}} + \frac{\statesize^3\actionsize\horizontotal^5\confcountxa}{\batchlength_\batchindx}\big)$.

\section{Privacy Guarantee}
\label{sec: Privacy Guarantees}
In this section, we present a communication protocol served as a shuffle Privatizer, which is adapted from the shuffle binary summation mechanism introduced by \cite{cheu2019distributed}.
% and show that Algorithm \ref{algo: Private Batch-based Policy Elimination} satisfies $(\pripara,\priconf)$-SDP.
We first describe the mechanism satisfying the Assumption~\ref{assp: private counts}, and then show that Algorithm \ref{algo: Private Batch-based Policy Elimination}, when instantiated with this Privatizer, satisfies the $(\pripara,\priconf)$-SDP guarantee.

% \subsection{Achieving Shuffle DP}
% \label{sec: achieving shuffle DP}
% Follow Section~\ref{sec: counts in algorithm SDP-PE}, the number of users in the current batch is $n$, 
For a batch of $n$ users, we outline the procedure for a single counter $\visitxaxtotalprieasy_\horizon(\state,\action,\state')$ of a specific $(\horizon, \state,\action,\state^\prime)$. 
This procedure applies to batch data $\batchdata^\horizon$ collected in Crude Exploration or $\batchdata$ from Fine Exploration.
The processing for other counts like $\visitxatotalprieasy_\horizon(\state,\action)$ and rewards $\rewcumprieasy_\horizon(\state,\action)$ is analogous.

Firstly, we allocate privacy budget $\pripara'=\frac{\pripara}{3\horizontotal}$ and failure probability $\priconf'=\frac{\priconf}{\horizontotal\statesize\actionsize}>0$ to each counter, and define a threshold $\tau = \cO(\log(1/\priconf')/\pripara'^2)$ that controls noise addition.
On the local side, each user encodes their data using a randomizer $\randomizer$:
if $n\leq\tau$, user $i$ encodes local count as $z_i = \ind_\horizon\rbr{\state,\action,\state'} + \sum_{j=1}^m y_j$, where $\{y_j\}_{j=1}^m$ are i.i.d. sampled from $\text{Bernoulli}(1/2)$, and $m = \lceil\frac{\tau}{n}\rceil$;
otherwise, the local output is $z_i = \ind_\horizon\rbr{\state,\action,\state'} + y$ where $y$is sampled from $\text{Bernoulli}(\frac{\tau}{2n})$.
Each user then sends its local message $z_i$ to a secure shuffler $\shuffler$, which permutes their messages randomly and forwards them to the analyzer $\analyzer$.

On the analyzer side, it performs aggregation and post-process steps.
It firstly aggregates the shuffled messages into an initial noisy count, denoted as $\visitxaxtotalbineasy_\horizon(\state,\action,\state')$.
If $n\leq\tau$, $\visitxaxtotalbineasy_\horizon(\state,\action,\state') = \sum_{i=1}^n z_i - \lceil\frac{\tau}{n}\rceil \cdot \frac{n}{2}$, otherwise, $\visitxaxtotalbineasy_\horizon(\state,\action,\state') = \sum_{i=1}^n z_i - \frac{\tau}{2}$.
The noisy visitation counts are then post-processed to ensure they satisfy Assumption~\ref{assp: private counts}, and generate the final counts $\visitxatotalprieasy_\horizon(\state,\action,\state')$.
The private cumulative rewards $\rewcumprieasy_\horizon(\state,\action)$ are directly obtained from the aggregation output.

For the post-processing steps for $\visitxaxtotalbineasy_\horizon$,
% \label{ssub-sec: Post-processing steps}
% given the noisy counts $\visitxatotalbineasy_\horizon(\state,\action)$, $\visitxaxtotalbineasy_\horizon(\state,\action,\state')$ for all $\rbr{\state,\action,\state^\prime}$,
we will adapt the techniques from \cite{baidifferentially,qiao2023near}.
Firstly, we solve the linear optimization problem
% \footnote{Note that Problem \eqref{opt: counter post-processing} is a linear program with $\cO(\statesize_{\horizon(\state)+1})$ variables and $\cO(\statesize_{\horizon(\state)+1})$ linear constraints, which can be solved efficiently using existing algorithms for linear programming. A fast implementation could be via the simplex method \cite{ficken2015simplex}.} 
for all $\rbr{\state,\action}$ below.
\begin{equation}
\begin{aligned} 
\label{opt: counter post-processing}
    \min t \,\,\,\,
    \text{s.t.}& \,\, n(\state') \geq 0, \vert n(\state')  -  \visitxaxtotalbineasy_\horizon(\state,\action,\state') \vert  \leq t,  \forall x',\\
    & \vert\sum\nolimits_{\state^\prime\in\statespace} n(\state') - \visitxatotalbineasy_\horizon(\state,\action) \vert \leq \frac{\confcountxa}{4}. 
\end{aligned}
\end{equation}
Let $\visitxaxtotalopteasy_\horizon(\state,\action,\state')$ denote a minimizer of this problem, %\ST{and} $n^*\!\sas$ \ST{its solution}. 
we define $\visitxatotalopteasy_\horizon(\state,\action) = \sum_{\state^\prime\in\statespace} \visitxaxtotalopteasy_\horizon(\state,\action,\state')$.
By adding some term, as done below, the private counts $\visitxatotalprieasy_\horizon(\state,\action)$ never underestimate the respective true counts:
\begin{equation*}
\label{eq: counters - adding term to optimization solution}
% \small
\begin{aligned}
    \visitxatotalprieasy_\horizon\!(\state,\!\action)\! =\! \visitxatotalopteasy_\horizon\!(\state,\!\action)\! +\! \frac{\confcountxa}{2}, 
    \visitxaxtotalprieasy_\horizon\!(\state,\!\action,\!\state')\! =\! \visitxaxtotalopteasy_\horizon\!(\state,\!\action,\!\state')\! +\! \frac{\confcountxa}{2\statesize}.
\end{aligned}
\end{equation*}
% The private counts $\visitxatotalprieasy_\horizon^\batchindx$ satisfy the following property.
% \begin{lemma}
% % \small
% \label{lemma: counter property of binary mechanism}
Suppose $\visitxatotalbineasy_\horizon(\state,\action)$ satisfies  % it holds that
$\vert\visitxaxtotalbineasy_\horizon(\state,\action,\state') - \visitxaxtotaleasy_\horizon(\state,\action,\state') \vert \leq \frac{\confcountxax}{4}$,
$\vert\visitxatotalbineasy_\horizon(\state,\action) - \visitxatotaleasy_\horizon(\state,\action) \vert \leq \frac{\confcountxax}{4}$, 
for all $\rbr{\horizon,\state,\action,\state^\prime}$, with probability $1-2\delta$. Then, 
% $\visitxatotalprieasy_\horizon(\state,\action)$ derived from Eq.~\eqref{opt: counter post-processing} and Eq.~\eqref{eq: counters - adding term to optimization solution} satisfy Assumption~\ref{assp: private counts}.
the derived $\visitxatotalprieasy_\horizon(\state,\action), \visitxatotalprieasy_\horizon(\state,\action,\state')$ satisfy Assumption~\ref{assp: private counts}.
% \end{lemma}

% $(1)$ For all $\rbr{\horizon, \state,\action,\state^\prime}$, we obtain the privatized counts $\visitxatotalpri$ and $\visitxaxtotalpri$ by the Shuffling Binary Summation Mechanism \cite{cheu2019distributed} with $\pripara^\prime = \frac{\pripara}{3\horizontotal}, \priconf' = \frac{\delta}{??}$, and a post-processing procedure with details in Section \ref{ssub-sec: Post-processing steps}. 
% Here, we denote the output of the Shuffling Binary Summation Mechanism by $\visitxatotalbin$, the private count is $\visitxatotalpri$. 

% $(2)$ For all $\rbr{\horizon,\state,\action}$, we obtain the privatized cumulative rewards $\rewcumpri$ also by the Shuffling Binary Summation Mechanism
% with $\pripara^\prime = \frac{\pripara}{3\horizontotal}, \priconf' = \frac{\delta}{??}$.
The formal guarantees of our Shuffle Privatizer, leading to a set of reliable statistics, are summarized in the following lemma with proof in Appendix~\ref{sec: privacy gaurantee appendix}.
% The result of our shuffle Privatizer is a set of reliable statistics, with formal guarantees summarized in the following Lemma with proof in Appendix~\ref{sec: privacy gaurantee appendix}.
% Using the composition theorem \cite{dwork2014algorithmic} and utility lemma from~\cite{cheu2019distributed}, we summarize the properties of our shuffle Privatizer in the following lemma with proof in Appendix~\ref{sec: privacy gaurantee appendix}:
\begin{lemma}[Shuffle DP Privatizer]
\label{lemma: Properties of Shuffling-PRIVATIZER}
For any $\pripara \!\in\! (0,1)$, $\priconf\! \in\! (0,1)$, the Shuffling Privatizer satisfies $(\pripara,\priconf)$-SDP and Assumption~\ref{assp: private counts} with $\confcountxa = \tilde{\cO}\rbr{\frac{\horizontotal}{\pripara}}$.
\end{lemma}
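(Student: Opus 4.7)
My plan is to decompose the lemma into its two components --- the $(\pripara, \priconf)$-SDP guarantee and the utility conditions required by Assumption~\ref{assp: private counts} --- and handle each by first analyzing a single scalar counter (for one tuple $(\horizon, \state, \action, \state')$ or $(\horizon, \state, \action)$) and then lifting the result via composition across tuples, counter types, and horizon steps, and finally across batches.

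For the privacy claim, the key building block is the shuffle binary summation mechanism of \cite{cheu2019distributed}, which with per-counter parameters $\pripara' = \pripara/(3\horizontotal)$ and $\priconf' = \priconf/(\horizontotal \statesize \actionsize)$ releases a single aggregate that is $(\pripara', \priconf')$-DP. Within a single horizon step $\horizon$, I would argue that the full vector $\{\visitxaxtotalbineasy_\horizon(\state, \action, \state')\}_{\state,\action,\state'}$ enjoys the same $(\pripara', \priconf')$ guarantee via parallel composition, since a user's trajectory visits exactly one $(\state, \action, \state')$ at step $\horizon$ and different tuple randomizers operate on disjoint indicator inputs. The three counter families (transitions $\visitxaxtotalprieasy_\horizon$, visits $\visitxatotalprieasy_\horizon$, and rewards $\rewcumprieasy_\horizon$) each consume one share of the budget, accounting for the factor of $3$; the $\horizontotal$ horizon steps each receive a full contribution from every user, and basic composition over them recovers $(\pripara, \priconf)$ for a single batch. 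The overall $\batchnumber$-batch SDP guarantee then follows by parallel composition across batches, which have disjoint user populations in our formulation.

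For the utility claim, the standard concentration bound for the Cheu et al.\ mechanism yields $|\visitxaxtotalbineasy_\horizon(\state, \action, \state') - \visitxaxtotaleasy_\horizon(\state, \action, \state')| = \tilde{\cO}(1/\pripara') = \tilde{\cO}(\horizontotal/\pripara)$ with probability at least $1 - \priconf'$, and analogously for $\visitxatotalbineasy_\horizon(\state, \action)$ and $\rewcumbineasy_\horizon(\state, \action)$. A union bound over all $\cO(\horizontotal \statesize^2 \actionsize)$ counters gives uniform deviation $\tilde{\cO}(\horizontotal/\pripara)$ with probability at least $1 - 2\delta$. Choosing $\confcountxa = \tilde{\cO}(\horizontotal/\pripara)$ so that $\confcountxa/4$ dominates the raw noise, the LP \eqref{opt: counter post-processing} is feasible at $t = \confcountxa/4$ (since the true counts satisfy all constraints), so the minimizer $\visitxaxtotalopteasy_\horizon(\state, \action, \state')$ inherits an $\cO(\confcountxa)$ deviation from both the per-tuple noisy counts and the noisy summed count. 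Finally, the additive shifts $+\confcountxa/2$ and $+\confcountxa/(2\statesize)$ simultaneously lift $\visitxatotalprieasy_\horizon(\state, \action)$ above the true count, make each $\visitxaxtotalprieasy_\horizon(\state, \action, \state')$ strictly positive, and preserve the summation identity $\visitxatotalprieasy_\horizon(\state, \action) = \sum_{\state'} \visitxaxtotalprieasy_\horizon(\state, \action, \state')$ inherited from the LP constraint, all without inflating the deviation beyond $\tilde{\cO}(\horizontotal/\pripara)$.

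The main obstacle I anticipate is the post-processing analysis: I need to verify that (i) the LP minimizer does not amplify the raw shuffle noise by more than a constant factor, and (ii) the additive shifts simultaneously enforce non-underestimation, strict positivity, and summation consistency, while also matching the scale of the uncertainty so that Assumption~\ref{assp: private counts} is satisfied with a single uniform $\confcountxa$. A secondary bookkeeping hurdle is tracking how the per-counter failure probability $\priconf'$ and privacy budget $\pripara'$ propagate through the union bound and composition: the split $\pripara' = \pripara/(3\horizontotal)$ is tight, so any slack in counting the composed mechanisms would change the $\horizontotal$-dependence in $\confcountxa$. I would therefore organize the argument as three self-contained sub-claims --- per-counter SDP, per-counter concentration, and post-processing correctness --- before assembling them into the stated lemma.
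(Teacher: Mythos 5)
Your proposal is correct and follows essentially the same route as the paper's proof: a per-counter $(\pripara',\priconf')$ guarantee for the shuffle binary summation mechanism with $\pripara'=\pripara/(3\horizontotal)$, composition across the three counter families and the $\horizontotal$ steps (using that one user affects only the counters along its own trajectory), a disjoint-batch argument for the $\batchnumber$-batch guarantee, and sub-Gaussian concentration plus a union bound followed by the LP post-processing for utility. The only difference is that you cite the single-counter DP property of the Cheu et al.\ mechanism as a black box, whereas the paper re-derives it via an explicit Chernoff/likelihood-ratio argument over the two regimes $n\leq\tau$ and $n>\tau$; this is a matter of how much is outsourced, not a different approach.
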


As corollaries of Theorem~\ref{thm: Regret bound of PBPE}, we obtain the regret and privacy guarantees for SDP-PE using the shuffle Privatizer.
\begin{corollary}[Regret Bound under SDP]
\label{crl: Regret under SDP}
For any $\pripara\in (0,1),\priconf \in (0,1)$, under the Shuffling Privatizer, SDP-PE satisfies $(\pripara,\priconf)$-SDP.
Furthermore, we obtain 
$\regret \leq \tilde{\cO}\rbr{\sqrt{\statesize\actionsize\horizontotal^3\episodetotal} + \frac{\statesize^3\actionsize\horizontotal^6}{\pripara}}$ with high probability.
\end{corollary}

% \sj{This mechanism also satisfies the same order of JDP, since here SDP implies JDP.
% Meanwhile, also satisfies a kind order of LDP, which can be calculated based on the binomial noise or bernoulli noise.}

\section{Numerical Experiments}
\label{sec: experiment}
In this section, we present a series of numerical experiments to validate the effectiveness of our proposed SDP-PE framework (Algorithm~\ref{algo: Private Batch-based Policy Elimination}). 
% Our evaluation focuses on two key aspects, privacy-utility trade-off and the policy-switching cost.
% The experiments are specifically designed to address three critical questions pertinent to deployment in real-world distributed networks:
% \begin{itemize}
%     \item Privacy-Utility Trade-off: How does our framework (SDP-PE) perform against LDP and JDP baselines in terms of learning utility? (Corresponds to Experiment 1)
%     \item Network Engineering Efficiency: How does our framework manage the trade-off between conflicting objectives: policy switching cost, central computation cost, and learning performance? (Corresponds to Experiment 5)
%     \item Robustness to Heterogeneity: How does our framework perform in more realistic, heterogeneous network environments, including both stateful (RL) and stateless (Bandit) settings? (Corresponds to Experiments 3, 4)
% \end{itemize}

\subsection{Experimental Setup}
% \subsubsection{Environments}
% (1) \textbf{Homogeneous RiverSwim} (Used for Experiments 1 and 5):  
The simulation is conducted on a simplified ``RiverSwim''~\cite{osband2013more} environment, a standard benchmark with challenging exploration requirements, as illustrated in Figure~\ref{fig: RiverSwim}.
The system consists $\statesize=4$ states, $\actionsize=2$ actions ("left", "right"), and episode length $\horizontotal=6$. 
In each episode, the agent starts at $S_1$, receives $\reward=0.005$ at $S_1$, and $\reward=1.0$ for action "right" at $S_4$. 
If the agent chooses the action ``left'', it will always succeed. Otherwise, it may fail.
% In this setting, we assume a homogeneous network where all users share the same global MDP.
\begin{figure}[!htbp]
  \centering
  \includegraphics[width=0.48\textwidth]{./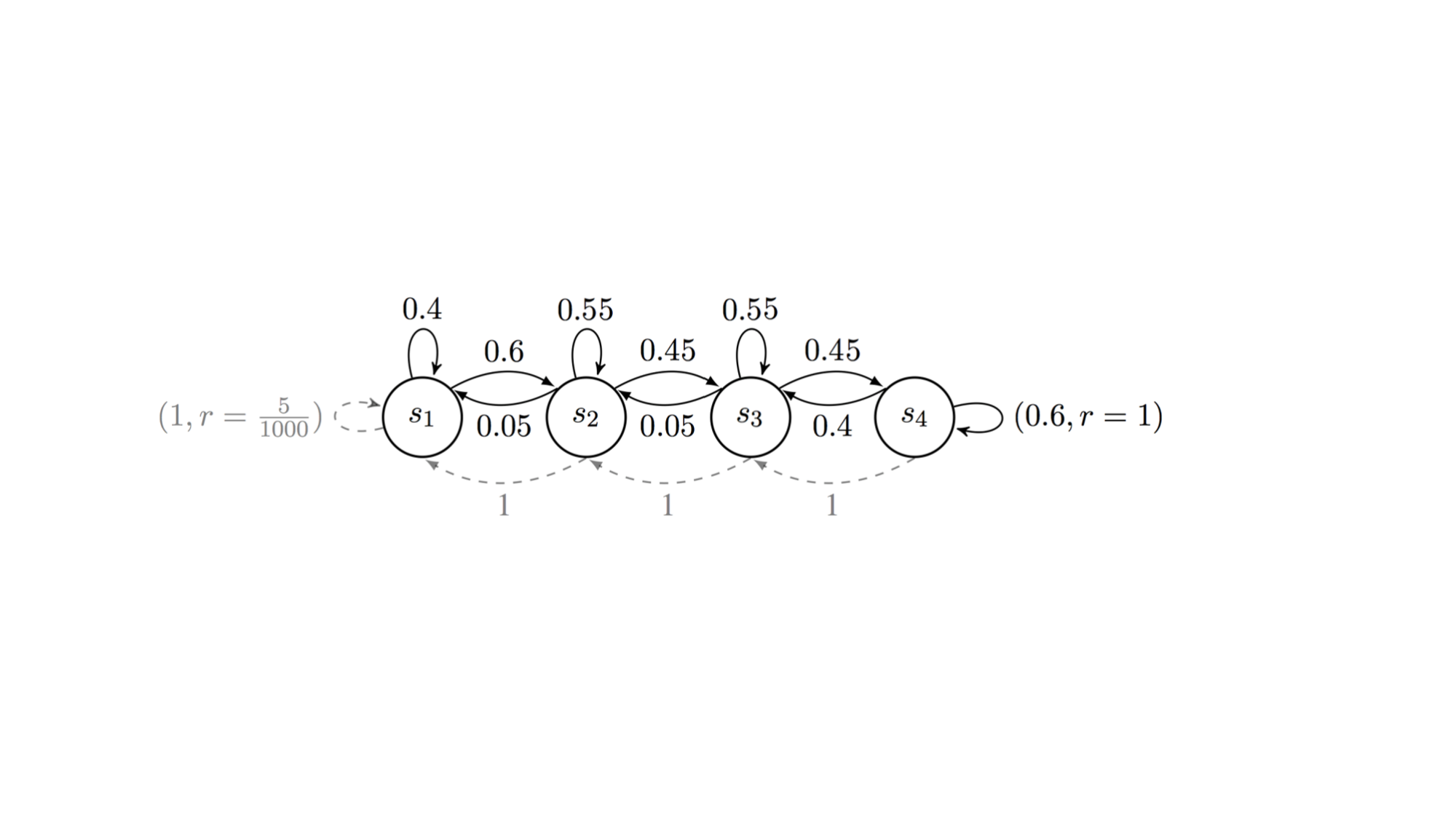}
  \caption{Simplified RiverSwim MDP -- solid and dotted arrows denote the transitions under actions ``right'' and ``left'', respectively.}
  \label{fig: RiverSwim}
\end{figure}

We also experiment in a multi-armed bandit (MAB) environment with $\actionsize=20$ as a generalization of MDP with $\horizontotal=1, \statesize=1$. 
This MAB environment adopts the reward heterogeneity setting considered in~\cite{li2024distributed}, where the global mean reward is $\reward^*(a) \sim \text{Uniform}(0.0, 0.99)$. Each user $k$'s local mean reward is $r_k(a) = r^*(a) + \xi_k(a)$, where $\xi_k(a) \sim \mathcal{N}(0, \sigma^2)$. 
The user observes a random reward from distribution $\text{Bernoulli}(\text{clip}(r_k(a), 0, 1))$.

% (3) \textbf{Heterogeneous MAB} (Used for Experiment 4): 
% To test the framework's generalization to stateless (Bandit) problems and benchmark against~\cite{li2024distributed}, we degenerate the environment to $H=1, S=1, A=20$. 
% We introduce reward heterogeneity. 
% The global mean reward is $r^*(a) \sim \text{Uniform}(0.0, 0.99)$. Each user $k$'s local mean reward is $r_k(a) = r^*(a) + \xi_k(a)$, where $\xi_k(a) \sim \mathcal{N}(0, \sigma^2)$. 
% The user observes a stochastic reward $Y_k(a) \sim \text{Bernoulli}(\text{clip}(r_k(a), 0, 1))$.

% \subsubsection{Baselines}
% \begin{itemize}
%     \item SDP-PE (Ours): Our proposed algorithm based on Shuffle DP and exponential batching.
%     \item PE (Non-Private): The non-private version of SDP-PE, highlighting the performance of the batching mechanism itself.
%     \item UCBVI-JDP~\cite{qiao2023near}: A centralized privacy (JDP) baseline requiring a Trusted Third Party (TTP), which updates its policy per-episode.
%     \item UCBVI-LDP~\cite{qiao2023near}: A local privacy baseline which updates its policy per-step.
%     \item UCBVI~\cite{azar2017minimax}: The standard non-private UCBVI algorithm, which updates per-episode.
% \end{itemize}
We evaluate the performance of SDP-PE and compare it against the state-of-the-art private algorithm, private UCB-VI under JDP and LDP constraints~\cite{qiao2023near}, which updates the policy in every episode, as well as their corresponding non-private versions (PE and UCB-VI~\cite{azar2017minimax}).
We conduct 20 independent experiments, each running $\episodetotal = 2\cdot 10^4$ episodes.
The privacy budget $\epsilon$ is specified in each experiment.

% We evaluate the performance of SDP-PE and compare it against the state-of-the-art private algorithm, private UCB-VI under JDP and LDP constraints, which updates the policy in every episode, as well as their corresponding non-private versions (PE and UCB-VI).
% % We evaluate SDP-PE under different privacy budget $\pripara$ and also compare with the state-of-the-art private algorithm -- private UCB-VI under constraints of JDP and LDP \cite{qiao2023near} which updates the policy in every episode, and also their corresponding non-private algorithms PE and UCB-VI (just ignore the noise injection process).
% % We set all the parameters in our proposed algorithms as the same order as the theoretical results.
% We conduct 20 independent experiments under two privacy regimes $\pripara=0.1$ and $\pripara=1$, each running $\episodetotal = 2\cdot 10^4$ episodes.

\subsection{Results and Analysis}
\subsubsection{Privacy-Utility Trade-off}
% \begin{figure}[!htbp]
% \centering
% \begin{subfigure}{}
%   \includegraphics[width=0.75\linewidth]{./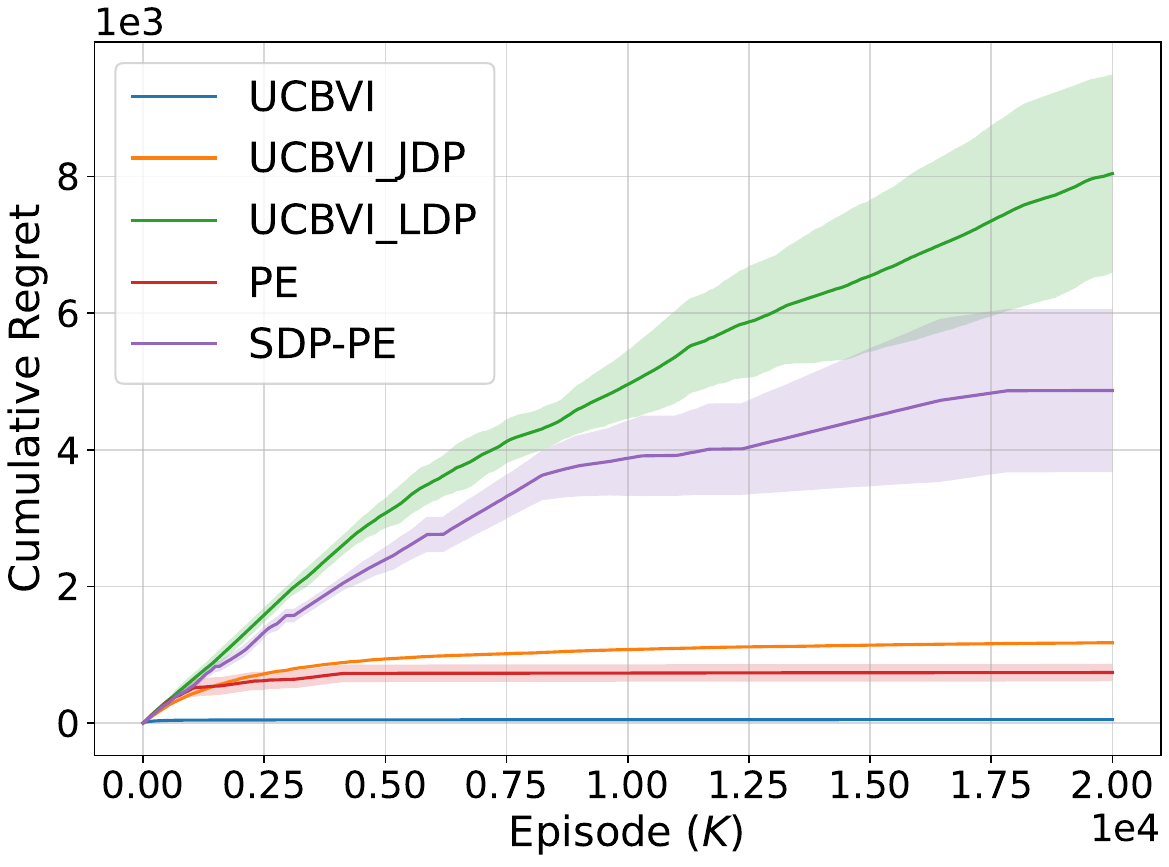}  
%   % \caption{Put your sub-caption here}
%   \label{fig: experiment ep=0.1}
% \end{subfigure}
% \begin{subfigure}{}
%   \includegraphics[width=0.75\linewidth]{./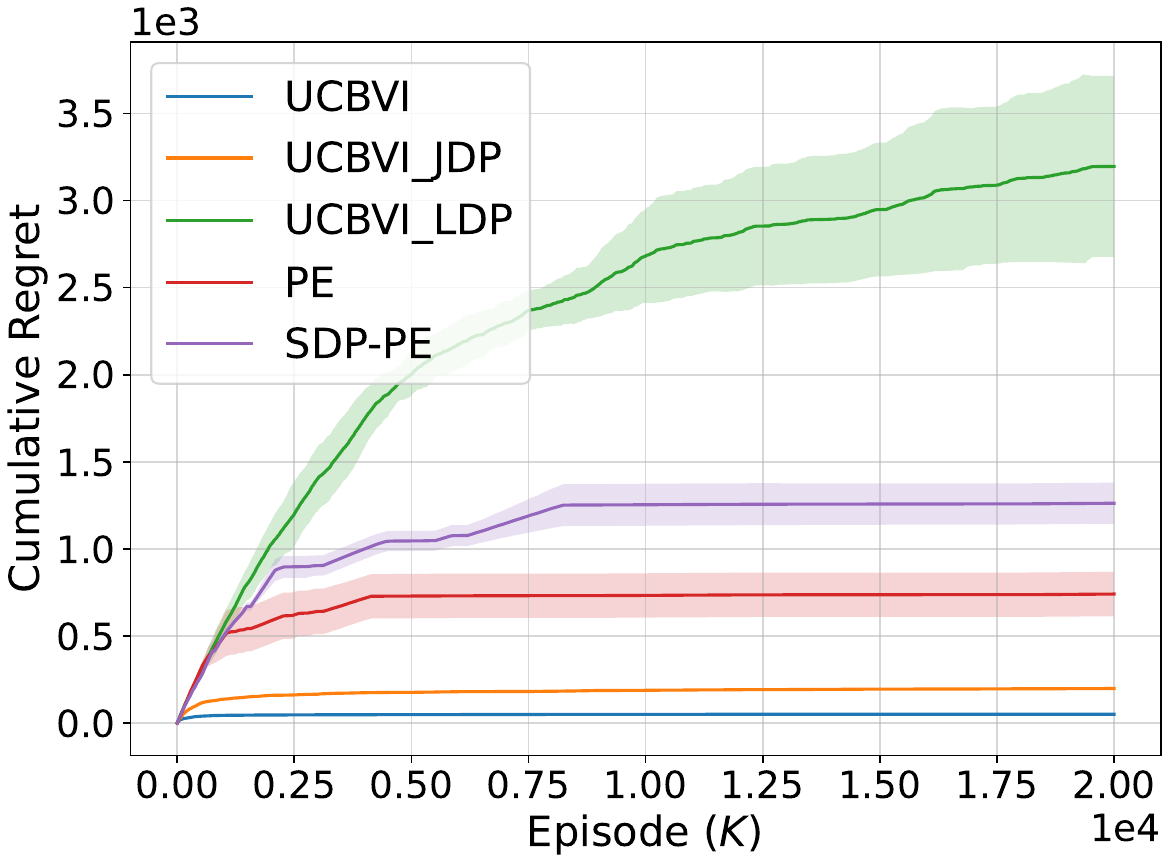}  
%   % \caption{Put your sub-caption here}
%   \label{fig: experiment ep=1}
% \end{subfigure}
% \caption{Cumulative regret vs. Episode under private RL algorithms}
% \label{fig: Simulation}
% \end{figure}

% 这是一个跨双栏的通栏图环境。星号(*)是关键。
% [t] 建议将其放置在页面的顶部 (top)，[b] 则是底部 (bottom)。
\begin{figure*}[t]
    % \centering 确保整个图组居中
    \centering
    % --- 第一个子图 (a) ---
    \begin{subfigure}{0.3\textwidth}
        % \centering 确保子图在自己的“盒子”里居中
        \centering
        % \includegraphics 插入图片，宽度设为 \linewidth (即 subfigure 盒子的宽度)
        % !! 把 "image_a.png" 替换成你的图(a)的文件名 !!
        \includegraphics[width=\linewidth]{./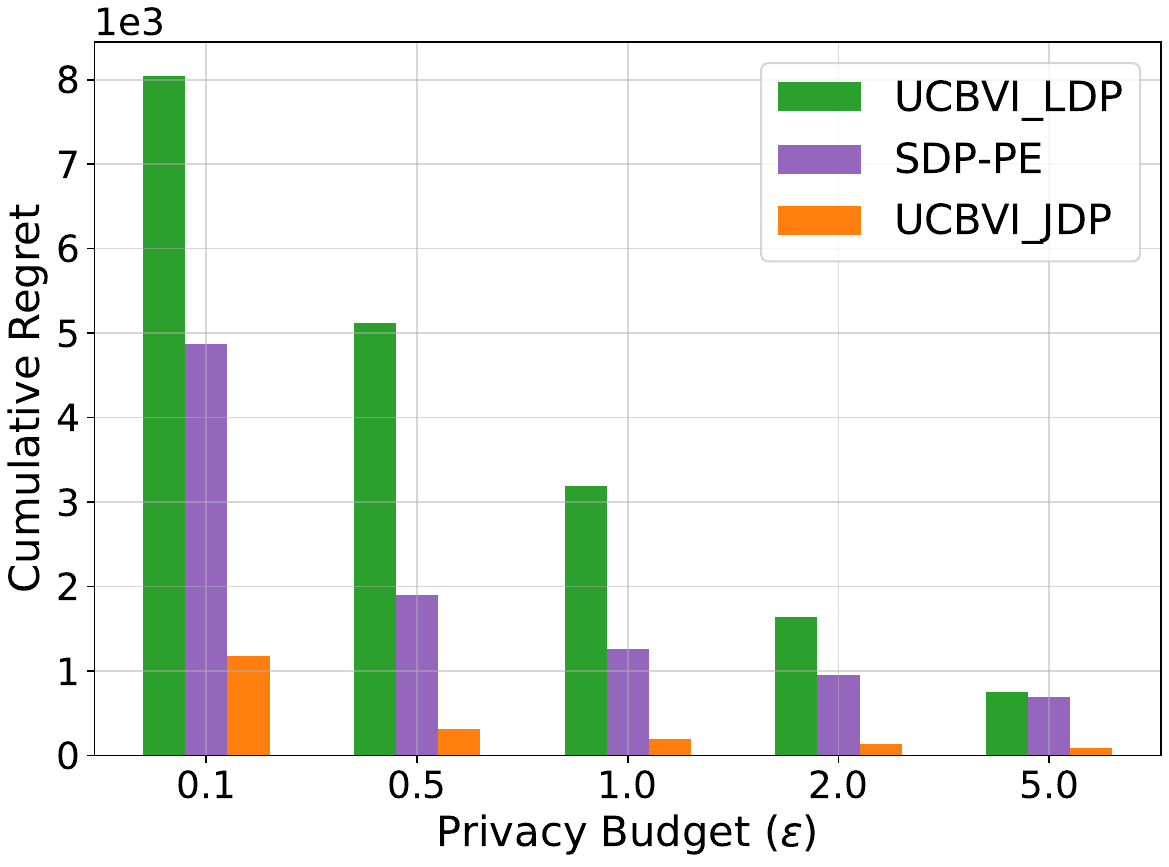} 
        % 空的 \caption{} 会自动生成 (a) 这样的标签
        \caption{}
        \label{fig:sub_a}
    \end{subfigure}
    \hfill % \hfill 会在图之间添加弹性的空白，使它们均匀分布
    % --- 第二个子图 (b) ---
    \begin{subfigure}{0.3\textwidth}
        \centering
        % !! 把 "image_b.png" 替换成你的图(b)的文件名 !!
        \includegraphics[width=\linewidth]{./figs/river_privacy_results_ep=0.1.pdf}
        \caption{}
        \label{fig:sub_b}
    \end{subfigure}
    \hfill
    % --- 第三个子图 (c) ---
    \begin{subfigure}{0.3\textwidth}
        \centering
        % !! 把 "image_c.png" 替换成你的图(c)的文件名 !!
        \includegraphics[width=\linewidth]{./figs/river_privacy_results_ep=1.pdf}
        \caption{}
        \label{fig:sub_c}
    \end{subfigure}
    \caption{Performance evaluation of SDP-PE under RiverSwim environment. (a) Final cumulative regret vs. the privacy budget $\epsilon$. (b) and (c) display cumulative regret vs. time for comparison against baseline algorithms under $\epsilon = 0.1$ (high privacy regime) and $\epsilon = 1.0$ (low privacy regime), respectively. The shaded area in (b) and (c) indicates the standard deviation.}
    \label{fig: Simulation} % 这是一个用于交叉引用的标签
\end{figure*}

We first evaluate the cumulative regret under different privacy models in the RiverSwim environment.
% We plot the cumulative regret versus the number of episodes, along with the standard deviation, in Figure~\ref{fig: Simulation}. 
Figure~\ref{fig: Simulation} illustrates the comparison of cumulative regret for SDP-PE against various baselines in the RiverSwim MDP environment with different privacy budgets.
% Our experimental results offer several key insights that align with our theoretical results.

First, the UCBVI-LDP algorithm exhibits the worst regret growth under all privacy regimes. 
This is because, in RL, LDP requires significant noise to be added at every step, rendering the data almost unusable and making it impractical for real-world network applications.
Second, our SDP-PE algorithm achieves substantially lower regret compared to UCBVI-LDP. 
Its performance closely approaches that of the non-private PE and UCBVI-JDP.
This empirically demonstrates that the shuffle model (SDP) provides a superior balance for distributed networks: it avoids the catastrophic utility loss of LDP without requiring a trusted central server (unlike JDP).

\begin{figure}
    \centering
    \includegraphics[width=0.7\linewidth]{./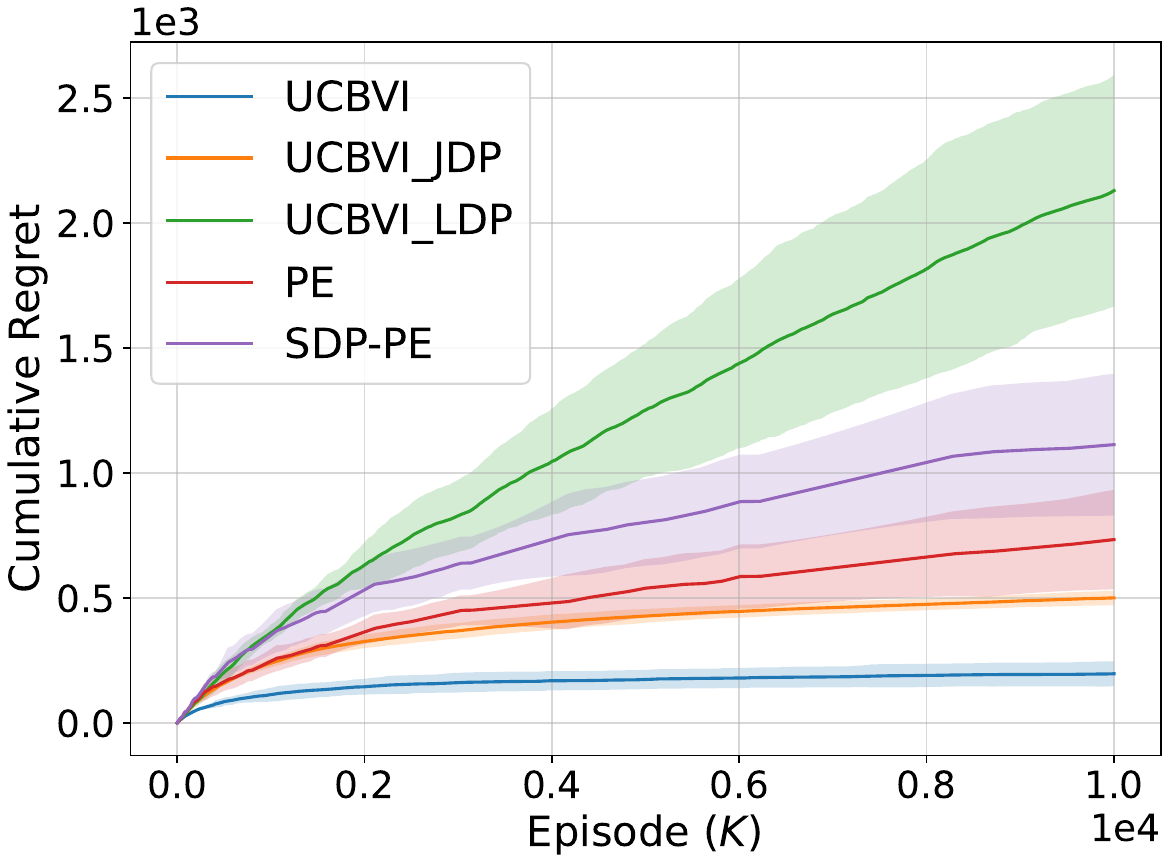}
    \caption{Performance evaluation of SDP-PE under MAB environment. ($\pripara=1.0, \sigma=0.1$)}
    \label{fig:Heterogeneous MAB Regret}
\end{figure}
In addition, we validate our framework's generality to bandits.
As shown in Figure~\ref{fig:Heterogeneous MAB Regret}, our SDP-PE algorithm continues to perform excellently in this MAB setting, again surpassing the LDP baseline and approaching the JDP baseline. 
This confirms our work is a general framework that can be seen as a stateful generalization of the bandit problem, unifying private global optimization for both RL and bandit networks.
% As shown in Figure~\ref{fig:Heterogeneous MAB Regret}, our SDP-PE algorithm (purple curve) continues to perform excellently in this MAB setting, again surpassing the LDP baseline and approaching the JDP baseline. 
% This result confirms that our framework is a general solution, capable not only of addressing the stateful RL problems central to this paper but also elegantly degenerating to cover the stateless Bandit problems discussed in~\cite{li2024distributed}. 

Collectively, these results demonstrate that SDP-PE, through the shuffle model of privacy, successfully addresses the privacy bottleneck in online reinforcement learning. 
It offers an effective solution for distributed network environments, combining robust privacy guarantees with superior learning performance, and showcases its excellent generalization capabilities across different online learning scenarios.

\subsubsection{Switching Cost Efficiency}
\begin{figure}
    \centering
    \includegraphics[width=0.75\linewidth]{./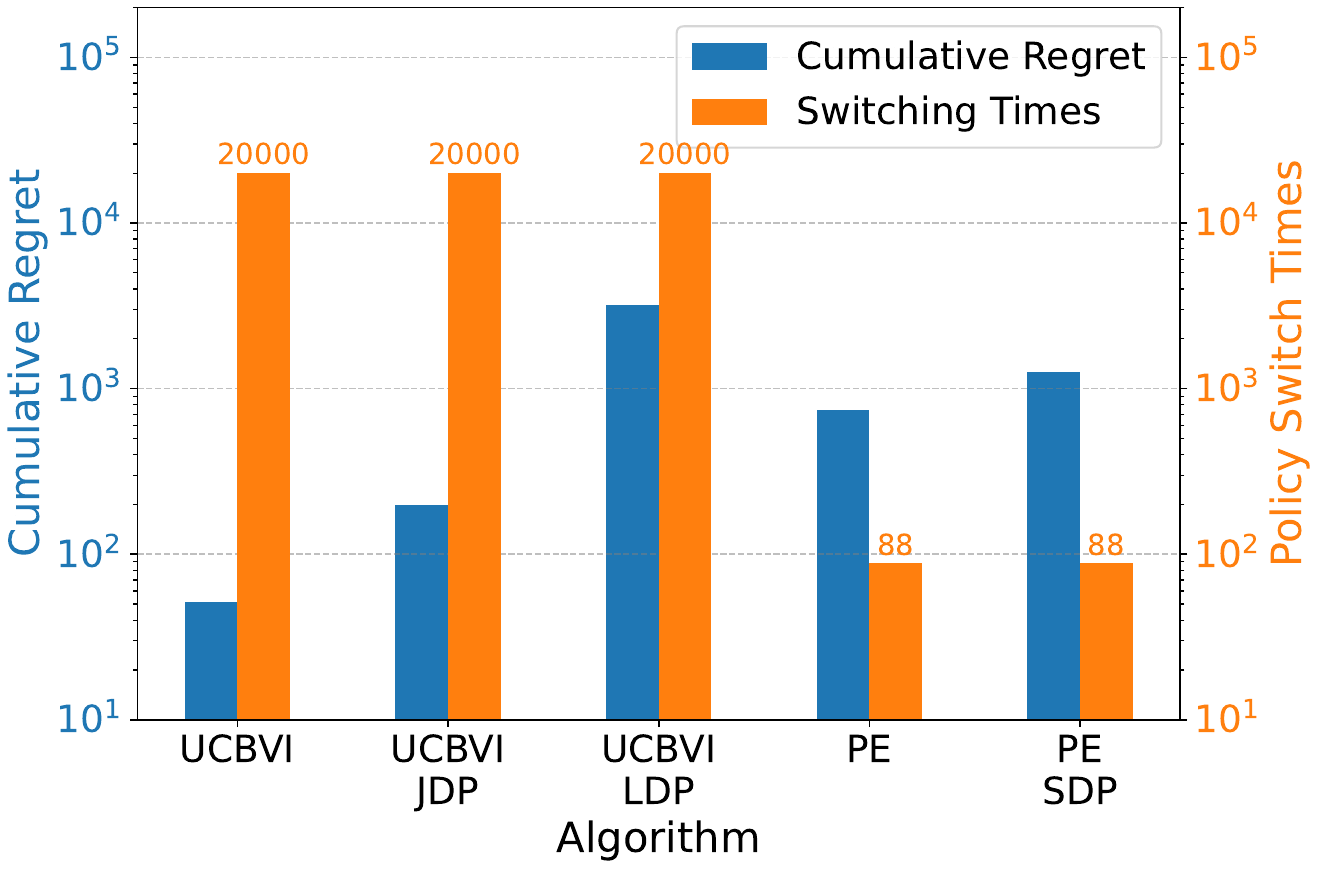}
    \caption{Trade-off between final cumulative regret and policy switch times. ($\episodetotal=20,000, \pripara=1.0$)}
    \label{fig: switch and computation}
\end{figure}
Next, we evaluate SDP-PE's capability in reducing policy deployment cost, a critical engineering challenge for online RL deployments. 
Figure~\ref{fig: switch and computation} compares the total number of policy switches across different algorithms over the entire $K=20,000$ episodes.
Classic UCBVI algorithms (whether UCBVI, UCBVI-LDP, or UCBVI-JDP), due to their "per-interaction" policy update mechanism, incur a policy switching cost as high as $\mathcal{O}(K)$, namely $20,000$ total policy updates.
In a real-world cellular or cloud network, such frequent "reconfiguration" operations are untenable and impractical.
In contrast, our SDP-PE framework (and its non-private PE variant) leverages exponential batching, reducing the network switching cost to $\mathcal{O}(\log K)$, which is only $88$ policy switches in the experiments.
This orders-of-magnitude reduction is attributed to SDP-PE's ingenious utilization of the shuffle model of privacy's inherent batching requirement, combined with its policy elimination mechanism. 
By updating and deploying policies only at the end of each batch, SDP-PE drastically reduces network deployment overhead, making it far more feasible for practical network operations.

These results indicate that SDP-PE is not only superior in terms of privacy but also, by ingeniously leveraging the batching property of SDP, addresses a core engineering challenge of large-scale online RL deployments, significantly enhancing its practical value.

\section{Discussion}
\label{sec: discussion}
Here, we discuss the broader implications of our results in the context of the private learning for networked systems literature.

\textbf{Performance-privacy trade-off.}
Our SDP-PE algorithm establishes a favorable regret-privacy trade-off, with a regret bound whose leading term matches the non-private lower bound, $\Omega(\sqrt{\statesize\actionsize\horizontotal^3\episodetotal})$~\cite{domingues2021episodic}.
Crucially, our algorithm significantly outperforms the optimal LDP result~\cite{qiao2023near} by making the privacy costs' dependency on $1/\pripara$ additive, rather than multiplicative with respect to $\episodetotal$.
The utility matches that of the JDP case, but is achieved within a more resilient, distributed architecture that lacks a single point of failure--a critical feature for robust networked systems.
% but is achieved under the stronger privacy guarantee of shuffle model.
% Furthermore, our approach resolves challenges in prior work~ \cite{garcelon2021local}, such as the need for a ``burn-in'' phase noted in.

% and attains optimal dependence on privacy budget $\pripara$.
% It , and also matches the results in the JDP case \cite{qiao2023near}. 
% SDP-PE also addresses the concerns outlined in where a burn-in phase is required for their algorithm.

% It is bounded by the optimal rate $\cO(\frac{\log\episodetotal}{\pripara})$ in $\pripara$-JDP constraint \cite{vietri2020private}, and also by the optimal rate $\cO(\frac{\sqrt{\episodetotal}}{\pripara})$ in $\pripara$-LDP constraints \cite{garcelon2021local}.  
% The result in $(\pripara,\priconf)$-SDP case also matches the JDP case, and also settles the problem in the appendix of \cite{garcelon2021local}, where they even require a burn-in phase. 

% \textbf{Dependence on $\horizontotal,\statesize,\actionsize$ in the regret.}
% \sj{Our regret bound matches the lower bound $\Omega(\sqrt{\horizontotal^3\statesize\actionsize\episodetotal})$ in the non-private setting, which is also the dominated term.
% % The open question remains whether it is feasible to enhance the dependence on these MDP parameters within shuffle DP constraints.}
% }

\textbf{The shuffle model as a resilient privacy architecture.}
The shuffle model serves as an intermediate trust model between the centralized model (JDP) and the local model (LDP) for privacy-preserving learning. 
Firstly, a key theoretical insight is that any RL algorithm satisfying SDP also satisfies JDP via Billboard lemma in~\cite{hsu2014private}.
Therefore, the JDP lower bound of $\widetilde{\Omega}(\sqrt{\statesize\actionsize\horizontotal^3\episodetotal} + \frac{\statesize\actionsize\horizontotal}{\pripara})$ established by \cite{vietri2020private}, also applies to our setting.
It remains an open question whether the lower bound is tight for the SDP setting.
Secondly, while our protocol involves local noise injection, it does not satisfy formal LDP.
Meanwhile, due to the ``privacy blanket'' effect, the privacy preservation becomes less costly as the batch size grows and local noise decreases in the later stages.

\textbf{Policy switching under privacy constraints.}
The SDP-driven batching strategy in our online RL scenario creates a synergy with the engineering need to reduce ``policy switching costs''.
% Our SDP-PE framework leverages this and reduces policy switching costs from $\mathcal{O}(K)$ to $\mathcal{O}(\log K)$. 
% This is a significant finding, especially when considering the optimality of policy switching frequency. 
In a non-private setting, achieving a near-optimal regret of $\tilde{\mathcal{O}}(\sqrt{K})$ typically requires at least $\Omega(\log \log K)$ switches~\cite{qiao2022sample}. 
However, the necessity of batching under shuffle privacy to overcome aggregated noise leads our SDP-PE algorithm to an $O(\log K)$ switching frequency. 
This additional switching frequency ($\log K$ vs. $\log \log K$) appears to be necessary to maintain high utility while satisfying strong privacy constraints. 
Thus, it remains an important open question whether it is possible to achieve fewer policy switches (e.g., at the $\Omega(\log \log K)$ level) while satisfying shuffle privacy and simultaneously maintaining near-optimal regret.

\textbf{Limitations and future Work.}
% A limitation of our algorithm is its computation efficiency. 
% The algorithm requires estimating the value of all the elements in the active policy set to eliminate sub-optimal policies, while the size of the policy set could be exponential.
% It might be possible to improve computation efficiency by introducing an external optimizer like in \cite{zhang2022near}, which could optimize over the entire policy space rather than explicitly keeping the active set.
% A limitation of SDP-PE is its current computation complexity, as it requires explicitly evaluating a potentially large set of active policies, and solving constrained optimization problems to construct exploration policies in each stage. 
% Future work could improve efficiency by designing, e.g., a constrained policy search module inspired by \cite{zhang2022near} that implicitly optimize over a constrained policy space rather than maintaining an explicit set.
% Limitations and Future Work. 
A key limitation of our framework lies in its computational scalability. 
Our SDP-PE, based on policy elimination, relies on explicitly evaluating a potentially large set of active policies and solving constrained optimization problems to construct exploration policies in each stage.
These steps, especially enumerating over all deterministic policies, can lead to exponential time complexity and thus do not scale well to large state-action spaces.
% These steps can be solved in exponential time by enumerating over all the deterministic polices, while does not scale to large state-action spaces. 
A possible direction is to apply softmax (or other differentiable) representation of the policy space and use gradient-based optimization techniques to find approximate solutions.
Another promising solution could involve designing a constrained policy search module inspired by \cite{zhang2022near}, that implicitly optimize over a constrained policy space rather than maintaining an explicit set.
\section{Related Work}
\label{sec: related work}
Our work lies at the unique intersection of three research domains: (1) learning in distributed networks, (2) privacy in sequential decision-making, and (3) reinforcement learning.

\textbf{Learning paradigms in distributed networks.}
In modern networked systems, learning from distributed entities has become a central challenge. 
Research in this area can be broadly categorized based on its network structure and interaction paradigm.
First, federated learning (FL)~\cite{9060868} is a widely studied paradigm characterized by centralized coordination and decentralized data. 
A central server orchestrates numerous clients, which train models locally and upload updates (rather than raw data) to collaboratively learn a global model. 
FL typically involves clients uploading data or updates in a periodic, synchronous, batch manner.
Secondly, distributed learning~\cite{10.1145/3377454}, in contrast, lacks a central coordinating node. 
Instead, individual nodes in the network learn local or shared models through peer-to-peer communication. 
This paradigm is often more complex, focusing on communication efficiency and the impact of network topology.
Thirdly, multi-agent systems (MAS)~\cite{10439972}, including multi-agent reinforcement learning~\cite{ZENG2022119688}, focus on problems where multiple autonomous agents interact within a shared environment, primarily studying issues related to game theory among agents.
In this paper, we share similarities with FL in its ``central learning agent-distributed nodes'' structure, yet it differs fundamentally. 
% We are not concerned with clients training models locally but rather with clients acting as independent decision-makers, interacting with the environment under policies deployed by a central agent, and providing raw (or perturbed) feedback to it. 
The distributed nodes act as independent decision-makers and interact with the environment under policies deployed by the central agent, and provide raw (or perturbed) data to it, rather than training models locally in FL or playing games in MAS.

\textbf{Sequential decision-making in distributed networks.}
Within the aforementioned networked learning paradigms, sequential decision-making problems, particularly bandits and reinforcement learning, have been extensively studied. 
Distributed/federated bandits~\cite{li2024distributed,10552083} apply FL principles to bandit problems, aiming to learn a global decision policy from distributed clients' real-time feedback. 
% ~\cite{li2024distributed,10552083,10050174,9996575}
For instance, \cite{li2024distributed} solves the problem of learning a global model for global regret minimization in a linear bandit setting, where distributed client feedback interacts in a ``parallel batch'' manner.
Distributed reinforcement learning (DRL)~\cite{labbi2025federated} involves distributing RL tasks across multiple computational nodes. 
However, most DRL work focuses on using multiple parallel agents to accelerate the training of a single task, where agents typically share data or models, without addressing user privacy concerns or unique online task stream characteristics.
In contrast to the above synchronous batch models, an ``online Streaming Task Sequence'' model is also studied within bandits and RL framework~\cite{10018417,8022964}.
This model is highly relevant to our setting, where user requests (i.e., episodes) arrive sequentially and dynamically.
Typically, only one user interacts with the system at any given time, and the central agent must learn from this real-time, single-user stream. 

\textbf{Online RL algorithms.}
Regarding online reinforcement learning algorithms, various methods with theoretical guarantees have been extensively investigated. 
These include algorithms designed for tabular MDPs, such as those based on the Optimism in the Face of Uncertainty principle (e.g., UCBVI~\cite{azar2017minimax}), Thompson Sampling (e.g. PSRL~\cite{osband2013more}), Policy Elimination (PE)~\cite{qiao2022sample} and their variants. 
For large state spaces, researchers have also developed linear approximation methods~\cite{jin2020provably} and policy gradient algorithms~\cite{agarwal2021theory} with theoretical guarantees.
% This paper is the first to extend the "distributed global model learning" challenge from the bandit setting [19] to the more complex, stateful (H=1) online reinforcement learning (RL) scenario. 
% Our model is an "Online Streaming Task Sequence" model. 
% As described in Intro P1, user requests (i.e., episodes) arrive sequentially and dynamically, and the central agent must learn based on these real-time, single-user stream feedbacks. 
This paper primarily focuses on tabular MDP environments, utilizing Policy Elimination algorithms as core building blocks, but these must be fundamentally reshaped under shuffle privacy constraint.

\textbf{Privacy in sequential decision-making.}
In the ``online streaming'' model, the feedback uploaded by each user is highly sensitive, making privacy a core challenge. 
To address this, differential privacy has been widely applied to such sequential decision-making problems.
Firstly, JDP (Centralized Privacy) has been studied in bandits~\cite{shariff2018differentially} and RL~\cite{vietri2020private}, JDP requires a fully trusted central server to deal with the raw data, which is an unrealistic assumption in our decentralized, anonymous crowd setting.
Meanwhile, LDP (Local Privacy) avoids the trust assumption, but applications in RL~\cite{garcelon2021local} and bandits~\cite{ren2020multi} show that the immense noise required to protect every trajectory leads to a severe utility collapse (e.g., a multiplicative $\mathcal{O}(\sqrt{K/\epsilon})$ regret term), rendering it impractical for high-utility applications.
Recently, SDP (Shuffle Privacy) as an intermediate model, has been successfully applied to bandit problems via action elimination ideas~\cite{tenenbaum2021differentially,li2024distributed}, proving it can balance trustlessness and high utility. 
However, to the best of our knowledge, extending the shuffle model to the more complex, online reinforcement learning problem remains an open gap. 
This work is the first to fill this gap, providing a foundation for shuffle-private episodic RL.

\textbf{Batched RL.}
Applying SDP to online RL faces a significant technical challenge, namely, the SDP privacy inherently requires data to be ``batched'' to achieve privacy amplification. 
This constraint is non-trivial for the standard online RL model, as it introduces (a) delayed feedback (the agent must wait for a batch) and (b) aggregated noise (the agent observes a noisy aggregate, not a single trajectory), rendering traditional RL algorithms (like UCBVI~\cite{azar2017minimax}) ineffective.
Concurrently, a separate research area, low-policy-switch RL, also leverages ``batching'' techniques to solve a purely engineering bottleneck (the $\mathcal{O}(K)$ policy switching cost). 
These works reduce switching costs to $\mathcal{O}(\log \log K)$ or $\mathcal{O}(\log K)$~\cite{bai2019provably,zhang2022near,johnson2024sampleefficiency}.
The core technical innovation of this paper is the discovery and exploitation of the synergy between these two seemingly disparate fields. 
We demonstrate that the policy elimination framework, used to solve ``low-switching-cost'', is the ideal vehicle to solve the ``delay and noise'' challenges introduced by SDP, especially when modified with our ``forgetting'' mechanism, and offers a novel paradigm for practical, privacy-preserving online RL in networked systems.
\section{Conclusion and Future Work}
\label{sec: Conclusion and Future Work}
In this paper, we introduce Shuffle Differentially Private Policy Elimination (SDP-PE), the first generic algorithm for reinforcement learning under the shuffle privacy model.
Our method successfully adapts the policy elimination paradigm to the private learning setting by introducing a novel exponential batching schedule paired with a ``forgetting'' mechanism.
When instantiated with our proposed Privatizer, SDP-PE achieves a regret of $\widetilde{\cO}\rbr{\sqrt{\statesize\actionsize\horizontotal^3\episodetotal} + \frac{\statesize^3\actionsize\horizontotal^6}{\pripara}}$, where the leading term matches the information-theoretic lower bound, and privacy costs are additive.
Notably, this result achieves utility comparable to the centralized (JDP) model but under a trustless, distributed and resilient architecture, while significantly outperforming the local (LDP) model.
Furthermore, this work unveils a crucial synergy between privacy-driven batching and the reduction of policy deployment costs, decreasing the switching frequency from $\mathcal{O}(K)$ to $\mathcal{O}(\log K)$. 
This provides a compelling paradigm for data-driven networked systems that demand both privacy and engineering efficiency.
Our numerical experiments further corroborate the theoretical findings and demonstrate the practical advantages.
% Thus, this work robustly demonstrates the viability of the shuffle model for achieving a favorable privacy-regret trade-off and high engineering efficiency in data-driven control for advanced networked systems.

This work opens several promising avenues for future research. 
Firstly, key directions include improving the computational efficiency of the algorithm and extending its theoretical guarantees to large-scale state-action spaces using function approximation (e.g., deep reinforcement learning).
Secondly, it is worth exploring whether it is possible to achieve fewer policy switches (e.g., at the $\Omega(\log \log K)$ level) while satisfying shuffle privacy and maintaining near-optimal regret.
Finally, it's also important to investigate privacy and robustness in more complex networked environments, such as those with malicious clients or more dynamic network topologies.
\appendices
\section{Supporting Lemmas}
We first provide several supporting lemmas as follows.
\begin{lemma}[Lemma F.4 in \cite{dann2017unifying}] \label{lemma: Lemma F.4 in Dann2017}
    Let $F_i$ for $i=1 \cdots, n$ be a filtration and $X_1, \cdots, X_n$ be a sequence of Bernoulli random variables with $\mathbb{P}\left(X_i=1 \mid F_{i-1}\right)=P_i$ with $P_i$ being $F_{i-1}$-measurable and $X_i$ being $F_i$ measurable. It holds that $\mathbb{P}\left[\exists k\in[n]: \sum_{t=1}^k X_t<\sum_{t=1}^k P_t / 2-W\right] \leq e^{-W}.$
\end{lemma}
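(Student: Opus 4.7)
The plan is to prove this uniform lower-tail inequality through the standard exponential supermartingale approach, concluding with Doob's maximal inequality for nonnegative supermartingales. The key is to pick the exponential tilt parameter so that both the linear coefficient on $W$ and the coefficient on $\sum_t P_t$ come out compatibly with the target inequality.

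Concretely, I would define, for each $n \geq 0$, the process
\begin{equation*}
M_n \;:=\; \exp\!\Bigl(-\sum_{t=1}^n X_t + (1-e^{-1})\sum_{t=1}^n P_t\Bigr),
\end{equation*}
with $M_0 = 1$. The first step is to verify that $(M_n)_{n\geq 0}$ is a nonnegative supermartingale with respect to $(F_n)$. Since $P_n$ is $F_{n-1}$-measurable and $X_n\in\{0,1\}$ with $\mathbb{P}(X_n=1\mid F_{n-1})=P_n$, a direct calculation gives $\mathbb{E}[\exp(-X_n)\mid F_{n-1}] = 1 - (1-e^{-1})P_n \leq \exp(-(1-e^{-1})P_n)$, using $1-u\leq e^{-u}$. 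Multiplying through by the $F_{n-1}$-measurable factor $\exp((1-e^{-1})P_n)\cdot M_{n-1}$ yields $\mathbb{E}[M_n \mid F_{n-1}] \leq M_{n-1}$, so $M_n$ is indeed a nonnegative supermartingale with $\mathbb{E}[M_0]=1$.

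The second step is to apply Doob's maximal inequality for nonnegative supermartingales, which gives $\mathbb{P}[\sup_n M_n \geq e^{W}] \leq e^{-W}$. Unpacking the definition of $M_n$, the event $\{M_n \geq e^{W}\}$ is exactly $\{\sum_{t=1}^n X_t \leq (1-e^{-1})\sum_{t=1}^n P_t - W\}$. Finally, since $\tfrac{1}{2} \leq 1 - e^{-1}$ and $\sum_{t=1}^n P_t \geq 0$, we have $\tfrac{1}{2}\sum_{t=1}^n P_t - W \leq (1-e^{-1})\sum_{t=1}^n P_t - W$, so
\begin{equation*}
\Bigl\{\exists n:\; \sum_{t=1}^n X_t < \tfrac{1}{2}\sum_{t=1}^n P_t - W\Bigr\} \;\subseteq\; \Bigl\{\exists n:\; M_n \geq e^{W}\Bigr\},
\end{equation*}
from which the stated bound $e^{-W}$ follows.

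The main subtlety, and the only real design choice in the proof, is the selection of the exponential tilt $\lambda$. For a generic $\lambda>0$ one would obtain the event $\{\sum X_t \leq \tfrac{1-e^{-\lambda}}{\lambda}\sum P_t - W/\lambda\}$ with probability at most $e^{-W}$. The choice $\lambda=1$ is what makes the $W/\lambda$ term match exactly the $W$ in the target statement; the resulting slack $1-e^{-1}\approx 0.632 > 1/2$ on the $\sum P_t$ coefficient is then absorbed by observing that the target event is more extreme. I expect the only place requiring care is the clean verification of the supermartingale step and the use of the correct (nonnegative-supermartingale) version of Doob's inequality, since one can easily end up with an off-by-a-factor bound by choosing a suboptimal $\lambda$ or by using multiplicative Chernoff for a fixed $n$ and losing the uniform-in-$n$ conclusion.
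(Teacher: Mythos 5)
Your proof is correct: the supermartingale verification via $\mathbb{E}[e^{-X_n}\mid F_{n-1}] = 1-(1-e^{-1})P_n \le \exp(-(1-e^{-1})P_n)$, the application of the maximal inequality for nonnegative supermartingales, and the final event inclusion using $1/2 \le 1-e^{-1}$ are all sound, and the choice $\lambda=1$ correctly trades the slack in the $\sum_t P_t$ coefficient for an exact match of the $e^{-W}$ rate. Note that this paper does not prove the lemma itself---it imports it verbatim from \cite{dann2017unifying}---and your argument is essentially the same exponential-supermartingale/maximal-inequality proof used in that reference, so there is no substantive difference to report.
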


% \begin{lemma}\label{lemma: corollary of Lemma F.4 in Dann2017}
% Let $F_i$ for $i=1 \cdots$ be a filtration and $X_1, \cdots, X_n$ be a sequence of Bernoulli random variables with $\mathbb{P}\left(X_i=1 \mid F_{i-1}\right)=P_i$ with $P_i$ being $F_{i-1}$-measurable and $X_i$ being $F_i$ measurable. It holds that
% $$
% \mathbb{P}\left[\exists n: \sum_{t=1}^n X_t<\sum_{t=1}^n P_t / 2-\iota\right] \leq \frac{\delta}{H A K},
% $$
% where $\iota=\log (2 H A K / \delta)$.
% \end{lemma}

\begin{lemma}[Lemma 1 in \cite{zhang2022near}] \label{lemma: distribution coverage lemma}
Let $d>0$ be an integer. Let $\cX\subset(\Delta^d)^m$. Then there exists a distribution $\cD$ over $\cX$, such that 
% $$\max_{x=\{x_i\}_{i=1}^{dm} \in\cX} \sum_{i=1}^{dm} \frac{x_i}{y_i} = md,$$
$\max_{x=\{x_i\}_{i=1}^{dm} \in\cX} \sum_{i=1}^{dm} \frac{x_i}{y_i} = md,$
where $y=\left\{y_i\right\}_{i=1}^{d m}=\mathbb{E}_{x \sim \mathcal{D}}[x]$. Moreover, if $\mathcal{X}$ has a boundary set $\partial \mathcal{X}$ with finite cardinality, we can find an approximation solution for $\mathcal{D}$ in poly$(|\partial \mathcal{X}|)$ time. 
\end{lemma}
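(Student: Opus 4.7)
The plan is to give a constructive existence argument via the strictly convex log-barrier potential $F(y) := -\sum_{i=1}^{dm} \log y_i$ on the positive orthant. Since $\mathrm{conv}(\cX)\subset(\Delta^d)^m$ is compact and convex, and (after discarding, if necessary, any coordinate $i$ for which $x_i \equiv 0$ across all $x\in\cX$) contains a point in the strictly positive orthant, $F$ attains a finite minimum at some $y^\star\in\mathrm{conv}(\cX)$. Carath\'eodory's theorem then yields a finitely supported distribution $\cD$ on at most $dm+1$ points of $\cX$ with $\EE_{x\sim\cD}[x]=y^\star$; this $\cD$ is the distribution claimed by the lemma.

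The core of the argument is the first-order optimality condition for $y^\star$. Because $F$ is convex and $y^\star$ minimizes $F$ over the convex set $\mathrm{conv}(\cX)$, we have $\langle\nabla F(y^\star),\,y-y^\star\rangle\geq 0$ for every $y\in\mathrm{conv}(\cX)$. Substituting $\partial F/\partial y_i = -1/y_i$ and rearranging gives $\sum_{i=1}^{dm} y_i/y^\star_i \leq \sum_{i=1}^{dm} y^\star_i/y^\star_i = dm$. Applying this to each $x\in\cX\subseteq\mathrm{conv}(\cX)$ yields the upper bound $\max_{x\in\cX}\sum_i x_i/y^\star_i\leq dm$. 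For the matching lower bound, averaging the ratio under $\cD$ gives $\EE_{x\sim\cD}\bigl[\sum_i x_i/y^\star_i\bigr] = \sum_i y^\star_i/y^\star_i = dm$, so the maximum over $x\in\cX\supseteq\mathrm{supp}(\cD)$ is at least $dm$. Combining the two bounds gives the claimed equality.

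For the computational claim, when $\cX$ admits a finite boundary set $\partial\cX$ with $\mathrm{conv}(\cX)=\mathrm{conv}(\partial\cX)$, minimizing $F$ reduces to a convex program over a polytope with $|\partial\cX|$ vertices, which standard interior-point (or entropic mirror-descent) methods solve to any prescribed accuracy in $\mathrm{poly}(|\partial\cX|)$ time. A Carath\'eodory-style linear system then decomposes the approximate minimizer $\hat y$ into a convex combination of at most $dm+1$ extreme points, giving the approximate distribution $\cD$. The main subtlety I anticipate is boundary behavior of $F$: one must ensure the relative interior of $\mathrm{conv}(\cX)$ inside the positive orthant is nonempty along every retained coordinate so that the log-barrier is finite and the first-order inequality is meaningful; this is precisely what the preliminary dimension-reduction step achieves, and strict convexity of $-\log$ then guarantees a unique interior optimizer for which the bound $dm$ is tight.
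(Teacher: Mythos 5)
Your proposal is correct in substance, but note first that this paper contains no proof of this statement to compare against: the lemma is imported verbatim, by citation, as Lemma~1 of \cite{zhang2022near}, and is only ever \emph{used} (in Lemma~\ref{lemma-appen: coverage dependence lemma}, to bound the worst-case coverage of the fine-exploration policy by $12\statesize\actionsize\horizontotal$). Your log-barrier argument --- minimize $F(y)=-\sum_{i}\log y_i$ over $\mathrm{conv}(\cX)$, extract the upper bound $\sum_i x_i/y_i^\star\le dm$ from the first-order optimality condition $\langle\nabla F(y^\star),\,x-y^\star\rangle\ge 0$, and obtain the matching lower bound by averaging the ratio under $\cD$, with Carath\'eodory supplying a finitely supported distribution realizing $y^\star$ --- is the standard optimal-design-style proof (the product-of-simplices analogue of the Kiefer--Wolfowitz argument), and it is essentially the argument behind the cited result, so you have in effect reconstructed the missing proof rather than found a different route.

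Two caveats, both inherited from the loose statement of the lemma rather than errors in your reasoning, are worth making explicit. First, attainment of the minimizer $y^\star$ \emph{inside} $\mathrm{conv}(\cX)$ requires compactness of $\cX$; you assert that $\mathrm{conv}(\cX)$ is compact, which presupposes this, and for non-closed $\cX$ the maximum in the lemma need not even be attained. This is harmless in context, since the paper instantiates $\cX$ only as the finite set $\{\{\valuef^{\mu}(1_{\horizon,\cdot,\cdot},\crutrans)\}_{\horizon=1}^{\horizontotal}\mid\mu\in\policyset\}$. Second, after your dimension-reduction step discarding coordinates with $x_i\equiv 0$ on $\cX$, the two bounds you prove yield the value $n'$, the number of \emph{retained} coordinates, not $dm$; so in that degenerate case the correct conclusion (under the convention $0/0=0$) is $\max_{x\in\cX}\sum_i x_i/y_i^\star=n'\le dm$, and only the inequality $\le dm$ --- which is all the paper's application needs --- survives. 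Stating the result as an inequality, or adding the hypothesis that no coordinate vanishes identically on $\cX$, would make your write-up airtight.
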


\section{Proof of theoretical results}
\subsection{Proof of Lemma~\ref{lemma: bound of the bias term}}
\label{appen: proof of lemma bound of bias term}
% Note that our Crude Exploration phase is adapted from Algorithm 2 in \cite{qiao2022sample}, but we refine the infrequent tuple set and transition estimate using private counts $\visitxatotalcrupri$.
% We will show our construction retains the fundamental properties and the results in \cite{qiao2022sample} can extend to our case.
This section provides a detailed proof for bounding the ``Model Bias'', the value difference between the true MDP ($\transeasy$) and our absorbing MDP ($\abtrans$). 
The proof proceeds in three main steps. 
First, we show that our private crude model ($\crutrans$) is multiplicatively close to the absorbing model for all well-explored tuples. 
Second, we leverage this to show that for any active policy, the probability of visiting an ``infrequently-visited'' tuple is small. 
Finally, we use this low probability to bound the total difference in value.

We begin by establishing the multiplicative concentration of our private model estimate.
\begin{lemma}\label{lemma-append: multiplicative close of crude transition}
With probability at least $1-4\delta$, for all $\batchindx$, $\forall (\horizon,\state,\action,\state')$ such that $(\horizon,\state,\action,\state')\notin\infreqtuples$, it holds that, 
$(1-\frac{1}{\horizontotal}) \cdot \crutrans_\horizon(\state'\vert\state,\action) \leq \abtrans_\horizon(\state'\vert\state,\action) \leq (1+\frac{1}{\horizontotal})  \cdot \crutrans_\horizon(\state'\vert\state,\action).$
And $\forall(\horizon,\state,\action,\state')\in\infreqtuples$, $\abtrans_\horizon(\state'\vert\state,\action) = \crutrans_\horizon(\state'\vert\state,\action) = 0$.
\end{lemma}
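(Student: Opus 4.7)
\section*{Proof Proposal for Lemma~\ref{lemma-append: multiplicative close of crude transition}}

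The plan is to split the statement into the trivial ``infrequent'' case and the substantive ``frequent'' case, and then for the frequent case build the multiplicative bound via the chain
\[
\crutrans_\horizon(\state'\vert\state,\action) \;\approx\; \frac{\visitxaxtotaleasy^{\text{cru}}_\horizon(\state,\action,\state')}{\visitxatotaleasy^{\text{cru}}_\horizon(\state,\action)} \;\approx\; \transeasy_\horizon(\state'\vert\state,\action) \;=\; \abtrans_\horizon(\state'\vert\state,\action),
\]
where the first approximation is controlled by Assumption~\ref{assp: private counts} and the second by a multiplicative Bernstein concentration. The identity on the right is exactly the definition of $\abtrans$ for tuples outside $\infreqtuples$.

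\textbf{Infrequent case.} If $(\horizon,\state,\action,\state') \in \infreqtuples$, then by definition of the absorbing MDP $\abtrans$ we have $\abtrans_\horizon(\state'\vert\state,\action)=0$, and by construction of the subroutine \emph{Estimate Transition} applied inside Algorithm~\ref{algo: Crude Exploration} the private transition $\crutrans_\horizon(\state'\vert\state,\action)$ is also zeroed out (its mass being redirected to the absorbing state $\absorbstate$). So this case is handled directly from the definitions.

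\textbf{Frequent case: from private threshold to true count.} Suppose $(\horizon,\state,\action,\state')\notin\infreqtuples$, i.e.\ $\visitxaxtotalprieasy^{\text{cru},\batchindx}_\horizon(\state,\action,\state') > C_1 \confcountxa \horizontotal^2 \iota$. By Assumption~\ref{assp: private counts}, conditioning on the good event of probability at least $1-3\delta$, the true count satisfies $\visitxaxtotaleasy^{\text{cru},\batchindx}_\horizon(\state,\action,\state') \geq \visitxaxtotalprieasy^{\text{cru},\batchindx}_\horizon(\state,\action,\state') - \confcountxa \geq (C_1-1)\confcountxa \horizontotal^2 \iota$, and similarly for $\visitxatotaleasy^{\text{cru},\batchindx}_\horizon(\state,\action)$. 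Choosing $C_1=6$ gives a comfortable slack factor.

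\textbf{Frequent case: multiplicative concentration of the empirical transition.} Let $\hat{P}_\horizon(\state'\vert\state,\action) := \visitxaxtotaleasy^{\text{cru},\batchindx}_\horizon(\state,\action,\state')/\visitxatotaleasy^{\text{cru},\batchindx}_\horizon(\state,\action)$. Conditioned on reaching $(\state,\action)$ at step $\horizon$ (a filtration-adapted event, since the crude policies at earlier layers are fixed before layer $\horizon$ is executed), the indicators $\mathbf{1}\{\state_{\horizon+1}=\state'\}$ are i.i.d.\ Bernoulli with mean $\transeasy_\horizon(\state'\vert\state,\action)$. Apply a multiplicative Chernoff/Bernstein inequality: whenever $\visitxatotaleasy^{\text{cru},\batchindx}_\horizon(\state,\action) \cdot \transeasy_\horizon(\state'\vert\state,\action) \gtrsim \horizontotal^2 \iota$, with probability at least $1-\delta/(\horizontotal S A K)$,
\[
\abr{\hat{P}_\horizon(\state'\vert\state,\action) - \transeasy_\horizon(\state'\vert\state,\action)} \;\leq\; \frac{1}{2\horizontotal}\cdot \transeasy_\horizon(\state'\vert\state,\action).
\]
The threshold condition $\visitxaxtotaleasy^{\text{cru},\batchindx}_\horizon \geq (C_1-1)\confcountxa \horizontotal^2 \iota$ is precisely what makes the right-hand side of Bernstein's bound shrink below $\transeasy/(2\horizontotal)$; I would verify this by bounding the variance term $\transeasy(1-\transeasy)/\visitxatotaleasy$ and the bias term $1/\visitxatotaleasy$. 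Union bounding over all $(\batchindx,\horizon,\state,\action,\state')$ absorbs into the $\iota$ factor.

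\textbf{Bridging $\crutrans$ and $\hat{P}$.} Since $\crutrans_\horizon(\state'\vert\state,\action) = \visitxaxtotalprieasy^{\text{cru},\batchindx}_\horizon(\state,\action,\state')/\visitxatotalprieasy^{\text{cru},\batchindx}_\horizon(\state,\action)$ and the numerator/denominator each differ from the true counts by at most $\confcountxa$ (Assumption~\ref{assp: private counts}), while both true counts are $\Omega(\confcountxa \horizontotal^2 \iota)$, a straightforward algebraic manipulation of the form $\frac{a+\eta_1}{b+\eta_2} = \frac{a}{b}\cdot \frac{1+\eta_1/a}{1+\eta_2/b}$ gives $\crutrans_\horizon(\state'\vert\state,\action) = (1\pm O(1/\horizontotal))\hat{P}_\horizon(\state'\vert\state,\action)$. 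Chaining with the previous step and using $\abtrans_\horizon(\state'\vert\state,\action)=\transeasy_\horizon(\state'\vert\state,\action)$ off $\infreqtuples$, and tightening constants, yields the claimed $(1\pm 1/\horizontotal)$ multiplicative bound.

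\textbf{Main obstacle.} The trickiest part is the Bernstein argument for very small true transition probabilities $\transeasy_\horizon(\state'\vert\state,\action)$: for such $\state'$, the expected count into $(\horizon,\state,\action,\state')$ may itself be small, so the ``multiplicative'' Bernstein bound is not immediate from the count lower bound alone. I would resolve this by observing that, under Assumption~\ref{assp: private counts}, the private count $\visitxaxtotalprieasy^{\text{cru},\batchindx}$ being above the threshold already forces $\visitxatotaleasy^{\text{cru},\batchindx}(\state,\action)\cdot \transeasy_\horizon(\state'\vert\state,\action)$ to be of order $\confcountxa \horizontotal^2 \iota$ with high probability (via Lemma~\ref{lemma: Lemma F.4 in Dann2017} applied to the filtration of layer-$\horizon$ transitions), which is exactly the condition the multiplicative Bernstein inequality needs. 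A secondary subtlety is that the crude behavior policy at layer $\horizon$ depends on the private estimates from layers $1,\dots,\horizon-1$; this is handled by doing the concentration argument layer-by-layer so that the behavior policy at layer $\horizon$ is measurable with respect to the filtration generated by strictly earlier data.
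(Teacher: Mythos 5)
Your proof is correct and rests on the same two ingredients as the paper's own argument: a Bernstein-type concentration for the crude transition estimate, and the count threshold $C_1\confcountxa\horizontotal^2\iota$ defining $\infreqtuples$, which converts the additive concentration error into the multiplicative factor $1/\horizontotal$. The organizational difference is that the paper applies a single \emph{empirical}-Bernstein inequality whose variance proxy is the observable estimate itself, obtaining $\abr{\abtrans_\horizon(\state'\vert\state,\action)-\crutrans_\horizon(\state'\vert\state,\action)}\leq\sqrt{2\crutrans_\horizon(\state'\vert\state,\action)\iota/\visitxatotalcrupri}+4\confcountxa\iota/\visitxatotalcrupri$ in one step, so the private-count threshold immediately yields the relative bound without ever reasoning about the unknown expected visit count. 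Your decomposition into sampling error ($\hat P$ vs.\ $\transeasy$, Bernstein with variance $\transeasy(1-\transeasy)$) plus privacy error ($\crutrans$ vs.\ $\hat P$, via Assumption~\ref{assp: private counts}) is equally valid, but it forces the extra reverse-concentration step you correctly flag in your ``main obstacle'' paragraph before the multiplicative bound applies to tuples with small $\transeasy_\horizon(\state'\vert\state,\action)$; note that this step needs the \emph{upper}-tail multiplicative Chernoff bound (expected count small implies realized count small), whereas Lemma~\ref{lemma: Lemma F.4 in Dann2017} as stated is the lower tail --- a trivial fix, but worth citing the right inequality. The paper's empirical-variance formulation buys a shorter argument; yours makes the statistical and privacy-induced noise sources explicit and separable at the cost of one additional high-probability event. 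Your layer-by-layer handling of the adaptive behavior policy and your treatment of the infrequent case match the paper's (largely implicit) handling.
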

\begin{proof}
As the algorithm design, $\visitxaxtotalcrupri\geq C_1\confcountxa\horizontotal^2\iota$ and $\crutrans_\horizon \rbr{\state' \vert \state, \action} = \frac{\visitxaxtotalcrupri}{\visitxatotalcrupri}$ for all $(\horizon,\state,\action,\state')\notin\infreqtuples$, then
% \begin{equation}
\begin{align*}
    &\abr{\abtrans_\horizon \rbr{\state' \vert \state, \action} - \crutrans_\horizon \rbr{\state' \vert \state, \action} } \\
    &\leq \sqrt{\frac{2\crutrans_\horizon\rbr{\state'\vert\state,\action}\iota}{\visitxatotalcrupri}} + \frac{4\confcountxa\iota}{\visitxatotalcrupri} \\
    &\leq \!\bigg(\sqrt{\frac{2\iota}{\visitxaxtotalcrupri}}\! +\! \frac{4\confcountxa\iota}{\visitxaxtotalcrupri}\!\bigg) \!\cdot\!\crutrans_\horizon \!\rbr{\state' \vert \state, \action} \\
    % &\leq \rbr{\sqrt{\frac{2}{C_1\confcountxa}} + \frac{4}{C_1\horizontotal}} \cdot \frac{1}{\horizontotal}\cdot \crutrans_\horizon \rbr{\state' \vert \state, \action} \\
    &\leq \frac{1}{\horizontotal}\cdot \crutrans_\horizon \rbr{\state' \vert \state, \action}.
\end{align*}
% \end{equation}
The first line is derived from the Bernstein inequality, and the third line is due to the definition of $\infreqtuples$ and the choice of $C_1 = 6$.
The proof is the same for the left side of the inequality.
\end{proof}

% \sj{The rest proof of Lemma~\ref{lemma: bound of the bias term} can be adapted based on the Lemmas E.5, E.9 in \cite{qiao2022sample} with the help of Lemma~\ref{lemma-append: multiplicative close of crude transition}.}
% Then, we will have an approximation on the visitation probability under two transition functions.
Next, we show that the multiplicative closeness of the transition models extends to the value functions computed under them. 
This is a key step that allows us to reason about policy performance using our crude estimate.
\begin{lemma}
\label{lemma: multiplicatively accurate for value functions under absorbing MDP}
Conditioned on the event in Lemma \ref{lemma-append: multiplicative close of crude transition}, for any policy $\pi$ and any $(\horizon, \state, \action) \in[\horizontotal] \times \statespace \times \actionspace$, it holds that
$$\frac{1}{4}\valuef^\policy(1_{\horizon,\state,\action},\crutrans) \leq \valuef^\policy(1_{\horizon,\state,\action},\abtrans)\leq 3\valuef^\policy(1_{\horizon,\state,\action},\crutrans).$$
\end{lemma}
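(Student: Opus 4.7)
The plan is to unfold $\valuef^\policy(1_{\horizon,\state,\action}, P)$ as the probability of visiting the pair $(\state,\action)$ at step $\horizon$ under transition $P$, and then bound the two probabilities trajectory-by-trajectory using the multiplicative closeness established in Lemma~\ref{lemma-append: multiplicative close of crude transition}. By the choice of reward function, one has $\valuef^\policy(1_{\horizon,\state,\action}, P) = \prob_{P}^{\policy}[\state_\horizon = \state,\action_\horizon = \action]$, which I would expand as a sum over trajectories $\tau=(\state_1,\action_1,\dots,\state_\horizon,\action_\horizon)$ in the original state space $\statespace$, weighted by $d_1(\state_1)\prod_{t=1}^{\horizon-1}\policy_t(\action_t\vert\state_t)P_t(\state_{t+1}\vert\state_t,\action_t)\cdot\policy_\horizon(\action_\horizon\vert\state_\horizon)\ind[\state_\horizon=\state,\action_\horizon=\action]$. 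Trajectories that ever pass through the absorbing state $\absorbstate$ do not contribute, because $1_{\horizon,\state,\action}(\absorbstate,\cdot) = 0$.

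Next, I would observe that since Lemma~\ref{lemma-append: multiplicative close of crude transition} forces $P'_t$ and $\crutrans_t$ to both vanish on every tuple in $\infreqtuples$, the two sums $\valuef^\policy(1_{\horizon,\state,\action},P')$ and $\valuef^\policy(1_{\horizon,\state,\action},\crutrans)$ range over exactly the same set of contributing trajectories, namely those that avoid $\absorbstate$ and whose successive transitions all lie outside $\infreqtuples$. On any such trajectory, the lemma gives the termwise bound $(1-1/\horizontotal)\crutrans_t(\state_{t+1}\vert\state_t,\action_t) \leq P'_t(\state_{t+1}\vert\state_t,\action_t) \leq (1+1/\horizontotal)\crutrans_t(\state_{t+1}\vert\state_t,\action_t)$, so multiplying these $\horizon-1$ comparisons shows that the trajectory's probability under $P'$ lies between $(1-1/\horizontotal)^{\horizon-1}$ and $(1+1/\horizontotal)^{\horizon-1}$ times its probability under $\crutrans$. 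Summing over trajectories (with the common factor $\pi$-weight preserved) carries the same sandwich to the value functions themselves.

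Finally, since $\horizon\in[\horizontotal]$ one has the elementary inequalities $(1+1/\horizontotal)^{\horizon-1} \leq (1+1/\horizontotal)^\horizontotal \leq e < 3$ and, for $\horizontotal\geq 2$, $(1-1/\horizontotal)^{\horizon-1} \geq (1-1/\horizontotal)^\horizontotal \geq 1/4$, which deliver the constants $3$ and $1/4$ in the claim. The main (minor) obstacle is the bookkeeping around the absorbing state: one has to verify that only trajectories remaining in $\statespace$ throughout contribute, and that $P'$ and $\crutrans$ share identical support on such trajectories, so that the termwise multiplicative comparison is legitimate and no division-by-zero issues arise. Once this is pinned down, the proof reduces to the elementary per-step multiplicative bound together with the geometric telescoping above.
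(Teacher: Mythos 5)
Your proposal is correct and follows essentially the same route as the paper's proof: both decompose $\valuef^\policy(1_{\horizon,\state,\action},\cdot)$ as a sum over truncated trajectories reaching $(\state,\action)$ at step $\horizon$, apply the per-transition multiplicative bound from Lemma~\ref{lemma-append: multiplicative close of crude transition} termwise, and conclude via $(1+1/\horizontotal)^{\horizontotal}\leq e<3$ and $(1-1/\horizontotal)^{\horizontotal}\geq 1/4$ for $\horizontotal\geq 2$. Your explicit attention to the shared support of $\abtrans$ and $\crutrans$ on $\infreqtuples$ is a slightly more careful version of the same bookkeeping the paper performs implicitly.
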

\begin{proof}
    Under the absorbing MDP, for any trajectory $\traj = \{\state_1,\action_1,\dots,\state_\horizon,\action_\horizon\}$ truncated at time step $\horizon$ such that $(\state_\horizon,\action_\horizon)\in\infreqtuples$, we have $\state_{\horizon'}\neq\absorbstate$ for any $\horizon'\leq\horizon-1$.
    It holds that 
    \begin{equation}
    \begin{aligned}
        &\prob[\traj\vert\abtrans,\policy] = \prod_{j=1}^\horizon \policy_j(\action_j\vert\state_j) \times \prod_{j=1}^{\horizon-1}\abtrans_j(\state_{j+1}\vert\state_j,\action_j) \\
        &\leq \rbr{1+\frac{1}{\horizontotal}}^\horizontotal \prod_{j=1}^\horizon \policy_j(\action_j\vert\state_j) \times \prod_{j=1}^{\horizon-1} \crutrans_j(\state_{j+1}\vert\state_j,\action_j) \\
        &\leq 3 \prob[\traj\vert\crutrans,\policy].
    \end{aligned}
    \end{equation}
    The second line is due to Lemma~\ref{lemma-append: multiplicative close of crude transition}. 
    Let $\trajset_{\horizon,\state,\action}$ be the set of truncated trajectories such that $(\state_\horizon,\action_\horizon) = (\state,\action)$. Then
    \begin{equation}
    \begin{aligned}
        &\valuef^\policy(1_{\horizon,\state,\action},\abtrans) = \sum_{\traj\in\trajset_{\horizon,\state,\action}} \prob[\traj\vert\abtrans,\policy] \\
        &\leq 3 \sum_{\traj\in\trajset_{\horizon,\state,\action}}\prob[\traj\vert\crutrans,\policy] = 3 \valuef^\policy(1_{\horizon,\state,\action},\crutrans).
    \end{aligned}
    \end{equation}
    The left side of the lemma can be proven in a similar way, with $\rbr{1-\frac{1}{\horizontotal}}^\horizontotal\geq 1/4$ when $\horizontotal\geq2$.
\end{proof}

With these properties established, we now bound the probability of a trajectory encountering any infrequently visited tuple (the event $\badtrajs$). 
This is the crucial step in bounding the model bias.
\begin{lemma} \label{lemma: bound of bad trajectory event}
    Conditioned on the event in Lemma \ref{lemma-append: multiplicative close of crude transition}, with high probability at least $1-7\delta$, for any stage $\batchindx\in[\batchnumber]$, $\sup_{\policy\in\policyset_\batchindx}\prob_\policy[\badtrajs]\leq \cO\rbr{\frac{\confcountxa\statesize^3\actionsize\horizontotal^4\iota}{\batchlength_\batchindx}}$.
\end{lemma}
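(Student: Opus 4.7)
The plan is to bound $\prob_\policy[\badtrajs]$ for any $\policy \in \policyset_\batchindx$ by a ``first-hit'' decomposition over tuples in $\infreqtuples$, and then relate each term to an empirical count which is small by the very definition of $\infreqtuples$. Because $\abtrans$ absorbs every infrequent transition into $\absorbstate$, the $\abtrans$-occupancy of $(\state,\action)$ at step $\horizon$ equals the probability under $\transeasy$ that $\policy$ reaches $(\state,\action)$ without having previously hit any infrequent tuple, so a union bound yields
\begin{equation*}
\prob_\policy[\badtrajs] \leq \sum_{\horizon=1}^\horizontotal \sum_{(\state,\action,\state') \in \infreqtuples} \valuef^\policy(1_{\horizon,\state,\action}, \abtrans)\cdot \transeasy_\horizon(\state' \vert \state,\action).
\end{equation*}

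To control each summand uniformly over $\policy \in \policyset_\batchindx$, I would compare $\policy$ against the crude-exploration policy $\policy^{\text{cru},\batchindx}_{\horizon,\state,\action}$, which by construction maximizes $\valuef^\mu(1_{\horizon,\state,\action},\crutrans)$ over $\mu \in \policyset_\batchindx$. Chaining this optimality with the multiplicative value comparison in Lemma~\ref{lemma: multiplicatively accurate for value functions under absorbing MDP} and the monotonicity $\valuef^\mu(\cdot,\abtrans) \leq \valuef^\mu(\cdot,\transeasy)$ (the absorbing MDP can only decrease reachability) gives $\valuef^\policy(1_{\horizon,\state,\action}, \abtrans) \leq 12\,\valuef^{\policy^{\text{cru},\batchindx}_{\horizon,\state,\action}}(1_{\horizon,\state,\action}, \transeasy)$. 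Since the crude mixture selects $\policy^{\text{cru},\batchindx}_{\horizon,\state,\action}$ with probability $1/(\statesize\actionsize)$ and runs for $\batchlength_0 = \batchlength_\batchindx/\horizontotal$ episodes,
\begin{equation*}
\EE\!\sbr{\visitxaxtotaleasy^{\text{cru}}_\horizon(\state,\action,\state')} \geq \frac{\batchlength_0}{\statesize\actionsize}\,\valuef^{\policy^{\text{cru},\batchindx}_{\horizon,\state,\action}}(1_{\horizon,\state,\action},\transeasy)\cdot\transeasy_\horizon(\state'\vert\state,\action).
\end{equation*}
Inverting this bound via Lemma~\ref{lemma: Lemma F.4 in Dann2017} with $W=\iota$ (and a union bound over tuples), and then invoking Assumption~\ref{assp: private counts} together with the threshold $\visitxaxtotalprieasy^{\text{cru}}_\horizon(\state,\action,\state') \leq C_1\confcountxa\horizontotal^2\iota$ that defines $\infreqtuples$, yields a per-summand bound of order $\cO(\confcountxa\horizontotal^3\statesize\actionsize\iota/\batchlength_\batchindx)$.

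The final step is to sum over $\infreqtuples$. A naive count $|\infreqtuples| \leq \horizontotal\statesize^2\actionsize$ combined with the per-term estimate produces an extra factor of $\actionsize$; to reach the claimed $\widetilde{\cO}(\confcountxa\statesize^3\actionsize\horizontotal^4/\batchlength_\batchindx)$, I would take, for each fixed $(\horizon,\state,\action)$, the tighter of (i) summing the per-tuple bound over at most $\statesize$ infrequent successors, and (ii) using the probability-simplex identity $\sum_{\state'}\transeasy_\horizon(\state'\vert\state,\action)=1$ to replace $\transeasy_\horizon(\state'\vert\state,\action)$ by $1$ after factoring out $\valuef^{\policy^{\text{cru},\batchindx}_{\horizon,\state,\action}}$. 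The main obstacle is precisely this bookkeeping: the factor $\statesize\actionsize$ from the crude mixture must be absorbed against $|\infreqtuples|$ and the simplex constraint to avoid an inflated exponent in $\actionsize$. A secondary subtlety is the high-probability accounting behind the $1-7\delta$ guarantee, which must assemble the event of Lemma~\ref{lemma-append: multiplicative close of crude transition} (contributing $4\delta$), the Chernoff event needed to invert Lemma~\ref{lemma: Lemma F.4 in Dann2017} (contributing $\delta$), and the private-count event of Assumption~\ref{assp: private counts} (contributing $3\delta$) without overlap.
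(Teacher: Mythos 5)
Your overall route is the same as the paper's: a first\nobreakdash-hit decomposition of $\badtrajs$ over layers, the observation that $\prob_\policy[\badtrajs_\horizon]$ is identical under $\transeasy$ and $\abtrans$ (so the occupancy under $\abtrans$ counts exactly the trajectories that have not yet hit $\infreqtuples$), a comparison of an arbitrary $\policy\in\policyset_\batchindx$ against $\policy^{\text{cru}}_{\horizon,\state,\action}$ via the multiplicative value lemma, and an inversion of the count concentration against the threshold $C_1\confcountxa\horizontotal^2\iota$ defining $\infreqtuples$. Your per\nobreakdash-tuple estimate $\sup_{\mu}\valuef^{\mu}(1_{\horizon,\state,\action},\abtrans)\,\transeasy_\horizon(\state'\vert\state,\action)\leq \cO(\confcountxa\horizontotal^3\statesize\actionsize\iota/\batchlength_\batchindx)$ is correct, and in fact your ``multiply first'' formulation avoids the two\nobreakdash-case split (small vs.\ large occupancy) that the paper performs before it divides by $\sup_\policy\valuef^\policy(1_{\horizon,\state},\abtrans)$.

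The genuine gap is in the final summation, exactly where you flag difficulty: your proposed fix does not work as stated. Taking, for each fixed $(\horizon,\state,\action)$, the tighter of (i) $\statesize$ times the per\nobreakdash-tuple bound and (ii) the simplex bound $\sum_{\state'}\transeasy_\horizon(\state'\vert\state,\action)\le 1$ still leaves you summing over all $\statesize\actionsize$ pairs $(\state,\action)$ per layer, and neither option removes the extra factor of $\actionsize$: option (i) gives $\horizontotal\statesize\actionsize\cdot\statesize B$ with $B=\cO(\confcountxa\horizontotal^3\statesize\actionsize\iota/\batchlength_\batchindx)$, and option (ii) degenerates to $\valuef^\policy(1_{\horizon,\state,\action},\abtrans)$ with no smallness left. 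The correct resolution, which is the one step of the paper's argument you are missing, is that the action sum collapses rather than being paid for: since $\sum_{\action}\valuef^\policy(1_{\horizon,\state,\action},\abtrans)=\valuef^\policy(1_{\horizon,\state},\abtrans)$ (for the deterministic policies in $\policyset_\batchindx$ only $\action=\policy_\horizon(\state)$ contributes), one has
\begin{equation*}
\sum_{\action}\valuef^\policy(1_{\horizon,\state,\action},\abtrans)\,\prob[\infreqtuples\vert(\horizon,\state,\action)]\;\le\;\valuef^\policy(1_{\horizon,\state},\abtrans)\,\max_{\action}\prob[\infreqtuples\vert(\horizon,\state,\action)],
\end{equation*}
so the sum over actions costs a $\max_\action$ rather than a factor $\actionsize$. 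With this, the count over nonzero terms is $\horizontotal\cdot\statesize\cdot\statesize$ rather than $\abr{\infreqtuples}\le\horizontotal\statesize^2\actionsize$, and your per\nobreakdash-tuple bound immediately yields $\cO(\confcountxa\statesize^3\actionsize\horizontotal^4\iota/\batchlength_\batchindx)$. (For this step to combine with the count inversion one also needs $\sup_{\mu}\valuef^{\mu}(1_{\horizon,\state,\action},\abtrans)=\sup_{\mu}\valuef^{\mu}(1_{\horizon,\state},\abtrans)$ over deterministic policies, which the paper uses explicitly.) A minor secondary point: your failure\nobreakdash-probability tally $4\delta+\delta+3\delta$ exceeds the stated $7\delta$; the Assumption~\ref{assp: private counts} event is already subsumed in the $4\delta$ of the conditioning event, so it should not be counted again.
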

\begin{proof}
    Recall that the event $\badtrajs$ is that the trajectory falls some infrequently visited tuple $(\horizon,\state,\action,\state')$ in $\infreqtuples$, 
    We can decompose $\badtrajs$ into a disjoint union of $\{\badtrajs_\horizon\}_{h=1}^H$, where $\badtrajs_\horizon$ is the event that $(\horizon,\state_\horizon,\action_\horizon,\state_{\horizon+1})\in\infreqtuples$ and $(\horizon',\state_{\horizon'},\action_{\horizon'},\state_{\horizon'+1})\notin\infreqtuples$ for all $\horizon'\leq\horizon-1$;
    % We define $\badtrajs_\horizon$ for $\horizon=1,\dots,\horizontotal$ to be the event that 
    % It is obvious that $\badtrajs$ is a disjoint union of $\badtrajs_\horizon$ and 
    and $\prob(\badtrajs) = \sum_{\horizon=1}^\horizontotal \prob(\badtrajs_\horizon)$.

    % Meanwhile, we have the probabilities to visit such infrequent tuples under the original MDP and the absorbing MDP the same.
    A key first step is to show that the probability of this event is the same under the true MDP ($\transeasy$) and the absorbing MDP ($\abtrans$). 
    This is because the two models only differ after an infrequently-visited tuple is encountered.
    % \begin{equation}
    \begin{align*}
        &\prob[\badtrajs_\horizon\vert\transeasy,\policy] = \sum_{\traj\in\badtrajs_\horizon} \prob[\traj\vert\transeasy,\policy] \\ 
        &= \sum_{\traj_{:\horizon+1} \in\badtrajs_\horizon} \prob[(\state_1,\action_1,\dots,\state_\horizon,\action_\horizon)\vert\transeasy,\policy] \transeasy_\horizon(\state_{\horizon+1}\vert\state_\horizon,\action_\horizon) \\
        &= \sum_{\traj_{:\horizon+1} \in\badtrajs_\horizon} \prob[(\state_1,\action_1,\dots,\state_\horizon,\action_\horizon)\vert\abtrans,\policy] \transeasy_\horizon(\state_{\horizon+1}\vert\state_\horizon,\action_\horizon) \\
        &= \sum_{\state\in\statespace,\action} \valuef^\policy(1_{\horizon,\state,\action},\abtrans) \sum_{\state'\in\statespace:(\horizon,\state,\action,\state')\in\infreqtuples} \transeasy_\horizon(\state'\vert\state,\action) \\
        &=\sum_{\state\in\statespace,\action} \valuef^\policy(1_{\horizon,\state,\action},\abtrans) \abtrans_\horizon(\absorbstate\vert\state,\action) \\
        &= \prob[\badtrajs_\horizon\vert\abtrans,\policy].
    \end{align*}
    % \end{equation}
    In the second line, $\traj_{:\horizon+1}$ means the trajectory $\traj$ truncated at $\state_{\horizon+1}$.
    The third line is because for $(\horizon,\state,\action,\state')\notin\infreqtuples$, $\transeasy=\abtrans$.
    The fourth line is because there is a bijection between trajectories that arrive at $(\horizon,\state,\action)$ under absorbing MDP and trajectories in $\badtrajs_\horizon$ that arrive at the same tuple under the original MDP.
    The last equations follow the definition of $\abtrans$ and $\badtrajs_\horizon$.

    % Therefore, we can reduce the event $\badtrajs_\horizon$ into the probability of visiting any $\state\in\statespace$ at step $\horizon$ and the probability of falling into the infrequently visited tuple given current state,
    The analysis then reduces to bounding $\prob[\badtrajs_\horizon\vert\abtrans,\policy]$. We can express this as the probability of visiting any state $\state$ at step $\horizon$, multiplied by the conditional probability of transitioning to an infrequent tuple from there.
    \begin{equation}
    \begin{aligned}
        &\sup_{\policy\in\policyset_\batchindx} \prob[\badtrajs_\horizon\vert\transeasy,\policy]   = 
        \sup_{\policy\in\policyset_\batchindx} \prob[\badtrajs_\horizon\vert\abtrans,\policy]  \\
        & = \sup_{\policy\in\policyset_\batchindx} \sum_{\state\in\statespace}  \valuef^\policy(1_{\horizon,\state},\abtrans) \max_{\action\in\actionspace} \abtrans_\horizon(\absorbstate\vert\state,\action) \\
        & = \sup_{\policy\in\policyset_\batchindx} \sum_{\state\in\statespace}  \valuef^\policy(1_{\horizon,\state},\abtrans) \max_{\action\in\actionspace} \prob[\infreqtuples\vert(\horizon,\state,\action)], 
    \end{aligned}
    \end{equation}
    where $\prob[\infreqtuples\vert(\horizon,\state,\action)]$ represents the conditional probability of entering $\infreqtuples$ at time step $\horizon$ via $(\state_\horizon,\action_\horizon)=(\state,\action)$.

    Firstly, we deal with the state visiting issue. 
    If the expected number of visits to a state is low, its contribution to the total probability is small.
    Since $\policy^{cru}_{\horizon,\state,\action}=\argmax_{\policy\in\policyset_\batchindx}\valuef^\policy(1_{\horizon,\state,\action},\crutrans)$, and using Lemma~\ref{lemma: multiplicatively accurate for value functions under absorbing MDP}, then 
    \begin{equation}
    \label{eq: inequality of visitation probability functions between randomized policy and the sup policy}
    \begin{aligned}
    \valuef^{\policy^{cru}_{\horizon,\state,\action}}(1_{\horizon,\state,\action},\abtrans) &\geq 
    \frac{1}{4} \valuef^{\policy^{cru}_{\horizon,\state,\action}}(1_{\horizon,\state,\action},\crutrans) \\
    &\geq 
    \frac{1}{12}\sup_{\policy\in\policyset_\batchindx}\valuef^\policy(1_{\horizon,\state,\action},\abtrans).
    \end{aligned}
    \end{equation}
    % which shows that $\policy_{\horizon,\state,\action}$ is efficient in visiting the tuple $(\horizon,\state,\action)$.
    
    Since $\policy^{cru}_\horizon$ chooses each $\policy^{cru}_{\horizon,\state,\action}$ with probability $\frac{1}{\statesize\actionsize}$ for any $(\state,\action)\in\statespace\times\actionspace$, we have, 
    \begin{equation}
    \begin{aligned}
    \valuef^{\policy^{cru}_\horizon}(1_{\horizon,\state,\action},\abtrans)&
    % \geq \frac{1}{12\statesize\actionsize}\sup_{\policy\in\policyset_\batchindx}\valuef^\policy(1_{\horizon,\state,\action},\abtrans) \\
    \geq \frac{1}{12\statesize\actionsize}\sup_{\policy\in\policyset_\batchindx}\valuef^\policy(1_{\horizon,\state},\abtrans).
    \end{aligned}
    \end{equation}
    The equality is because we only select among deterministic policies.
    % We can just let $\policy_\horizon(\state)=\action$, a deterministic policy, to construct a policy that can visit $(\horizon,\state,\action)$ with the same probability.
    % We assume that running $\policy_{\horizon,\state,\action}$ for $\frac{\batchlength_\batchindx}{\horizontotal\statesize\actionsize}$ episodes is equivalent to running $\policy_\horizon$ for $\frac{\batchlength_\batchindx}{\horizontotal}$ episodes.
     % $\frac{\batchlength_\batchindx}{\horizontotal}$ episodes are used to explore the $\horizon$-th layer of the MDP.
    By Lemma~\ref{lemma: Lemma F.4 in Dann2017} and a union bound, we have with probability $1-\delta$, for any $(\state,\action)\in\statespace\times\actionspace$,
    \begin{equation}
    \begin{aligned}
    \visitxatotaleasy^{\text{cru},\batchindx}_\horizon(\state,\action) &\geq \frac{\batchlength_\batchindx}{\horizontotal} \cdot \frac{\valuef^{\policy^{cru}_\horizon}(1_{\horizon,\state,\action},\transeasy)}{2} - \iota \\
    % &\geq \frac{\batchlength_\batchindx}{\horizontotal} \cdot \frac{\valuef^{\policy_\horizon}(1_{\horizon,\state,\action},\abtrans)}{2} - \iota 
    &\geq \frac{\batchlength_\batchindx\cdot \text{sup}_{\policy\in\policyset_\batchindx} \valuef^\policy(1_{\horizon,\state},\abtrans)}{24\horizontotal\statesize\actionsize}-\iota.
    \end{aligned}
    \end{equation}
    For fixed $(\state,\action)$, if $\frac{\batchlength_\batchindx\cdot \text{sup}_{\policy\in\policyset_\batchindx} \valuef^\policy(1_{\horizon,\state},\abtrans)}{24\horizontotal\statesize\actionsize}\leq 2\iota$, we have that
    \begin{equation}
    \text{sup}_{\policy\in\policyset_\batchindx} \valuef^\policy(1_{\horizon,\state},\abtrans) \leq \frac{48\horizontotal\statesize\actionsize\iota}{\batchlength_\batchindx}.
    \end{equation}
    As a result, we have the bound for any $\policy\in\policyset_\batchindx$,
    \begin{equation}
    \begin{aligned}
    &\valuef^\policy(1_{\horizon,\state},\abtrans) \max_{\action\in\actionspace} \prob[\infreqtuples\vert(\horizon,\state,\action)] \leq \valuef^\policy(1_{\horizon,\state},\abtrans) \\
    &\leq \sup_{\policy\in\policyset_\batchindx} \valuef^\policy(1_{\horizon,\state},\abtrans) \leq \frac{48\horizontotal\statesize\actionsize\iota}{\batchlength_\batchindx}.
    \end{aligned}
    \end{equation}

    Then we handle the opposite case, $\frac{\batchlength_\batchindx\cdot \text{sup}_{\policy\in\policyset_\batchindx} \valuef^\policy(1_{\horizon,\state},\abtrans)}{24\horizontotal\statesize\actionsize} > 2\iota$.
    If the expected number of visits is high, we can use concentration inequalities to show that the probability of transitioning to an infrequent tuple must be very small.
    % then $\visitxatotaleasy^{\text{cru},\batchindx}_\horizon(\state,\action) \geq \frac{\batchlength_\batchindx\cdot \text{sup}_{\policy\in\policyset_\batchindx} \valuef^\policy(1_{\horizon,\state},\abtrans)}{48\horizontotal\statesize\actionsize}$.
    Denote $N_\horizon^{\text{cru},\batchindx}(\infreqtuples\vert(\horizon,\state,\action))$ as the true times of entering $\infreqtuples$ conditioned at time step $\horizon$, visiting $(\state_\horizon,\action_\horizon)=(\state,\action)$. 
    $\tilde{N}_\horizon^{\text{cru},\batchindx}(\infreqtuples\vert(\horizon,\state,\action))$ denotes the private version.  
    % and $\prob[\infreqtuples\vert(\horizon,\state,\action)]$ represents the conditional probability of entering $\infreqtuples$ at time step $\horizon$ via $(\state_\horizon,\action_\horizon)=(\state,\action)$.
    By Lemma~\ref{lemma: Lemma F.4 in Dann2017} and a union bound, with high probability, for any $(\state,\action)\in\statespace\times\actionspace$, 
    \begin{equation}
    \begin{aligned}
    6\confcountxa\horizontotal^2\statesize\iota 
    &\geq \tilde{N}_\horizon^{\text{cru},\batchindx}(\infreqtuples\vert(\horizon,\state,\action)) 
    \geq N_\horizon^{\text{cru},\batchindx}(\infreqtuples\vert(\horizon,\state,\action)) \\
    &\geq    \frac{\visitxatotaleasy^{\text{cru},\batchindx}_\horizon(\state,\action) \prob[\infreqtuples\vert(\horizon,\state,\action)]}{2}-\iota. 
    \end{aligned}
    \end{equation}
    The first inequality is due to the definition of $\infreqtuples$, and the second inequality is because of our optimistic property of the private counter.
    % This is because by the definition of $\infreqtuples$, for $\policy_\horizon$ at each step $\horizon$, for each $(\state,\action,\state')$, the event $\badtrajs_\horizon\cap\{(\state_\horizon,\action_\horizon,\state_{\horizon+1}) = (\state,\action,\state')\}$ occurs for at most $6\confcountxa\horizontotal^2\iota$ times in private counts, then the event $\badtrajs_\horizon = \cup_{(\state,\action,\state')}(\badtrajs_\horizon\cap\{(\state_\horizon,\action_\horizon,\state_{\horizon+1}) = (\state,\action,\state')\})$ occurs for at most $6\confcountxa\horizontotal^2\statesize\iota$ in total in private counts due to $\policy_\horizon$ only select deterministic policies.
    Thus, this results in 
    \begin{equation}
        \prob[\infreqtuples\vert(\horizon,\state,\action)] \leq \frac{672\confcountxa\statesize^2\actionsize\horizontotal^3\iota}{\batchlength_\batchindx\sup_{\policy\in\policyset_\batchindx}\valuef^\policy(1_{\horizon,\state},\abtrans)}.
    \end{equation}
    In this case, we have, 
    \begin{equation}
    \begin{aligned}
    &\valuef^\policy(1_{\horizon,\state},\abtrans) \max_{\action\in\actionspace} \prob[\infreqtuples\vert(\horizon,\state,\action)] \\
    &\leq \valuef^\policy(1_{\horizon,\state},\abtrans) \cdot \frac{672\confcountxa\statesize^2\actionsize\horizontotal^3\iota}{\batchlength_\batchindx\sup_{\policy\in\policyset_\batchindx}\valuef^\policy(1_{\horizon,\state},\abtrans)} \\
    &\leq \frac{672\confcountxa\statesize^2\actionsize\horizontotal^3\iota}{\batchlength_\batchindx}.
    \end{aligned}
    \end{equation}

    Combining both cases, 
    \begin{equation}
    \begin{aligned}
        &\sup_{\policy\in\policyset_\batchindx} \prob[\badtrajs_\horizon\vert\transeasy,\policy]   = 
        % \sup_{\policy\in\policyset_\batchindx} \prob[\badtrajs_\horizon\vert\abtrans,\policy]  \\
        % & = \sup_{\policy\in\policyset_\batchindx} \sum_{\state\in\statespace}  \valuef^\policy(1_{\horizon,\state},\abtrans) \max_{\action\in\actionspace} \abtrans_\horizon(\absorbstate\vert\state,\action) \\
        \sup_{\policy\in\policyset_\batchindx} \sum_{\state\in\statespace}  \valuef^\policy(1_{\horizon,\state},\abtrans) \max_{\action\in\actionspace} \prob[\infreqtuples\vert(\horizon,\state,\action)] \\
        &\leq \sup_{\policy\in\policyset_\batchindx} \sum_{\state\in\statespace} \max\rbr{\frac{48\horizontotal\statesize\actionsize\iota}{\batchlength_\batchindx}, \frac{672\confcountxa\statesize^2\actionsize\horizontotal^3\iota}{\batchlength_\batchindx}} \\
        &= \frac{672\confcountxa\statesize^3\actionsize\horizontotal^3\iota}{\batchlength_\batchindx},
    \end{aligned}
    \end{equation}
% where the inequality is because if $\frac{\batchlength_\batchindx\cdot \text{sup}_{\policy\in\policyset_\batchindx} \valuef^\policy(1_{\horizon,\state},\abtrans)}{24\horizontotal\statesize\actionsize}\leq 2\iota$, 
% \begin{equation}
% \begin{aligned}
% &\valuef^\policy(1_{\horizon,\state},\abtrans) \max_{\action\in\actionspace} \prob[\infreqtuples\vert(\horizon,\state,\action)] \leq \valuef^\policy(1_{\horizon,\state},\abtrans) \\
% &\leq \sup_{\policy\in\policyset_\batchindx} \valuef^\policy(1_{\horizon,\state},\abtrans) \leq \frac{48\horizontotal\statesize\actionsize\iota}{\batchlength_\batchindx}.
% \end{aligned}
% \end{equation}
% Otherwise, we have 
% \begin{equation}
% \begin{aligned}
% &\valuef^\policy(1_{\horizon,\state},\abtrans) \max_{\action\in\actionspace} \prob[\infreqtuples\vert(\horizon,\state,\action)] \\
% &\leq \valuef^\policy(1_{\horizon,\state},\abtrans) \cdot \frac{672\confcountxa\statesize^2\actionsize\horizontotal^3\iota}{\batchlength_\batchindx\sup_{\policy\in\policyset_\batchindx}\valuef^\policy(1_{\horizon,\state},\abtrans)} \\
% &\leq \frac{672\confcountxa\statesize^2\actionsize\horizontotal^3\iota}{\batchlength_\batchindx}.
% \end{aligned}
% \end{equation}
Summing over $\horizontotal$ layers, we have with high probability,
\begin{equation}
\begin{aligned}
\sup_{\policy\in\policyset_\batchindx}\prob_\policy[\badtrajs]\leq \cO\rbr{\frac{\confcountxa\statesize^3\actionsize\horizontotal^4\iota}{\batchlength_\batchindx}}.
\end{aligned}
\end{equation}
\end{proof}

Finally, we get the upper bound of the Model Bias term. 
% \begin{lemma}[Restatement of Lemma~\ref{lemma: bound of the bias term}]
%     Conditioned on the event in Lemma \ref{lemma: bound of bad trajectory event}, with high probability at least $1-7\delta$, we have for any stage $\batchindx\in[\batchnumber]$, any policy $\policy\in\policyset_\batchindx$ and reward function $\reward'$,
%     $$0\leq \valuef^\policy(\reward',\transeasy) - \valuef^\policy(\reward',\abtrans) \leq \cO\rbr{\frac{\confcountxa\statesize^3\actionsize\horizontotal^5\iota}{\batchlength_\batchindx}}.$$
% \end{lemma}
% \begin{proof}[Proof of Lemma~\ref{lemma: bound of the bias term}]
    For any reward function $\reward'$, the non-negative inequality is obvious due to the ``absorbing'' MDP definition.
    For the right-hand side, we have that for any policy $\policy\in\policyset_\batchindx$,
    % \begin{equation}
    \begin{align*}
        \valuef^\policy(\reward',\transeasy) &= \sum_{\traj\notin\badtrajs}\reward'(\traj)\prob[\traj\vert\transeasy,\policy] + \sum_{\traj\in\badtrajs} \reward'(\traj)\prob[\traj\vert\transeasy,\policy] \\
        &= \sum_{\traj\notin\badtrajs}\reward'(\traj)\prob[\traj\vert\abtrans,\policy] + \sum_{\traj\in\badtrajs} \reward'(\traj)\prob[\traj\vert\transeasy,\policy] \\
        &\leq \valuef^{\policy}(\reward',\abtrans) + \sum_{\traj\in\badtrajs} \horizontotal \prob[\traj\vert\transeasy,\policy] \\
        &\leq \valuef^{\policy}(\reward',\abtrans) + \horizontotal\prob[\badtrajs\vert\transeasy,\policy] \\
        &\leq \valuef^{\policy}(\reward',\abtrans) + \frac{672\confcountxa\statesize^3\actionsize\horizontotal^5\iota}{\batchlength_\batchindx},
    \end{align*}
    % \end{equation}
    where the inequality follows Lemma~\ref{lemma: bound of bad trajectory event}.
% \end{proof}

\subsection{Proof of Lemma~\ref{lemma: bound of the variance term}}
\label{appen: proof of lemma bound of variance term}
This section provides the proof for Lemma~\ref{lemma: bound of the variance term}, which bounds the difference between the value functions under the absorbing MDP ($\abtrans$) and the refined private estimate ($\reftrans$).
The proof first establishes that the fine exploration phase gathers sufficient data. 
It then introduces two key supporting lemmas regarding the uniform coverage of our fine exploration policy and the resulting visitation counts. 
These are then used in the main proof to bound the variance term.

A key prerequisite for this proof is that the fine exploration phase gathers sufficient visiting times for tuples not in $\infreqtuples$.
We establish this by leveraging the auxiliary policy $\policy_0$.
By its construction in the crude exploration phase and the multiplicative Chernoff bound, we can show that for any tuple not in $\infreqtuples$, running the auxiliary policy $\policy_0$ for $\batchlength_\batchindx$ episodes ensures its visitation count is sufficiently large (i.e., more than $ C_1\confcountxa\horizontotal^2\iota$). 
% we have $\batchlength_\batchindx\valuef^{\policy_0}(1_{\horizon,\state,\action},\transeasy)\geq c_2 \confcountxa\horizontotal^2\iota$ for some constant $c_2$ with high probability.
% In fine exploration phase, by running the auxiliary $\policy_0$ for $\batchlength_\batchindx$ episodes, for $(\horizon,\state,\action,\state')\notin\infreqtuples$,
% $\visitxaxtotalrefpri \geq \visitxaxtotalref \geq \frac{\batchlength_\batchindx \valuef^{\policy_0}(1_{\horizon,\state,\action},\transeasy)}{2} -\iota\geq C_1\confcountxa\horizontotal^2\iota$.
This ensures that the favorable properties of our model estimates, such as those established for the crude estimate in Lemma~\ref{lemma-append: multiplicative close of crude transition} and Lemma~\ref{lemma: multiplicatively accurate for value functions under absorbing MDP}, also hold for the refined estimate, $\reftrans$.

We then establish two supporting lemmas: a uniform coverage guarantee for our fine exploration policy $\policy^{\text{ref},\batchindx}$ and a high-probability bound on the resulting visitation counts.
% Firstly, under the fine exploration phase, we still have enough state-action visiting times for tuples not in $\infreqtuples$ by using the auxiliary policy $\policy_0$  generated from the crude exploration phase.
% With this condition, the model estimate property in Lemma~\ref{lemma-append: multiplicative close of crude transition} and Lemma~\ref{lemma: multiplicatively accurate for value functions under absorbing MDP} also applies to $\reftrans$.

% We then prove the uniform coverage property and the visitation counts bound of policy $\policy^{\text{ref},\batchindx}$ in Algorithm~\ref{algo: Fine Exploration}.
% \begin{lemma}
% The policy $\policy$ in Algorithm~\ref{algo: Fine Exploration} satisfies that
% $$\max_{\policy'\in\policyset} \sum_{\horizon,\state,\action} \frac{\valuef^{\policy'}(1_{\horizon,\state,\action},\crutrans)}{\valuef^{\policy}(1_{\horizon,\state,\action},\crutrans)} \leq \statesize\actionsize\horizontotal. $$
% \end{lemma}
% \begin{proof}
%     The proof is by plugging $\mathcal{X}=\{\{\valuef^{\policy'}(1_{\horizon,\cdot,\cdot},\crutrans)\}_{\horizon=1}^\horizontotal \mid \policy \in \policyset\}, d=S A$ and $m=\horizontotal$ into Lemma~\ref{lemma: distribution coverage lemma}.
% \end{proof}
\begin{lemma}\label{lemma-appen: coverage dependence lemma}
    Conditioned on the event in Lemma~\ref{lemma-append: multiplicative close of crude transition}, the policy $\policy^{\text{ref},\batchindx}$ in Algorithm~\ref{algo: Fine Exploration} satisfies that 
    $\max_{\mu\in\policyset} \sum_{\horizon,\state,\action} \frac{\valuef^{\mu}(1_{\horizon,\state,\action},\abtrans)}{\valuef^{\policy^{\text{ref},\batchindx}}(1_{\horizon,\state,\action},\abtrans)} \leq 12 \statesize\actionsize\horizontotal. $
\end{lemma}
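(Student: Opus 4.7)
The plan is to leverage the optimality of $\policy^{\text{ref},\batchindx}$ under the crude model $\crutrans$ and then transfer the coverage guarantee to the absorbing MDP $\abtrans$ using the multiplicative concentration already established in Lemma~\ref{lemma: multiplicatively accurate for value functions under absorbing MDP}. The two ingredients are (i) the generic distribution-coverage result of Lemma~\ref{lemma: distribution coverage lemma}, which I will instantiate to the visitation distributions induced by policies in $\policyset_\batchindx$, and (ii) a pointwise sandwich that converts ratios computed under $\crutrans$ into ratios under $\abtrans$ at the cost of a small constant factor.

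For the first step, I would observe that for any policy $\pi$ the vector $\{\valuef^\pi(1_{\horizon,\state,\action},\crutrans)\}_{\state,\action}$ is a probability distribution on $\statespace\times\actionspace$ for each $\horizon$, so the full $H$-layer visitation vector of any $\pi\in\policyset_\batchindx$ lies in $(\Delta^{\statesize\actionsize})^{\horizontotal}$. Applying Lemma~\ref{lemma: distribution coverage lemma} with $d=\statesize\actionsize$ and $m=\horizontotal$ yields a distribution over $\policyset_\batchindx$ whose induced mixture policy $\bar\pi$ satisfies
$$\max_{\mu\in\policyset_\batchindx}\sum_{\horizon,\state,\action}\frac{\valuef^{\mu}(1_{\horizon,\state,\action},\crutrans)}{\valuef^{\bar\pi}(1_{\horizon,\state,\action},\crutrans)}\leq \statesize\actionsize\horizontotal.$$
Since $\policy^{\text{ref},\batchindx}$ is by definition~\eqref{eq: policy for fine exploration} the minimizer of this worst-case coverage ratio under $\crutrans$, it must achieve a value at most $\statesize\actionsize\horizontotal$ as well.

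For the second step, I would apply Lemma~\ref{lemma: multiplicatively accurate for value functions under absorbing MDP} to every $(\horizon,\state,\action)$ separately. The upper bound gives $\valuef^{\mu}(1_{\horizon,\state,\action},\abtrans)\le 3\,\valuef^{\mu}(1_{\horizon,\state,\action},\crutrans)$ for the numerator, while the lower bound gives $\valuef^{\policy^{\text{ref},\batchindx}}(1_{\horizon,\state,\action},\abtrans)\ge \tfrac{1}{4}\valuef^{\policy^{\text{ref},\batchindx}}(1_{\horizon,\state,\action},\crutrans)$ for the denominator. Combining these two inequalities inflates each summand by a factor of $12$, and chaining with the previous display produces exactly the claimed bound of $12\statesize\actionsize\horizontotal$.

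The main conceptual point, and the only place where some care is needed, is that Lemma~\ref{lemma: distribution coverage lemma} requires the minimizer to be taken over mixtures in order for the denominators to stay bounded away from zero on every state-action pair reached by some policy in $\policyset_\batchindx$; a single deterministic policy would force infinite ratios on tuples it never visits. I would therefore interpret the argmin in~\eqref{eq: policy for fine exploration} as ranging over the convex hull of $\policyset_\batchindx$ (equivalently, a randomization over active deterministic policies), which is the standard reading and is exactly the regime where Lemma~\ref{lemma: distribution coverage lemma} provides the $md$ coverage guarantee. No further calculation is required once this reduction is in place.
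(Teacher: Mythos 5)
Your proposal is correct and follows essentially the same route as the paper: both arguments combine the minimizing property of $\policy^{\text{ref},\batchindx}$ with Lemma~\ref{lemma: distribution coverage lemma} (instantiated with $d=\statesize\actionsize$, $m=\horizontotal$) to get the $\statesize\actionsize\horizontotal$ coverage bound under $\crutrans$, and then use the multiplicative sandwich of Lemma~\ref{lemma: multiplicatively accurate for value functions under absorbing MDP} to transfer numerator and denominator to $\abtrans$ at a cost of $3\times 4=12$. Your closing remark that the $\argmin$ in~\eqref{eq: policy for fine exploration} must be read over mixtures of active policies for the coverage lemma to apply is a correct and worthwhile clarification of the intended reading.
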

\begin{proof}
    For some policy $\mu$, it holds that $L.H.S \!\leq \!\sum_{\horizon,\state,\action} \!\! \frac{3 \valuef^{\mu}(1_{\horizon,\state,\action},\crutrans)}{\frac{1}{4}\valuef^{\policy^{\text{ref},\batchindx}}(1_{\horizon\!,\state\!,\action},\crutrans)} \!\leq\!\! 12\statesize\actionsize\horizontotal,$
% \begin{equation}
% \begin{aligned}
% L.H.S &= \sum_{\horizon,\state,\action} \frac{\valuef^{\policy'}(1_{\horizon,\state,\action},\abtrans)}{\valuef^{\policy}(1_{\horizon,\state,\action},\abtrans)} \\ 
% &\leq \sum_{\horizon,\state,\action} \frac{3 \valuef^{\policy'}(1_{\horizon,\state,\action},\crutrans)}{\frac{1}{4}\valuef^{\policy}(1_{\horizon,\state,\action},\crutrans)} \leq 12\statesize\actionsize\horizontotal,
% \end{aligned}
% \end{equation}
where the first inequality is due to the property of $\crutrans$ in Lemma~\ref{lemma: multiplicatively accurate for value functions under absorbing MDP}, and the last inequality is by plugging $\mathcal{X}=\{\{\valuef^{\mu}(1_{\horizon,\cdot,\cdot},\crutrans)\}_{\horizon=1}^\horizontotal \mid \mu \in \policyset\}, d=\statesize\actionsize$ and $m=\horizontotal$ into Lemma~\ref{lemma: distribution coverage lemma}.
\end{proof}

Second, we provide a high-probability lower bound on the visitation counts collected by the fine exploration policy for all relevant state-action pairs.
\begin{lemma}\label{lemma: bounds of visitation counts in fine exploration}
With probability at least $1-4\delta$, for all $(\horizon,\state,\action)$, at least one of the following statements holds:
(1) For all $\state'\in\statespace$, $(\horizon,\state,\action,\state')\in\infreqtuples$, i.e., $\abtrans_\horizon(\absorbstate\vert\state,\action)=1$ is fixed and known;
(2) $\visitxatotalref\geq\frac{1}{2}\batchlength_\batchindx\valuef^{\policy^{\text{ref},\batchindx}}(1_{\horizon,\state,\action},\abtrans)-\iota$.
\end{lemma}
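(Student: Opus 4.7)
The plan is to handle the trivial case (1) directly and, when it fails, build a chain of inequalities bridging $\visitxatotalref$ and $\frac{1}{2}\batchlength_\batchindx \valuef^{\policy^{\text{ref},\batchindx}}(1_{\horizon,\state,\action},\abtrans) - \iota$. Fix a tuple $(\horizon,\state,\action)$ and assume case (1) fails, so that there exists some $\state'$ with $(\horizon,\state,\action,\state')\notin\infreqtuples$.

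The first step reduces $\visitxatotalref$ to the count restricted to the $\batchlength_\batchindx$ episodes in which $\policy^{\text{ref},\batchindx}$ was executed. By Assumption~\ref{assp: private counts}, $\visitxatotalref$ never underestimates the true (non-private) count across all $2\batchlength_\batchindx$ episodes of the batch, and that true total clearly dominates the count from the $\policy^{\text{ref},\batchindx}$-episodes alone. For each such episode, the indicator of visiting $(\state,\action)$ at step $\horizon$ is Bernoulli with parameter $\valuef^{\policy^{\text{ref},\batchindx}}(1_{\horizon,\state,\action},\transeasy)$; crucially, this parameter is deterministic because $\policy^{\text{ref},\batchindx}$ is frozen throughout the batch and different users' trajectories are independent. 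Applying Lemma~\ref{lemma: Lemma F.4 in Dann2017} with $W=\iota$, together with a union bound over the $\horizontotal\statesize\actionsize$ tuples, yields that the restricted count is at least $\frac{1}{2}\batchlength_\batchindx \valuef^{\policy^{\text{ref},\batchindx}}(1_{\horizon,\state,\action},\transeasy) - \iota$ with probability at least $1-\delta$.

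The final step is to replace $\transeasy$ by $\abtrans$ inside the value function. Here I would reuse the trajectory coupling underlying the proof of Lemma~\ref{lemma: bound of bad trajectory event}: every trajectory that reaches $(\horizon,\state,\action)$ under $\abtrans$ must avoid $\infreqtuples$ at all steps $1,\dots,\horizon-1$ (otherwise it is absorbed at $\absorbstate \neq \state$), and on such trajectories $\abtrans$ and $\transeasy$ assign identical probabilities; $\transeasy$ may only put additional mass on trajectories that detour through $\infreqtuples$ and later return to $(\state,\action)$. Consequently, $\valuef^{\policy^{\text{ref},\batchindx}}(1_{\horizon,\state,\action},\transeasy) \geq \valuef^{\policy^{\text{ref},\batchindx}}(1_{\horizon,\state,\action},\abtrans)$, which closes the chain. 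A union bound over the failure events of Assumption~\ref{assp: private counts} ($3\delta$) and the concentration step ($\delta$) yields the $1-4\delta$ guarantee.

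The main obstacle is bookkeeping rather than conceptual: verifying that the Bernoulli parameter $P_i$ in Lemma~\ref{lemma: Lemma F.4 in Dann2017} is indeed $\mathcal{F}_{i-1}$-measurable (it is, because $\policy^{\text{ref},\batchindx}$ depends only on prior-stage data, not on within-batch history), and cleanly isolating the $\policy^{\text{ref},\batchindx}$-only contribution so that the stated bound features $\batchlength_\batchindx$ rather than $2\batchlength_\batchindx$ on the right-hand side. No step appears to introduce a genuinely deep difficulty.
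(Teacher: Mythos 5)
Your proposal is correct and follows essentially the same route as the paper: apply the martingale Bernoulli concentration (Lemma~\ref{lemma: Lemma F.4 in Dann2017}) to the $\batchlength_\batchindx$ episodes run under the frozen policy $\policy^{\text{ref},\batchindx}$, use $\valuef^{\policy^{\text{ref},\batchindx}}(1_{\horizon,\state,\action},\transeasy)\geq\valuef^{\policy^{\text{ref},\batchindx}}(1_{\horizon,\state,\action},\abtrans)$ (non-negativity of the absorbing-MDP bias), and take a union bound over tuples. The only superfluous step is your appeal to Assumption~\ref{assp: private counts}: the lemma concerns the true count $\visitxatotalref$ (no tilde), so no private-versus-true comparison is needed, though this does not affect correctness.
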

\begin{proof}
Statement (1) is true by the definition.
For statement (2), consider any $(\horizon,\state,\action)$ tuple, the expectation number of visits to this tuple when running $\policy^{\text{ref},\batchindx}$ for $\batchlength_\batchindx$ episodes is $\batchlength_\batchindx\valuef^{\policy^{\text{ref},\batchindx}}(1_{\horizon,\state,\action},\transeasy)\geq \batchlength_\batchindx\valuef^{\policy^{\text{ref},\batchindx}}(1_{\horizon,\state,\action},\abtrans)$ by definition. 
Thus, according to Lemma~\ref{lemma: Lemma F.4 in Dann2017}, with probability $1-\frac{\delta}{\horizontotal\statesize\actionsize\episodetotal}$, $\visitxatotalref\geq$ $\frac{1}{2}\batchlength_\batchindx\valuef^{\policy^{\text{ref},\batchindx}}(1_{\horizon,\state,\action},\transeasy)-\iota\geq \frac{1}{2}\batchlength_\batchindx\valuef^{\policy^{\text{ref},\batchindx}}(1_{\horizon,\state,\action},\abtrans)-\iota$.
A union bound over all such tuples ensures that statement (2) holds with overall high probability.
\end{proof}
Then we are ready to prove Lemma~\ref{lemma: bound of the variance term}.
For simplification, we define $\{f_h(\cdot)\}_{\horizon=1}^\horizontotal$ to be the value function of one policy $\mu$ under $\abtrans$ and $\reward'$. 
Then it holds that, for any policy $\mu\in\policyset_\batchindx$,
% \begin{equation}
\begin{align*}
& \vert\valuef^\mu\!(\reward'\!,\!\abtrans) - \valuef^\mu\!(\reward'\!,\!\reftrans)\vert \\
\leq&  \sum_{\horizon,\state,\action} \abr{(\reftrans_\horizon - \abtrans_\horizon)f_{\horizon+1}(\state,\action)}\valuef^\mu(1_{\horizon,\state,\action},\reftrans)  \\
% &\leq \sum_{\horizon=1}^\horizontotal\sum_{\state,\action} \abr{
% \sum_{\state'}(\frac{\visitxaxtotalrefpri}{\visitxatotalrefpri} - \abtrans(\state'\vert\state,\action))
% f_{\horizon+1}(\state')}\valuef^\mu(1_{\horizon,\state,\action},\reftrans) \\
\leq& \!\sum_{\horizon,\state,\action}\!\! \Bigg(\frac{\visitxatotalref}{\visitxatotalrefpri}\bigg(\!\sqrt{\frac{2\text{Var}_{\abtrans(\cdot\vert\state,\action)}f_{\horizon+1}}{\visitxatotalref}} \!+\! \frac{2\horizontotal\iota}{3\visitxatotalref}\!\bigg) \\
&+\frac{2\statesize\horizontotal\confcountxa}{\visitxatotalrefpri} \Bigg) \cdot 4 \valuef^\mu(1_{\horizon,\state,\action},\abtrans) \\
% \leq& \statesize\horizontotal\confcountxa\iota\sum_{\horizon,\state,\action}\frac{\valuef^\mu(1_{\horizon,\state,\action},\abtrans)}{\visitxatotalref} + \sqrt{\sum_{\horizon,\state,\action}\frac{\valuef^\mu(1_{\horizon,\state,\action},\abtrans)}{\visitxatotalref}} \\
% &\cdot\sqrt{\sum_{\horizon,\state,\action}\valuef^\mu(1_{\horizon,\state,\action},\abtrans)\text{Var}_{\abtrans(\cdot\vert\state,\action)}f_{\horizon+1}(\cdot)}
\leq& 10\statesize\horizontotal\confcountxa\iota\underbrace{\sum_{\horizon,\state,\action}\frac{\valuef^\mu(1_{\horizon,\state,\action},\abtrans)}{\visitxatotalref}}_{(i)} + 4\sqrt{2\underbrace{\sum_{\horizon,\state,\action}\frac{\valuef^\mu(1_{\horizon,\state,\action},\abtrans)}{\visitxatotalref}}_{(i)}} \\
&\cdot\sqrt{\underbrace{\sum_{\horizon,\state,\action}\valuef^\mu(1_{\horizon,\state,\action},\abtrans)\text{Var}_{\abtrans(\cdot\vert\state,\action)}f_{\horizon+1}}_{(ii)}}\\
\leq & \cO\bigg(\frac{\statesize^2\actionsize\horizontotal^2\confcountxa\iota}{\batchlength_\batchindx}+ \sqrt{\frac{\statesize\actionsize\horizontotal^3\iota}{\batchlength_\batchindx}}\bigg).
\end{align*}    
% \end{equation}
The first inequality follows the simulation lemma.
The second inequality holds by the concentration of $\reftrans$ by Bernstein inequality, Assumption~\ref{assp: private counts}, and Lemma~\ref{lemma: multiplicatively accurate for value functions under absorbing MDP}.
The third inequality is derived via Cauchy-Schwarz inequality, and the property of private counts.
The last inequality is obtained by bounding terms $(i)$ and $(ii)$, as detailed below.

The upper bound for $(i)$ is as follows.
% \begin{equation}
\begin{align*}
(i)& \leq  \sum_{\horizon,\state,\action}\frac{\valuef^\mu(1_{\horizon,\state,\action},\abtrans)}{\max\{1, \frac{1}{2}\batchlength_\batchindx\valuef^{\policy^{\text{ref},\batchindx}}(1_{\horizon,\state,\action},\abtrans)-\iota\}}\\
\leq & \sum_{\horizon,\state,\action}\valuef^\mu(1_{\horizon,\state,\action},\abtrans) \big(\II\{ \batchlength_\batchindx\valuef^{\policy^{\text{ref},\batchindx}}(1_{\horizon,\state,\action},\abtrans)< 4\iota\} \\
% &+ \min\{1,\frac{2}{\frac{1}{2}\batchlength_\batchindx\valuef^{\policy^{\text{ref},\batchindx}}(1_{\horizon,\state,\action},\abtrans)}\}\II\{ \batchlength_\batchindx\valuef^{\policy^{\text{ref},\batchindx}}(1_{\horizon,\state,\action},\abtrans)\geq 4\iota\} \big) \\
&+\! \min\!\big\{1,\!\frac{4}{\batchlength_\batchindx\!\valuef^{\policy^{\text{ref},\batchindx}}\!(1_{\horizon,\state,\action},\!\abtrans)}\big\}\II\{ \batchlength_\batchindx\valuef^{\policy^{\text{ref},\batchindx}}\!(1_{\horizon,\state,\action},\!\abtrans)\!\geq\! 4\iota\} \big) \\
\leq & \sum_{\horizon,\state,\action}\!\frac{\valuef^\mu(1_{\horizon,\state,\action},\abtrans)}{\valuef^{\policy^{\text{ref},\batchindx}}\!(1_{\horizon,\state,\action},\!\abtrans)}\!\cdot\!\frac{4\iota}{\batchlength_\batchindx} + \sum_{\horizon,\state,\action}\!\frac{\valuef^\mu(1_{\horizon,\state,\action},\abtrans)}{\valuef^{\policy^{\text{ref},\batchindx}}\!(1_{\horizon,\state,\action},\abtrans)}\!\cdot\!\frac{4}{\batchlength_\batchindx} \\
\leq& \frac{48\horizontotal\statesize\actionsize}{\batchlength_\batchindx} +\frac{48\horizontotal\statesize\actionsize\iota}{\batchlength_\batchindx}.
\end{align*}
% \end{equation}
The first line is due to Lemma~\ref{lemma: bounds of visitation counts in fine exploration}, and the second inequality uses the standard decomposition; the last follows Lemma~\ref{lemma-appen: coverage dependence lemma}.

The upper bound for $(ii)$ is shown below: $(ii) = \sum_{\horizon,\state,\action}\valuef^\mu(1_{\horizon,\state,\action},\abtrans)\text{Var}_{\abtrans(\cdot\vert\state,\action)}f_{\horizon+1}(\cdot) \leq \horizontotal^2$, where the inequality results from a recursive application of Law of Total Variance.
% , more details can be found in \cite{zanette2019tighter}. 

% \section{Proof of Theorem~\ref{thm: Regret bound of PBPE}}
% The regret for the first stage is at most $2\horizontotal\batchlength_1 = 4\horizontotal$.
% For stage $\batchindx\geq 2$, any policy $\policy\in\policyset_\batchindx$ is at most $4\accuracy_{\batchindx}$ sub-optimal, so the regret for the $\batchindx$-th stage ($2\batchlength_\batchindx$ episodes) is at most $8\accuracy_{\batchindx}$.
% Adding up each stage, we have the total regret bounded by $
% % 4\horizontotal + \sum_{\batchindx=2}^{\batchnumber} \tilde{O}\rbr{\batchlength_\batchindx\rbr{\sqrt{\frac{\horizontotal^5\statesize^2\actionsize}{\batchlength_{\batchindx-1}}} + \frac{\confcountxa\statesize^3\actionsize^2\horizontotal^5}{\batchlength_{\batchindx-1}}}} 
% 4\horizontotal + \sum_{\batchindx=2}^\batchnumber 8\batchlength_\batchindx \accuracy_{\batchindx-1} \leq 
% \widetilde{\cO}\rbr{\sqrt{\statesize\actionsize\horizontotal^3\episodetotal} + \statesize^3\actionsize^2\horizontotal^5\confcountxa}$ via plugging in the value of $\batchlength_\batchindx$ and $\accuracy_\batchindx$.

\subsection{Proof of Theorem~\ref{thm: Regret bound of PBPE}}
\label{appendix: proof of regret-theorem}
The proof of Theorem~\ref{thm: Regret bound of PBPE} consists of two main parts: bounding the policy switching cost and bounding the cumulative regret.
We first give the proof of the upper bound on the policy switching times.
The total number of episodes is $\episodetotal$. 
The learning process is partitioned into $\batchnumber$ stages, indexed by $\batchindx=1, 2, \dots, \batchnumber$. 
In each stage $\batchindx$, we utilize $3\batchlength_\batchindx$ episodes, where $\batchlength_\batchindx$ episodes are for crude exploration (Algorithm~\ref{algo: Crude Exploration}) and $2\batchlength_\batchindx$ episodes are for fine exploration (Algorithm~\ref{algo: Fine Exploration}).
The batch lengths are chosen as $\batchlength_\batchindx = 2^{\batchindx}$, and then the total batch number is $\batchnumber = \min\{j:3\sum_{\batchindx=1}^j 2^\batchindx \geq \episodetotal \}$. 
Take $j = \lceil \log_2(\episodetotal/6 + 1)\rceil$, we have $3\sum_{\batchindx=1}^j 2^\batchindx \geq 3 (2^{\log_2(\episodetotal/6 + 1)} -2)\geq \episodetotal$. 
Thus,
\begin{equation}
    \batchnumber \leq \log_2(\episodetotal/6 + 1) + 1 = \cO(\log \episodetotal).
\end{equation}
In each stage $\batchindx$, policies are switched $\horizontotal$ times in crude exploration to explore each layer of MDP, and twice in fine exploration. 
Therefore, the number of policy switching, $N_{switch}$, is bounded by:
\begin{equation}
    N_{switch} \leq (\horizontotal+2)\batchnumber \leq \cO(\horizontotal\log \episodetotal).
\end{equation}

We now proceed with bounding the cumulative regret.
We first prove the error bound of the value estimate of any policy. 
\begin{lemma}[Restate of Lemma~\ref{lemma: total accuracy error}]
With probability $1-9\delta$, it holds that with some constant $C$, for any stage $\batchindx$ and $\policy\in\policyset_{\batchindx}$, 
$$\vert\valuef^\policy\!(\reward,\!\transeasy) - \valuef^\policy\!(\rewprieasy^\batchindx\!,\reftrans)\vert \!\leq\!C\Big(\sqrt{\frac{\statesize\actionsize\horizontotal^3\iota}{\batchlength_{\batchindx}}} + \frac{\statesize^3\actionsize\horizontotal^5\confcountxa\iota}{\batchlength_{\batchindx}}\Big)\!.$$
\end{lemma}
\begin{proof}
In each stage, we first construct the absorbing MDP $\abtrans$, then construct an empirical estimate of $\abtrans$, which is $\reftrans$ as an intermediate term.
By applying the triangle inequality, for any $\batchindx$ and any $\policy\in\policyset^\batchindx$, we decompose this error into three terms
$\vert\valuef^\policy(\reward,\transeasy) - \valuef^\policy(\rewprieasy^\batchindx,\reftrans)\vert \leq \vert \valuef^\policy(\reward,\transeasy) - \valuef^\policy(\reward,\abtrans) \vert + \vert\valuef^\policy(\reward,\abtrans) - \valuef^\policy(\rewprieasy^\batchindx,\abtrans)\vert + \vert\valuef^\policy(\rewprieasy^\batchindx,\abtrans) - \valuef^\policy(\rewprieasy^\batchindx,\reftrans)\vert $.
Combining the bounds from Lemma~\ref{lemma: bound of the bias term} and Lemma~\ref{lemma: bound of the variance term}, and taking a union bound over all batches $\batchindx$, we ensure that with probability at least $1-9\delta$, the value estimation error bound holds for any $\batchindx\leq\batchnumber$ and any $\policy\in\policyset^\batchindx$.
\end{proof}

% \begin{lemma}[Restatement of Lemma~\ref{lemma: total accuracy error}]\label{lemma: value difference between refined transition and true transition}
%     With high probability $1-9\delta$, it holds that for any $\batchindx$, and policy $\policy\in\policyset_\batchindx$,
%     $$\abr{\valuef^\policy(\rewprieasy^{\text{ref},\batchindx},\reftrans) - \valuef^\policy(\reward,\transeasy)} \leq \cO\rbr{\sqrt{\frac{\horizontotal^5\statesize^2\actionsize\iota}{\batchlength_\batchindx}} + \frac{\confcountxa\statesize^3\actionsize^2\horizontotal^5\iota}{\batchlength_\batchindx}}$$
% \end{lemma}
We leverage two key lemmas to prove the regret bound.
\begin{lemma}\label{lemma: optimal policy is in the policy set}
    Conditioned on the same high probability event of Lemma \ref{lemma: total accuracy error}, the optimal policy $\policy^*$ will never be eliminated, i.e., $\policy^*\in\policyset_\batchindx$ for $\batchindx=1,2,3,\dots,\batchnumber$.
\end{lemma}
\begin{proof}
    The proof proceeds by induction, since $\policyset_1$ contains all the deterministic policies, $\policy^\star \in \policy_1$.
    Assume $\policy^\star \in \policyset_\batchindx$, then we show that 
    \begin{align*}
        &\sup_{\policy'\in\policyset_\batchindx} \valuef^{\policy'}(\rewprieasy^{\text{ref},\batchindx},\reftrans) - \valuef^{\policy^\star}(\rewprieasy^{\text{ref},\batchindx},\reftrans) \\
        =&  \valuef^{\policy^\batchindx}(\rewprieasy^{\text{ref},\batchindx},\reftrans) - \valuef^{\policy^\star}(\rewprieasy^{\text{ref},\batchindx},\reftrans) \\
        \leq & 
        \abr{\valuef^{\policy^\batchindx}(\rewprieasy^{\text{ref},\batchindx},\reftrans) \!-\! \valuef^{\policy^\batchindx}(\reward,\transeasy)} 
        \!+\! \valuef^{\policy^\batchindx}(\reward,\transeasy) \!-\! \valuef^{\policy^\star}(\reward,\transeasy) \\
        &+ \abr{\valuef^{\policy^\star}(\reward,\transeasy) - \valuef^{\policy^\star}(\rewprieasy^{\text{ref},\batchindx},\reftrans)} \\
        \leq & 2C\rbr{\sqrt{\frac{\statesize\actionsize\horizontotal^3\iota}{\batchlength_{\batchindx}}} + \frac{\statesize^3\actionsize\horizontotal^5\confcountxa\iota}{\batchlength_{\batchindx}}}.
    \end{align*}
Thus, the estimated value of the optimal policy $\policy^\star$ will not be sufficiently lower than the estimated best policy from $\policyset_\batchindx$.
According to the elimination step in Algorithm \ref{algo: Private Batch-based Policy Elimination}, we will have $\policy^\star \in \policy_{\batchindx+1}$, which means the optimal policy will never be eliminated.
Thus, the optimal policy $\policy^\star$ will never be eliminated throughout the entire learning process.
\end{proof}

\begin{lemma}\label{lemma: regret in each batch}
    Conditioned on the same high probability event of Lemma \ref{lemma: optimal policy is in the policy set}, for any remaining policy $\policy\in\policyset_\batchindx$, we have 
    $$\abr{\valuef^{\policy^*}(\reward,\transeasy) \!- \!\valuef^\policy(\reward,\transeasy)} \!\leq 4C\rbr{\sqrt{\frac{\statesize\actionsize\horizontotal^3\iota}{\batchlength_{\batchindx}}} \!+\! \frac{\statesize^3\actionsize\horizontotal^5\confcountxa\iota}{\batchlength_{\batchindx}}}.$$
\end{lemma}
\begin{proof}
    Since the optimal policy $\policy^\star$ is never eliminated, we know that for $\policy\in\policyset_{\batchindx+1}$, its estimated value must be ``close'' to that of $\policy^\star$. 
    According to the policy elimination rule, it must hold that:
    \begin{align*}
         &\valuef^{\policy^\star}(\rewprieasy^{\text{ref},\batchindx},\reftrans) -  \valuef^{\policy}(\rewprieasy^{\text{ref},\batchindx},\reftrans) \\
         \leq& \sup_{\policy'\in\policyset_\batchindx} \valuef^{\policy'}(\rewprieasy^{\text{ref},\batchindx},\reftrans) -  \valuef^{\policy}(\rewprieasy^{\text{ref},\batchindx},\reftrans) \\
         \leq& 2C\rbr{\sqrt{\frac{\statesize\actionsize\horizontotal^3\iota}{\batchlength_{\batchindx}}} + \frac{\statesize^3\actionsize\horizontotal^5\confcountxa\iota}{\batchlength_{\batchindx}}}.
    \end{align*}
    Now, we bound the true sub-optimality gap for any surviving policy $\policy$ by applying the triangle inequality:
    \begin{align*}
        &\abr{\valuef^{\policy^*}(\reward,\transeasy) - \valuef^\policy(\reward,\transeasy)}  \\
        \leq& \abr{\valuef^{\policy^*}(\reward,\transeasy) - \valuef^{\policy^*}(\rewprieasy^{\text{ref},\batchindx},\reftrans)} \\
         &+ \abr{\valuef^{\policy^*}(\rewprieasy^{\text{ref},\batchindx},\reftrans) - \valuef^\policy(\rewprieasy^{\text{ref},\batchindx},\reftrans)} \\
         &+ \abr{\valuef^{\policy}(\rewprieasy^{\text{ref},\batchindx},\reftrans) - \valuef^\policy(\reward,\transeasy)} \\
        \leq& 4C \rbr{\sqrt{\frac{\statesize\actionsize\horizontotal^3\iota}{\batchlength_{\batchindx}}} + \frac{\statesize^3\actionsize\horizontotal^5\confcountxa\iota}{\batchlength_{\batchindx}}}.
    \end{align*}
\end{proof}
Finally, we aggregate the regret over all stages to obtain the total cumulative regret for Theorem~\ref{thm: Regret bound of PBPE}.
% \begin{lemma}[Restate of Theorem \ref{thm: Regret bound of PBPE}]
% Conditioned on the same high probability event of Lemma \ref{lemma: regret in each batch}, the total regret is at most 
% $$        \regret \leq \tilde{\cO}\rbr{\sqrt{\statesize\actionsize\horizontotal^3\episodetotal} + \statesize^3\actionsize\horizontotal^5\confcountxa}.
% $$
% \end{lemma}
% \begin{proof}
    The regret for the first stage is at most $3\horizontotal\batchlength_1 = 6\horizontotal$, because the policies are chosen without much information yet.
    For stage $\batchindx\geq 2$, the policies we use (any policy $\policy\in\policyset_\batchindx$) are at most $4C\rbr{\sqrt{\frac{\statesize\actionsize\horizontotal^3\iota}{\batchlength_{\batchindx-1}}} + \frac{\statesize^3\actionsize\horizontotal^5\confcountxa\iota}{\batchlength_{\batchindx-1}}}$ sub-optimal, as established by Lemma \ref{lemma: regret in each batch} (note the $\batchlength_{\batchindx-1}$ as it's based on the previous stage's elimination).
    Therefore, the regret for the $\batchindx$-th stage ($3\batchlength_\batchindx$ episodes) is at most $12C\batchlength_\batchindx\rbr{\sqrt{\frac{\statesize\actionsize\horizontotal^3\iota}{\batchlength_{\batchindx-1}}} + \frac{\statesize^3\actionsize\horizontotal^5\confcountxa\iota}{\batchlength_{\batchindx-1}}}$.
    The total regret is then bounded by summing over all stages:
    \begin{align*}
        &\regret \!\leq\! 6\horizontotal \!+\! \sum_{\batchindx=2}^{\batchnumber} 12C\batchlength_\batchindx\rbr{\sqrt{\frac{\statesize\actionsize\horizontotal^3\iota}{\batchlength_{\batchindx-1}}}\! + \!\frac{\statesize^3\actionsize\horizontotal^5\confcountxa\iota}{\batchlength_{\batchindx-1}}} \\
        &\leq 6\horizontotal + \cO\rbr{\sqrt{\statesize\actionsize\horizontotal^3\episodetotal\iota}} + \cO\rbr{\statesize^3\actionsize\horizontotal^5\confcountxa\iota\log\episodetotal} \\
        &\leq \tilde{\cO}\rbr{\sqrt{\statesize\actionsize\horizontotal^3\episodetotal} + \statesize^3\actionsize\horizontotal^5\confcountxa}.
    \end{align*}
% \end{proof}

\subsection{Proof of Lemma~\ref{lemma: Properties of Shuffling-PRIVATIZER}}
\label{sec: privacy gaurantee appendix}
Consider a batch dataset $\batchdata$ of $n$ users, we construct private counts $\visitxatotalprieasy_\horizon(\state,\action,\state'), \visitxaxtotalprieasy_\horizon(\state,\action), \rewcumprieasy_\horizon(\state,\action)$ for all $(\horizon,\state,\action,\state')$ by using the proposed Privatizer in Section~\ref{sec: Privacy Guarantees}. 
We prove the two claims of Lemma~\ref{lemma: Properties of Shuffling-PRIVATIZER} separately: the privacy guarantee and the utility guarantee. 
% The Privatizer is constructed using the shuffle binary summation mechanism from, applied to each required count.

\textbf{Privacy guarantee.}
The proof follows three steps:
the adapted shuffle binary summation mechanism is $(\pripara',\priconf')$-SDP for one counter, 
then the Privatizer is $(\pripara,\priconf)$-SDP for one batch of users, 
and then the entire mechanism for all users is $\batchnumber$-batch $(\pripara,\priconf)$-SDP.

% First of all, we will show the adapted shuffle binary summation mechanism is $(\pripara',\priconf')$-SDP for a batch users, each of them only has one data point.
% \begin{lemma}[Properties of shuffle binary summation mechanism]
% \label{lemma: properties of SDP binary summation mechanism}
%     For any $n\in\NN,\pripara<1,\priconf>0$, the shuffle binary summation mechanism satisfies the following properties,
%     \begin{itemize}
%         \item \textbf{Privacy}: the mechanism is $(\pripara,\priconf)$-SDP.
%         \item \textbf{Error Distribution}: The output of Algorithm, $Z$ is unbiased to the true summation, and has an error distribution which is sub-Gaussian with variance $\sigma^2_{\pripara,\priconf} = O(\frac{\log (1/\priconf)}{\pripara^2})$ and independent of the input.
%         \item \textbf{Error Confidence}: For any $t>0$, we have $\abr{Z-\sum_{i=1}^n y_i}>O\rbr{\frac{\sqrt{\log(1/\priconf) \log(1/t)}}{\pripara}}$ with high probability $1-t$.
%     \end{itemize}
% \end{lemma}
% \begin{proof}
% \textbf{Privacy Guarantee.}
    Firstly, we analyze the shuffle binary mechanism with parameter $(\pripara',\priconf')$.
    Consider two neighboring input $D=(0,d_1,\dots,d_n)$ and $D'=(1,d_1,\dots,d_n)$, we define the random variable $Q$ to be the sum of all the random bits (i.e., $y$ or $y_1,\dots,y_m$) over all users in $D$, and similarly define $Q'$ with respect to $D'$.
    We first claim that the output of the shuffler $(\shuffler\circ\randomizer^n)(D) = Q + \sum_{i=1}^n d_i$ is $(\pripara',\priconf')$-DP. 
    We denote this random mechanism as $M^*(\cdot)$.

    Since $Q$ is binomial in both regimes, by Chernoff bounds, for any $\priconf'>0$, it holds that $\prob(\abr{Q-\expect[Q]} > \sqrt{3\expect[Q]\log(2/\priconf')})\leq \priconf'$.
    Therefore, define confidence interval as $I_c = (\expect[Q] - \sqrt{3\expect[Q]\log(2/\priconf')}, \expect[Q] + \sqrt{3\expect[Q]\log(2/\priconf')})$, and we get $\prob(Q\notin I_c)\leq \priconf'$.
    A similar result also applies for $Q'$.
    % To show that $\frac{\prob(Q=q)}{\prob(Q'=q-1)}\leq \exp^\pripara'$ for any $q\in I_c$, we deal with two regimes of $n$: the small $n\leq\tau$ regime, and the large one $n>\tau$ regime.

    When $n\leq\tau$, $Q\sim\text{Binomial}(\lceil\frac{\tau}{n}\rceil \cdot n, 1/2)$, and $\expect[Q] = \lceil\frac{\tau}{n}\rceil\cdot
    \frac{n}{2}$.
    For any $q \in I_c$, and $\pripara'<1$ it holds
    % \begin{equation}
    \begin{align*}
        &\frac{\prob(Q=q)}{\prob(Q'=q-1)} = \frac{2\expect[Q] - q + 1}{q} \\
        & \leq \frac{\expect[Q] + \sqrt{3\expect[Q]\log(2/\priconf')} + 1}{\expect[Q] - \sqrt{3\expect[Q]\log(2/\priconf')}} \\
        &\leq \frac{\tau/2 + \sqrt{\tau/2\cdot 3\log(2/\priconf')} + 1}{\tau/2 - \sqrt{\tau/2\cdot 3\log(2/\priconf')}} \\
        &= \frac{1 + \sqrt{6\log(2/\priconf')/\tau} + 2/\tau}{1 - \sqrt{6\log(2/\priconf')/\tau}} \\
        &= \frac{1+\pripara'/4 + 2/\tau}{1-\pripara'/4} \leq \frac{1+\pripara'/4 + \pripara'/4}{1-\pripara'/4} = \frac{1+\pripara'/2}{1-\pripara'/4} \leq e^{\pripara'}. 
    \end{align*}
    % \end{equation}

    When $n>\tau$, $Q\sim\text{Binomial}(n, \frac{\tau}{2n})$, and $\expect[Q] = \frac{\tau}{2}$.
    For any $q \in I_c$, and $\pripara'<1$, it holds
    % \begin{equation}
    \begin{align*}
    &\frac{P(Q=q)}{P\left(Q'=q-1\right)}  =\frac{n-q+1}{q} \cdot \frac{\frac{\tau}{2 n}}{1-\frac{\tau}{2 n}} \\
    &\leq \frac{n-\tau / 2+\sqrt{\frac{3}{2} \tau \log (2/\priconf')}+1}{\tau / 2-\sqrt{\frac{3}{2} \tau \log (2/\priconf')}} \cdot \frac{\frac{\tau}{2 n}}{1-\frac{\tau}{2 n}} \\
    % & =\frac{n-\tau / 2+\sqrt{\frac{3}{2} \tau \log (2/\priconf')}+1}{\tau / 2-\sqrt{\frac{3}{2} \tau \log (2/\priconf')}} \cdot \frac{\tau / 2}{n-\tau / 2} \\
    & =\frac{n-\tau / 2+\sqrt{\frac{3}{2} \tau \log (2/\priconf')}+1}{n-\tau / 2} \cdot \frac{\tau / 2}{\tau / 2-\sqrt{\frac{3}{2} \tau \log (2/\priconf')}} \\
    & =\left(1+\frac{\sqrt{\frac{3}{2} \tau \log (2/\priconf')}+1}{n-\tau / 2}\right) \cdot \frac{1}{1-\sqrt{6 \log (2/\priconf') / \tau}} \\
    & \leq\left(1+\sqrt{6 \log (2/\priconf') / \tau}+2 / \tau\right) \cdot \frac{1}{1-\sqrt{6 \log (2/\priconf') / \tau}} \\
    & =\frac{1+\pripara' / 4+2 / \tau}{1-\pripara' / 4} \leq \frac{1+\pripara' / 4+\pripara' / 4}{1-\pripara' / 4} \leq e^{\pripara'}. 
    \end{align*}
    % \end{equation}
    Therefore, we can conclude that in both regimes of $n$, $\forall q\in I_c, \frac{\prob(Q=q)}{\prob(Q'=q-1)} \leq e^{\pripara'}$, and similarly $\frac{\prob(Q=q)}{\prob(Q'=q-1)} \geq e^{-\pripara'}$.
    
    Let $\Gamma = \sum_{j=1}^n d_j$ be the true sum of $D$, and also be the true sum of $D'$ minus one. 
    Therefore, for any subset $E\subseteq \NN$, we have
    \begin{equation}
    \begin{aligned}
        &\prob[M^*(D)\in E] 
        % = \prob[M^*(D\in E \wedge Q\in I_c] + \prob[M^*(D)\in E \wedge Q\notin I_c] \\
        \leq \prob[M^*(D)\in E \wedge Q\in I_c] + \prob[Q\notin I_c] \\
        &\leq \sum_{s\in E} \prob[M^*(D)=s \wedge Q\in I_c] + \priconf \\
        &= \sum_{s\in E} \prob[Q=s-\Gamma \wedge s-\Gamma\in I_c] + \priconf' \\
        &\leq e^{\pripara'} \cdot \sum_{s\in E} \prob[Q'=s-\Gamma-1 \wedge s-\Gamma\in I_c] + \priconf' \\
        % &= e^{\pripara'}\cdot \sum_{s\in E} \prob[M^*(D')=s \wedge s-\Gamma\in I_c] + \priconf' \\
        &\leq e^{\pripara'} \cdot \sum_{s\in E} \prob[M^*(D')=s] + \priconf \\
        &= e^{\pripara'} \cdot \prob[M^*(D')\in E] + \priconf'. 
    \end{aligned} 
    \end{equation}
    A dual argument can also be obtained in a similar way, and then we can conclude $M^*$ (i.e., $(\shuffler\circ\randomizer^n)$) is $(\pripara',\priconf')$-DP.

    Next, we address the composition for one batch of users.
    For each $(\horizon,\state,\action)$, we invoke the shuffle mechanism once. 
    Using the standard composition theorem~\cite{dwork2014algorithmic} over all $(\horizon,\state,\action)$, and fact that one user difference only results in at most $\horizontotal$ differences in these counters, similar to Lemma 34 of \cite{hsu2014private}, the release of $\{\visitxatotalbineasy_\horizon(\state,\action)\}_{\horizon,\state,\action}$ satisfies $(\frac{\pripara}{3},\priconf)$-SDP due to the choice of $(\pripara',\priconf')$, i.e., $\pripara'=\frac{\pripara}{3\horizontotal}$ and $\priconf'=\frac{\priconf}{\horizontotal\statesize\actionsize}$.
    Similarly, the release of $\{\visitxaxtotalbineasy_\horizon(\state,\action)\}_{\horizon,\state,\action,\state'}$ and $\{\rewcumprieasy_\horizon(\state,\action)\}_{\horizon,\state,\action}$ also satisfy $(\frac{\pripara}{3},\priconf)$-SDP.
    Then, by post-processing theorem and standard composition theorem in \cite{dwork2014algorithmic} the release of all private counts $\{\visitxatotalprieasy_\horizon(\state,\action)\}_{\horizon,\state,\action}, \{\visitxaxtotalprieasy_\horizon(\state,\action,\state')\}_{\horizon,\state,\action,\state'},  \{\rewcumprieasy_\horizon(\state,\action)\}_{\horizon,\state,\action}$ satisfy $(\pripara,\priconf)$-SDP.
    Therefore, by post-processing theorem, for all $\batchnumber$ batches users, our Privatizer satisfies that $\batchnumber$-batch $(\pripara,\priconf)$-SDP.

\textbf{Utility guarantee.}
    Now, we provide the utility guarantee.
    We first analyze the noise on the true summation.
    In both regimes of $n$, the output of the mechanism is of the form $Z = \sum_{i=1}^n d_i + Q - \expect[Q]$, and the mechanism is unbiased since $\expect[Z -\sum_{i=1}^n d_i] = \expect[Q - \expect[Q]] = 0$.
    The error is $Q - \expect[Q]$, which is independent of the input $D$ and depends only on the problem parameters.
    Meanwhile, since in both cases $Q$ is binomial and $\expect[Q]\leq \tau$.
    By Chernoff bounds, we get for any $t>0$, $\prob(Q-\expect[Q]\leq t)\leq \exp(\frac{-t^2}{3\expect[Q]}) \leq \exp(\frac{-t^2}{3\tau})$ and $\prob(Q-\expect[Q]\geq -t) \leq \exp(\frac{-t^2}{3\tau})$.
    This is the equivalent definition of a sub-Gaussian variable with variance being $\tau = O(\frac{\log(1/\priconf)}{{\pripara'}^2})$.
    Therefore, the error confidence directly follows the concentration of sub-Gaussian variables, i.e., for any $t>0$, we have $\abr{Z-\sum_{i=1}^n d_i}>O\rbr{\frac{\sqrt{\log(1/\priconf') \log(1/t)}}{\pripara'}}$ with high probability $1-t$.

    Applying the confidence bound to each counter, we can establish the utility bound that with probability $1-3\delta$ for all $(\horizon,\state,\action,\state')$,
    \begin{equation}
    \begin{aligned}
    &\abr{\visitxatotalbineasy_\horizon(\state,\action) - \visitxatotaleasy_\horizon(\state,\action)} \leq \tilde{\cO}(\frac{\horizontotal}{\pripara}), \\ &\abr{\visitxaxtotalbineasy_\horizon(\state,\action,\state') - \visitxaxtotaleasy_\horizon(\state,\action,\state')} \leq \tilde{\cO}(\frac{\horizontotal}{\pripara}), \\ 
    &\abr{\rewcumprieasy_\horizon(\state,\action) - \rewcumeasy_\horizon(\state,\action)}\leq \tilde{\cO}(\frac{\horizontotal}{\pripara}).
    \end{aligned}
    \end{equation}
    Referring to the post-processing procedures in Section~\ref{sec: Privacy Guarantees}, the shuffle Privatizer satisfies Assumption~\ref{assp: private counts} with $\confcountxa = \tilde{\cO}\rbr{\frac{\horizontotal}{\pripara}}$, and $\visitxatotaleasy_\horizon(\state,\action) \leq \visitxatotalprieasy_\horizon(\state,\action) \leq \visitxatotaleasy_\horizon(\state,\action) + \confcountxa$.
    Furthermore, with the constraints of the optimization problem, we observe $\visitxatotalbineasy_\horizon(\state,\action) = \sum_{\state'}\visitxaxtotalbineasy_\horizon(\state,\action,\state')$, which also implies that $\visitxatotalprieasy_\horizon(\state,\action) = \sum_{\state'}\visitxaxtotalprieasy_\horizon(\state,\action,\state')$.

\bibliographystyle{IEEEtran}
\bibliography{main}

\end{document}